\documentclass{article}

\usepackage{microtype}
\usepackage{graphicx}
\usepackage{subcaption}
\usepackage{booktabs} 
\usepackage{enumitem}
\usepackage{xurl}
\usepackage{hyperref}

 \usepackage[preprint]{icml2026}

\usepackage{amsmath}
\usepackage{amssymb}
\usepackage{mathtools}
\usepackage{amsthm}

\usepackage[capitalize,noabbrev]{cleveref}

\theoremstyle{plain}
\newtheorem{theorem}{Theorem}[section]
\newtheorem{proposition}[theorem]{Proposition}
\newtheorem{lemma}[theorem]{Lemma}

\theoremstyle{definition}
\newtheorem{definition}[theorem]{Definition}

\theoremstyle{remark}

\usepackage[textsize=tiny]{todonotes}

\icmltitlerunning{The Implicit Bias of Depth: From Neural Collapse to Softmax Codes}

\begin{document}

\twocolumn[
  \icmltitle{The Implicit Bias of Depth: From Neural Collapse to Softmax Codes}



  \icmlsetsymbol{equal}{*}

  \begin{icmlauthorlist}
    \icmlauthor{Connall Garrod}{ox}
    \icmlauthor{Jonathan P. Keating}{ox}
    \icmlauthor{Christos Thrampoulidis}{ubc}
  \end{icmlauthorlist}

  \icmlaffiliation{ox}{Mathematical Institute, University of Oxford}
  \icmlaffiliation{ubc}{Department of Electrical and Computer Engineering, University of British Columbia}

  \icmlcorrespondingauthor{Connall Garrod}{connall.garrod@maths.ox.ac.uk}
  \icmlcorrespondingauthor{Christos Thrampoulidis}{cthrampo@ece.ubc.ca}

  \icmlkeywords{Neural Collapse, Gradient Descent, Deep Learning, Low-Rank Bias.}

  \vskip 0.3in
]



\printAffiliationsAndNotice{}  

\begin{abstract}
Neural collapse (NC) describes the structured geometry that emerges in the features and weights of trained classifiers. Recent theory suggests NC can be suboptimal in deep architectures, attributing this to an explicit low-rank bias from $L_2$ regularization. We study the deep unconstrained feature model (UFM)—equivalent to a deep linear network with orthogonal inputs—trained \emph{without} regularization, to isolate how gradient descent and depth alone shape NC. We show that depth induces an implicit low-rank bias: low-rank matrices propagate norm more efficiently through successive multiplications, promoting low-rank alternatives to NC. These alternatives, we argue, correspond to softmax codes: max-margin solutions previously found in width-bottlenecked networks. Analyzing training dynamics under spectral initialization, we identify an early-time repulsion among singular values that drives low-rank emergence, and characterize how depth shrinks NC's basin of attraction. Finally, we show that some effects act in the opposite direction: for randomly initialized networks, increasing width biases training toward higher-rank solutions. Our results provide the first asymptotic and dynamic characterization of implicit bias in deep UFMs trained with unregularized multiclass cross-entropy.
\end{abstract}

\section{Introduction}




Deep neural-network classifiers, when trained beyond interpolation, converge to a highly structured geometric arrangement termed neural collapse \citep{Papyan2020Prevalence}: class means form a simplex equiangular tight frame (ETF), and both features and classifier weights align with this frame. Interestingly, this configuration arises without any explicit regularization or architectural constraint. \emph{What biases inherent to gradient-descent dynamics or the architecture give rise to such geometric regularities?}

A substantial body of work approaches this question using the unconstrained features model (UFM) \citep{Mixon2020,fang2021exploring}, which treats final-layer features as free optimization variables jointly optimized with the classifier weights, abstracting away the preceding layers. The majority of these studies analyze the UFM with $L_2$ regularization on both features and weights, primarily under mean squared error (MSE) loss \citep{Mixon2020,Han2022,zhou22c,Tirer2022,Dang2023NeuralCI,tirer2023perturbation}, and less often under the more practically relevant cross-entropy (CE) loss \citep{zhu2021geometric,li2023neural,Zhao2024ImplicitGO}.
In this framework, regularization becomes essential for NC to emerge. However, when the UFM is extended to multiple layers—yielding the \emph{deep UFM}—NC and its generalization, deep neural collapse (DNC), are no longer globally optimal under $L_2$ regularization \citep{sukenik2024neural,Garrod2024ThePO}: $L_2$ regularization induces a low-rank bias that suppresses NC in deep architectures.

Analyses based on $L_2$ regularization face key limitations. First, it is not clear whether the regularized model informs us about structures that emerge in unregularized training \citep{thrampoulidis2022imbalance}. Second, in the deep UFM, despite NC being suboptimal at finite regularization, large regularization values are empirically needed to prevent it from emerging \citep{sukenik2024neural,Garrod2024ThePO}. Third, and perhaps more concerning, the way $L_2$ regularization is applied in the UFM—directly on the features—differs markedly from standard training, where regularization acts on network weights.

These concerns are mitigated when examining UFMs without explicit regularization. \citet{Garrod2025DiagonalizingTS} shows that the UFM trained with CE loss \emph{without} explicit regularization still converges to NC, demonstrating that NC arises from the implicit bias of CE training alone. Yet, their analysis is restricted to a single hidden layer.
It remains open how depth impacts the implicit bias: \emph{How do the inductive biases induced by depth affect convergence to NC, both asymptotically and at finite training time?  If depth favors alternatives to NC, what geometric structures replace it, and why is NC nonetheless pervasive in practical deep networks?}

\subsection{Contributions}
\vspace{-0.05in}

We provide the first comprehensive characterization of how implicit biases from depth, optimization dynamics, and initialization interact to determine whether NC emerges. Our setting is the deep UFM trained with CE loss \emph{without} explicit regularization—a model where the loss minimum is achieved only at infinite parameter norm, and many geometric configurations attain it. The structure that emerges thus reflects purely the implicit biases of optimization and model depth, which we analyze from two complementary perspectives:

\textbf{Asymptotics: depth induces low-rank bias.} Asymptotically in time, gradient-flow (GF) dynamics reduce to a non-convex max-margin problem \citep{lyu2019gradient}. By analyzing its landscape, we show that, unlike the shallow case, the deep landscape admits multiple stable local minima, among which NC is strictly suboptimal. We identify the mechanism behind this non-benign landscape: low-rank structures propagate norms more effectively through matrix multiplications, achieving larger logits for a fixed parameter norm. In the large-depth limit, we prove the global optimum corresponds to the $d=2$ softmax code \citep{jiang2023generalized}, connecting depth-induced bias to explicit width constraints. Empirical evidence suggests this connection extends to higher-rank softmax codes at finite depths.

\textbf{Dynamics: depth reshapes the trajectory.} Building on the Hadamard framework of \citet{Garrod2025DiagonalizingTS}, we analyze GF dynamics at finite time. Unlike the shallow UFM, where NC is the unique stable direction throughout the trajectory, in the multilayer setting we prove this no longer holds: NC becomes unstable near the origin, with the unstable region growing with the number of layers, while alternative low-rank directions become stable. We show the mechanism is a ``rich-get-richer'' effect in which larger singular values grow faster, promoting low-rank structures before exiting the linearized regime. We further show that the KL divergence to NC, a Lyapunov function for the shallow UFM, can diverge under depth. Finally, we prove that random initialization induces concentration-of-measure effects that, for sufficiently large network widths, drive early dynamics toward NC. This provides a justification for why low-rank structures—though theoretically favored by depth—are not a certainty in practice.

We validate our results experimentally on deep UFMs and neural networks. Beyond NC, the deep UFM, which is equivalent to a linear network with orthogonal inputs, is the minimal model in which depth-induced implicit bias can be isolated from explicit regularization. Our analysis of this canonical setting thus informs the broader question of how depth shapes implicit bias in deep learning.

\vspace{-0.1in}
\subsection{Related Work}
Here, we review the most closely related works. See App. \ref{sec:related work full} for a full review and additional references.

Deep NC has been explored empirically by \citet{He2022,Parker2023NeuralCI, rangamani23}, but theoretical efforts have been mainly based on deep UFMs trained with \emph{MSE loss} \citep{Tirer2022,Dang2023NeuralCI,Sukenik2023, garrod2024unifying,sukenik2024neural}. Again under MSE loss, a parallel direction examines implicit regularization in more general linear networks, studying the interplay between GD and depth \citep{arora2018convergence,pmlr-v80-arora18a, bah2022learning,Yaras2023TheLO, Tu_mixed_dynam} and  matrix factorization dynamics \citep{Gunasekar2017ImplicitRI,NEURIPS2019_c0c783b5,li2020towards, razin2020implicit}.
The more practically relevant CE setting is significantly less studied due to its added complexity. Only recently, \citet{Garrod2024ThePO} showed that \textit{explicit} $L_2$ regularization induces a low-rank bias in DNC. {Ours is the first direct analysis of the deep UFM with CE loss and without explicit regularization}, focusing instead on the implicit biases induced by GF dynamics.

Within the implicit-bias literature, the most closely related study is \citet{ji2018gradient}, which analyzed GF dynamics in deep linear networks under binary logistic loss. While the deep UFM is a special case of a deep linear network with orthogonal inputs, our results are significantly more general as they: \emph{(i)}  directly apply to \emph{multiclass} settings; \emph{(ii)} are conclusive about GF convergence to \emph{explicit geometric configurations} rather than implicit non-convex characterizations via KKT points \citep{lyu2019gradient}; \emph{(iii)} go beyond asymptotic convergence to study the \emph{full dynamics}.
 GF dynamics for CE loss are significantly more challenging than for MSE: even in  shallow UFM, it was only recently that \citet{Garrod2025DiagonalizingTS} introduced a spectral-initialization-based framework to study these dynamics, extending prior work by \citet{Saxe2013ExactST} for MSE. We generalize this framework to deep models.

Prior work has shown that depth can bias optimization toward low-rank solutions, especially in matrix factorization \citep{Gunasekar2017ImplicitRI,NEURIPS2019_c0c783b5,li2020towards,Chou2020GradientDF}, with related evidence in nonlinear and homogeneous networks \citep{Huh2021TheLS,jacot2023implicit,timor2023implicit}. Our work differs by focusing on a model of CE-trained \textit{overparameterized} classifiers, showing that depth in this setting induces low-rank representations, reduces normalized margins, and drives a transition from neural collapse to softmax codes.

\vspace{-0.1in}
\section{Background} \label{sec:background}
\noindent\textbf{(Deep) UFM.} We study gradient flow (GF) on the deep unconstrained features model (UFM) with CE loss for $K$-class classification with $n$ samples per class. The model parameters are matrices $W_L \in \mathbb{R}^{K \times d}$, $W_{L-1}, \ldots, W_1 \in \mathbb{R}^{d \times d}$, and $H_1 \in \mathbb{R}^{d \times nK}$, where $d$ is the embedding dimension. The columns of $H_1$ are \emph{trainable} feature vectors $h_{ic} \in \mathbb{R}^d$ for example $i$ of class $c$, ordered by class.  The loss is
\begin{align} \label{eq:general_deep_UFM}
    \mathcal{L}(Z) = -\sum_{c=1}^K \sum_{i=1}^n \log \Bigg( \frac{\exp((z_{ic})_c)}{\sum_{c'=1}^K \exp((z_{ic})_{c'})} \Bigg),
\end{align}
where $z_{ic} = W_L W_{L-1} \cdots W_1 h_{ic}$ are logit vectors forming the columns of the logit matrix $Z \in \mathbb{R}^{K \times nK}$. When $L=1$ (single hidden layer), this reduces to the (shallow) UFM; our focus is on the deep case $L > 1$.

The deep UFM serves as an abstraction of overparameterized DNN classifiers, where the features $h_{ic}$ represent unconstrained embeddings trained jointly with the weights \citep{Mixon2020,fang2021exploring,Han2022}. This models a highly expressive feature map, for example a ResNet, with a linear head. We leave consideration of homogeneous activations in the head for future work. See App. \ref{sec:more background} for more details and \citet[App.~A]{garrod2024unifying} for justification of the UFM as a modeling tool of deep learning. That aside, since the deep UFM is equivalent to an $L$-layer linear network with orthogonal inputs (i.e., the simplest architecture exhibiting depth-dependent dynamics), it serves as the minimal setting for disentangling explicit regularization from implicit biases induced by optimization and depth. Linear networks have been studied extensively in deep-learning theory, but almost exclusively under MSE loss rather than the more practically relevant CE loss.

\noindent\textbf{(Deep) neural collapse.} DNC refers to geometric regularities observed empirically in the final layers of trained networks \citep{Papyan2020Prevalence, Parker2023NeuralCI}. The key property is that the Gram matrix of centered class mean embeddings becomes proportional to the simplex ETF matrix $S = I_K - \frac{1}{K} {1}_K {1}_K^T$. We defer a detailed definition of DNC to App.~\ref{sec:more background}; here we work with the following well-established equivalent characterization \citep{Garrod2024ThePO}. Let $\otimes$ denote the Kronecker product. 
\begin{definition}[\textbf{DNC in the deep UFM}] \label{def:lin_UFM_DNC}
    A DNC solution occurs when the normalized parameter matrices $\hat{W}_L, \ldots, \hat{W}_1, \hat{H}_1$ (where $\hat{X} = X/\|X\|_F$) satisfy the balancedness relations 
    \begin{equation} \label{eq:balanced}
        \hat{W}_l^T \hat{W}_l = \hat{W}_{l-1} \hat{W}_{l-1}^T, \quad l = 1, \ldots, L,
    \end{equation}
    with $\hat{W}_0 \coloneqq \hat{H}_1$, and the normalized logit matrix is proportional to the simplex: $\hat{W}_L \cdots \hat{W}_1 \hat{H}_1 \propto S \otimes {1}_n^T$.
\end{definition}

Balancedness emerges asymptotically: the quantity $W_l^T W_l - W_{l-1} W_{l-1}^T$ is conserved under GF, and since parameter norms diverge, the normalized matrices satisfy~\eqref{eq:balanced} in the limit \citep{ji2018gradient}.

For $L = 1$, \citet{ji2022an} showed that GF converges in direction to a KKT point of a non-convex max-margin problem (Eq. \eqref{eq:KKT_constrained_prob}), with NC as the unique global minimum and all other critical points being saddles. \citet{Garrod2025DiagonalizingTS} further proved convergence to NC under spectral initialization, showing that the KL divergence from the parameters to the ETF structure decreases monotonically throughout training. For $L > 1$, both questions—whether the asymptotic landscape remains benign and how the dynamics evolve at finite time—have remained open.

\section{Implicit Bias in the deep UFM}\label{sec:implicit bias}


Here, we characterize the geometric structures that arise as limiting configurations of GF in the deep UFM. The CE objective admits no finite minimizers: the infimum of zero is achieved only at infinity, and by a wide variety of parameter configurations. This is the essence of implicit bias: although many directions asymptotically attain zero training loss, the optimization dynamics select structured ones from among them. We proceed in three steps: we first show that DNC is no longer the unique optimum once $L > 1$, then explain the mechanism behind this—a low-rank bias induced by depth in such models, and finally obtain an explicit characterization of the optimal low-rank structures in the large-depth limit.

\subsection{Depth Breaks the Benign Landscape}\label{sec:depth breaks benign}

\emph{Does the benign landscape of the shallow UFM persist with depth?}
To answer this we apply the framework of \citet{lyu2019gradient} to the deep UFM (Eq. \ref{eq:general_deep_UFM}). This shows that GF on CE loss implicitly performs (non-convex) margin maximization. Concretely, if there exists time $t_0$
 such that the loss $\mathcal{L}(t_0)<\log(2)$,
then any limit point of the normalized parameters $(\hat{W_L},...,\hat{W}_1, \hat{H_1})$ lies in the direction of a Karush-Kuhn-Tucker (KKT) point of the constrained optimization problem
    \begin{align} \label{eq:KKT_constrained_prob}
     &\qquad\qquad   \min_{W_L,...,W_1,H_1}  \|H_1\|_F^2 + \sum_{l=1}^L \|W_l \|_F^2 
    \\
    &\textrm{s.t. } (z_{ic})_c - (z_{ic})_{c'} \geq 1, \quad c' \neq c=1,...,K, \ i=1,...,n, \notag
    \end{align}
    where $z_{ic}$ are the columns in class order of the logit matrix $Z=W_L...W_1H_1$.
Since this problem is non-convex, KKT points may be global minima, local minima, or saddles. Our first contribution is to characterize this landscape; the following theorem shows the benign structure of the shallow case shown by \citet{ji2022an} does \emph{not} persist.

\begin{theorem} \label{thm:NC_suboptimal}
    Consider the optimization problem in Eq. \eqref{eq:KKT_constrained_prob} with $d \geq K$. For $L=2, K \geq 6$ or $L \geq 3, K \geq 4$, DNC is not globally optimal, although it remains a local optimum. Consequently, the landscape is no longer benign: multiple distinct locally optimal structures exist.
\end{theorem}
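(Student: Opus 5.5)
The plan is to eliminate the factorization and reduce \eqref{eq:KKT_constrained_prob} to a problem over the logit matrix $Z$ alone. For a fixed product $Z=W_L\cdots W_1H_1$, the standard AM--GM / balanced-factorization argument gives
\begin{equation*}
\min_{W_L\cdots W_1 H_1 = Z}\ \|H_1\|_F^2+\sum_{l=1}^L\|W_l\|_F^2 \;=\; (L+1)\sum_i \sigma_i(Z)^{2/(L+1)} .
\end{equation*}
Equality holds at a balanced factorization as in \eqref{eq:balanced}. Since $d\ge K$, every $Z\in\mathbb{R}^{K\times nK}$ is attainable, so \eqref{eq:KKT_constrained_prob} becomes
\begin{equation*}
\min_Z\ (L+1)\,\|Z\|_{S_p}^p,\qquad p=\tfrac{2}{L+1},
\end{equation*}
subject to the margin constraints. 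For $L=1$ the objective is the nuclear norm, which is convex and explains the benign landscape. For $L>1$ it is a Schatten quasi-norm with $p<1$, which rewards concentrating mass in few singular values.

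\textbf{Cost of DNC.} The DNC logits are $Z_0=S\otimes 1_n^T$, scaled so the margin is exactly $1$. This matrix has $K-1$ singular values equal to $\sqrt n$. Its cost is therefore $(L+1)(K-1)n^{p/2}$.

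\textbf{Non-optimality via an explicit competitor.} Next I exhibit cheaper feasible $Z$ built from lower-dimensional codes. Write $Z=sWW^T\otimes 1_n^T$, where the rows $w_c$ of $W$ form a symmetric code. For even $K$, take the cross-polytope $\{\pm r e_i\}_{i\le K/2}$. The margin constraint is $sr^2\ge 1$, and $WW^T$ has $K/2$ eigenvalues equal to $2r^2$. Hence $Z$ has $K/2$ singular values equal to $2\sqrt n$, and the cost is $(L+1)\tfrac K2\,2^{p}n^{p/2}$. This beats DNC whenever $\tfrac K2 2^{p}<K-1$. For $L=2$ ($p=2/3$) this reads $0.794K<K-1$, which holds for $K\ge 6$. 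For $L\ge 3$ ($p\le 1/2$) it holds for $K\ge 4$.

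For small cases one can alternatively use the planar $K$-gon code. Its two singular values are $\sqrt n\,K/(2(1-\cos(2\pi/K)))$, which already gives the case $L=3,K=4$.

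Odd $K$ is the step I expect to be the main obstacle. I would first try a code that is ``cross-polytope plus one extra direction'' (or cross-polytope on $K-1$ classes with the last class on its own axis), optimize the per-class scalings, and verify the inequality numerically over the small cases. For large $K$ the gap $K-1-\tfrac K2 2^p$ grows linearly, so a crude bound suffices there. If a clean closed form fails for some odd $K$, a monotonicity argument in $K$ can be used: padding a code for $K-1$ classes with one extra orthogonal class adds cost $O(1)$, whereas DNC's cost increases by exactly one unit of $(L+1)n^{p/2}$.

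\textbf{Local optimality of DNC.} Take a feasible perturbation $Z=Z_0+E$. Split $E$ into its component inside the row/column spaces of $Z_0$ and its component outside them.
\begin{itemize}
\item Inside component: all $K-1$ singular values of $Z_0$ are equal, so the first-order change in $\sum\sigma_i^p$ is $p\,n^{(p-1)/2}\,\mathrm{tr}(U^TEV)$. This is exactly a positive multiple of the first-order change in the nuclear norm. From the $L=1$ result of \citet{ji2022an}, $Z_0$ is the strict unique minimizer of the nuclear-norm problem. I would use its KKT multipliers to show that this linear term is $\ge c\|E\|$ along all feasible directions that change the margins, and I would control directions that preserve the margins via strong convexity of the nuclear norm on the constraint face.
\item Outside component: new singular values of size $\varepsilon$ contribute $\varepsilon^p\gg\varepsilon$. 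These dominate any $O(\varepsilon)$ decrease elsewhere, so they can only increase the cost locally.
\end{itemize}
Together these give a strict local minimum in $Z$-space. Lifting through balanced factorizations then gives local optimality in parameter space. Combined with the cheaper competitor, this yields multiple distinct local optima.
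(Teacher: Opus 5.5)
Your non-optimality argument matches the paper's. The even-$K$ cross-polytope computation is identical, and your second odd-$K$ option (cross-polytope on $K-1$ classes, last class on its own axis) is exactly the paper's construction. To finish the odd case, tighten the padding claim: ``adds cost $O(1)$'' is not enough, you need ``at most one DNC unit.'' This holds with equality. The lone class needs $Z_{KK}\ge 1$, and the other classes need $Z_{cc}\ge 1$, which the cross-polytope satisfies at the margin boundary. So padding adds exactly $(L+1)n^{p/2}$, matching DNC's increment. The strict inequality for even $K-1$ then transfers, giving odd $K\ge 7$ for $L=2$ and odd $K\ge 5$ for $L\ge 3$.

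The genuine gap is in local optimality. Your dichotomy is: inside components are handled at first order, outside components create new singular values of size $\varepsilon$. This fails for the off-diagonal blocks. Take $E=t\,1_Kv^T$ with $v$ in the row space of $Z_0$.
\begin{itemize}
\item Since $1_K^TZ_0=0$, this $E$ lies outside the column space.
\item Every margin is preserved exactly, since $E_{c,ic}-E_{c',ic}=0$. So $E$ is feasible and the KKT linear term vanishes.
\item Yet $(Z_0+E)^T(Z_0+E)=Z_0^TZ_0+Kt^2vv^T$ still has rank $K-1$, so no new singular value appears.
\item The cost changes only at second order, by $(L+1)\bigl[(n+Kt^2\|v\|^2)^{p/2}-n^{p/2}\bigr]$.
\end{itemize}
These directions $\{1_Kv^T\}$ form the critical cone, and neither of your two mechanisms covers them.

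Falling back on the nuclear norm does not help. The first-order terms are proportional, but the second-order terms are not: $\sigma\mapsto\sigma^p$ is strictly concave, so $\sum\sigma_i^p$ has negative curvature on the inside block. Also, the nuclear norm is not strongly convex; along $1_Kv^T$ with $v\in\ker Z_0$ it grows only linearly.

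Your uniform bound ``linear term $\ge c\|E\|$ for feasible margin-changing $E$'' is also false. Such $E$ can approach the face $\{1_Kv^T\}$, so first- and second-order effects must be combined. The paper does this with a compactness argument:
\begin{itemize}
\item Suppose feasible non-improving $Z_k\to Z_0$, with unit directions $\Delta_k\to\bar\Delta$. KKT with strictly positive multipliers gives $\langle G,\Delta_k\rangle\ge 0$.
\item If $\langle G,\bar\Delta\rangle>0$, the first-order term wins.
\item Otherwise $\bar\Delta=1_Kv^T$. If $v$ has a component in $\ker Z_0$, then $\sigma_K(Z_k)\ge c\,t_k$. The resulting $t_k^p$ gain beats the $O(t_k^2)$ Taylor error of the smooth part $\sum_{i\le K-1}\sigma_i^p$.
\item If $v$ lies in the row space, the exact formula above gives strictly positive curvature of the smooth part along $\bar\Delta$. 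This persists for nearby $\Delta_k$ by continuity of its Hessian, which uses the spectral gap at $Z_0$.
\end{itemize}
You need this second-order computation for the Schatten-$p$ objective itself, together with the limiting argument, to close the proof.
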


The proof appears in App.~\ref{sec:proof_NC_suboptimal}. This reveals a fundamental distinction: once $L > 1$, the landscape admits multiple stable configurations and GF may converge to structures other than DNC—bringing it qualitatively closer to practical deep-learning settings.

 A qualitatively similar phenomenon, i.e., suboptimality of DNC at depth, was observed in regularized deep UFMs \citep{sukenik2024neural,Garrod2024ThePO}, leading to interpretations that $L_2$ regularization is the cause. From a technical standpoint, the two settings differ fundamentally: regularized analyses study finite global minimizers, whereas Theorem~\ref{thm:NC_suboptimal} characterizes the asymptotic hard-margin regime. From a conceptual standpoint, our analysis shows the interpretation of previous work is incomplete: Theorem~\ref{thm:NC_suboptimal} establishes that depth alone, without any explicit regularization, suffices to induce the suboptimality of DNC. The next subsection explains why.

  \begin{figure}[!t]
     \centering
     \begin{subfigure}{0.4\textwidth}
         \centering
         \includegraphics[width=\textwidth]{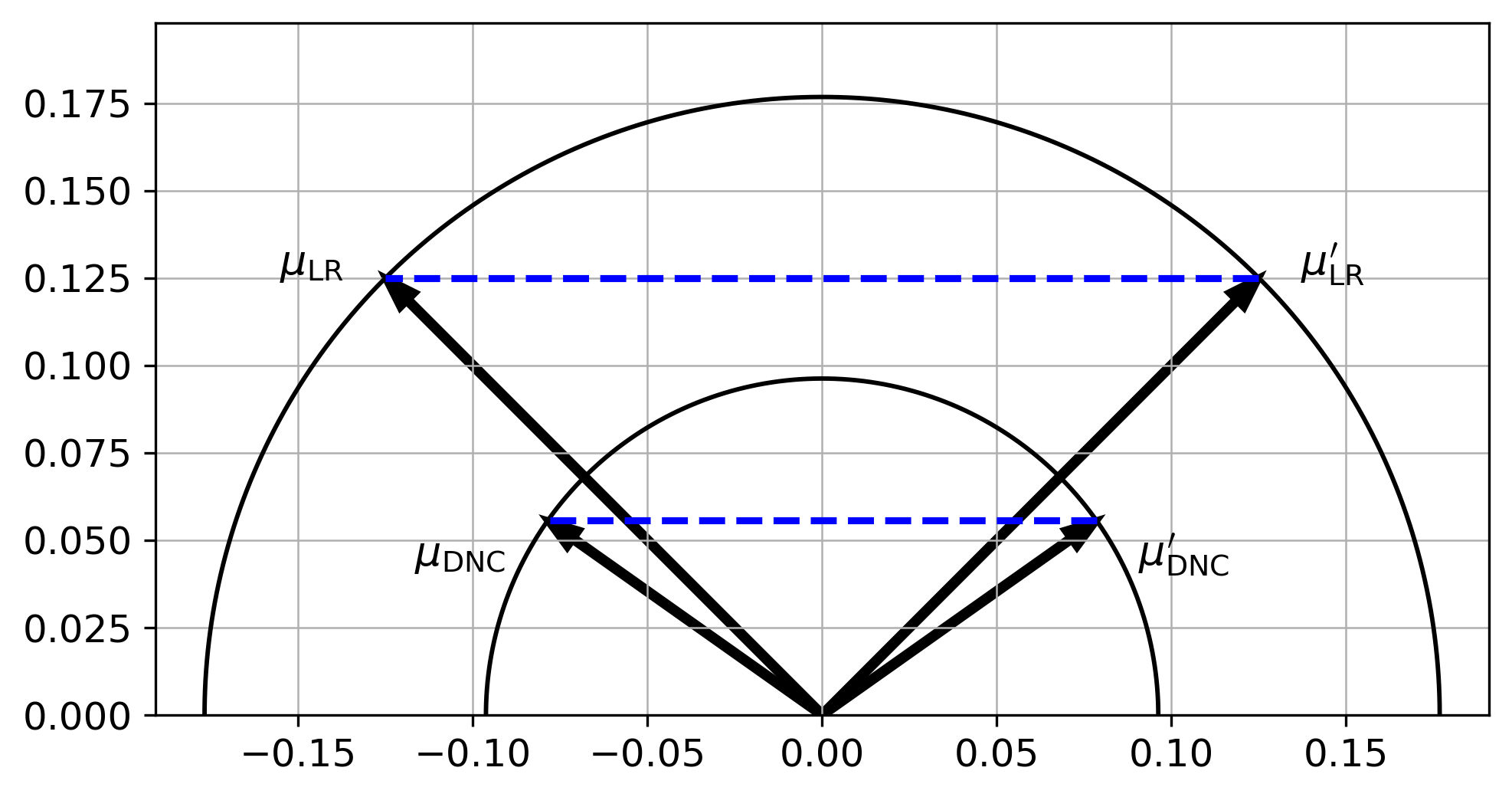}
     \end{subfigure}%
     \caption{Illustration of low-rank bias for $K=4,L=3,n=1$, with all parameter matrices normalized to unit Frobenius norm.  $\mu_{\textrm{DNC}}, \mu_{\textrm{DNC}}'$ denote two feature embeddings of a logit matrix $Z$ under DNC, while $\mu_{\textrm{LR}}, \mu_{\textrm{LR}}'$ denote two  feature embeddings of a logit matrix $Z$ under the low-rank geometry of Eq. \eqref{eq:cross_poly}. Although DNC yields the larger angle between features, the low-rank geometry lies on a large-radius hypersphere and therefore attains a larger Euclidean separation.}
     \label{fig:hyp_illustration}
 \end{figure}

\subsection{The Mechanism: Low-Rank Structures Propagate Norm More Efficiently}\label{sec:depth implicit bias}

The mechanism by which depth alters the loss surface in such models is a low-rank bias: low-rank matrices propagate norm more effectively through a sequence of matrix multiplications. To make this precise,  consider the logit matrix $Z$ formed as a product of parameter matrices all having the same Frobenius norm, which is true under the balancedness condition (Eq. \eqref{eq:balanced}).
Writing $W_0:=H_1$, we  decompose
\begin{equation} \label{eq:tilde_Z_def}
    Z = \big( \prod_{l=0}^L \|W_l \|_F \big) \cdot \tilde{Z}\,,\,\,\text{where}\,\,\tilde{Z}=\hat{W}_L\cdots\hat{W}_1 \hat{W}_0\,.
\end{equation}
Once positive margins are achieved, the loss decreases monotonically as the norms grow, and each $\|W_l\|_F$ increases monotonically under gradient flow \citep{lyu2019gradient}. Therefore, differences between asymptotic structures are governed by $\tilde{Z}$.
Crucially, it is clear from Eq. \eqref{eq:tilde_Z_def} that $\|\tilde{Z}\|_F$ contributes to the overall scale of $Z$, so maximizing $\|\tilde{Z}\|_F$ is advantageous for reducing the loss by widening the margins. We thus compare $\|\tilde{Z}\|_F$ under DNC versus low-rank alternatives.  

Under DNC, and taking $n=1$ for simplicity, using Definition \ref{def:lin_UFM_DNC} and a small calculation shows $\tilde{Z}_{\textrm{DNC}}=\hat{S}^{L+1}$, yielding
$
\|\tilde{Z}_{\textrm{DNC}}\|_F = (K-1)^{-L/2}.
$
By contrast, consider solutions with positive margins whose normalized logits have equal singular values but rank $r<K-1$ \footnote{Concrete examples with $r=K/2$ are given in App. \ref{sec:proof_NC_suboptimal}, and $r=2$ in Theorem \ref{thm:L_to_infty_optim}.}. For such low-rank structures, we can compute that
$\|\tilde{Z}_{\textrm{LR}}\|_F = r^{-\frac{L}{2}},$
and hence
$$\frac{\|\tilde{Z}_{\textrm{DNC}}\|_F}{\|\tilde{Z}_{\textrm{LR}}\|_F} = \exp \left(-\frac{L}{2}  \log \left( \frac{K-1}{r} \right) \right).$$
The logit norm achievable under DNC is thus exponentially suppressed in depth relative to lower-rank alternatives. This occurs because of the basic linear-algebra effect that low-rank matrices “propagate norm” more efficiently through repeated matrix multiplication. This is since fewer singular vector directions means larger individual singular values for a given Frobenius norm, causing less decay under matrix powers, with the effect strengthening with increasing $L$.

  \begin{figure}[!t]
     \centering
     \begin{subfigure}{0.33\textwidth}
         \centering
         \includegraphics[width=\textwidth, trim=0 0 0 24pt, clip]{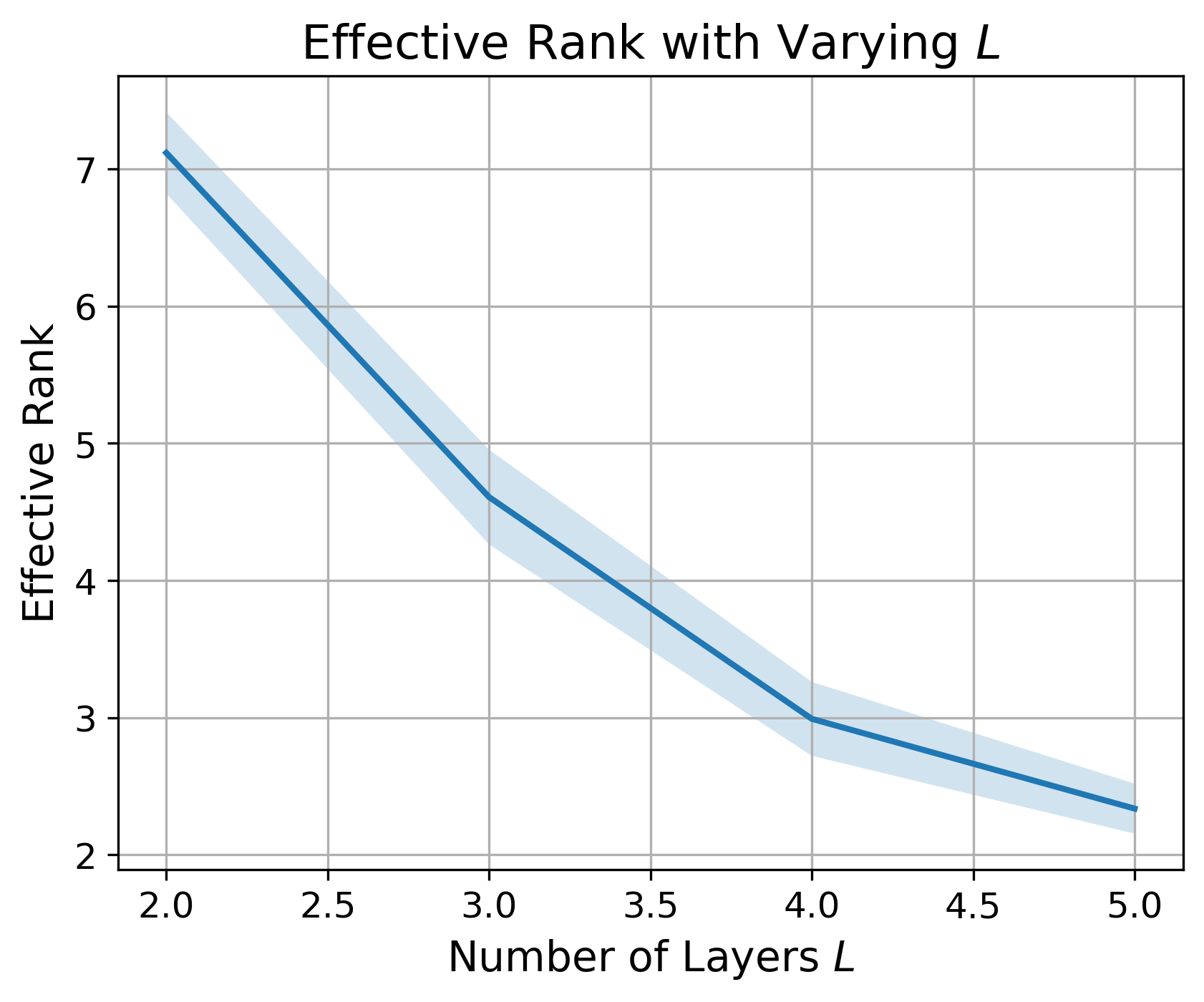}
     \end{subfigure}
     \caption{Effective rank (Eq. \eqref{eq:eff_rank}, App. \ref{sec:further_numerical_experiments}) of the logit matrix at GF convergence for $K=10$ classes and varying network depth $L$. For each value of $L$ we trained deep UFM for five random initializations and computed the effective rank after training. Mean $\pm$ one standard deviation is shown. Rank decreases with  depth.}
     \label{fig:UFM_low_rank_evidence}
 \end{figure}

 \begin{figure*}
     \centering
     \begin{subfigure}{0.27\textwidth}
         \centering
         \includegraphics[width=\textwidth]{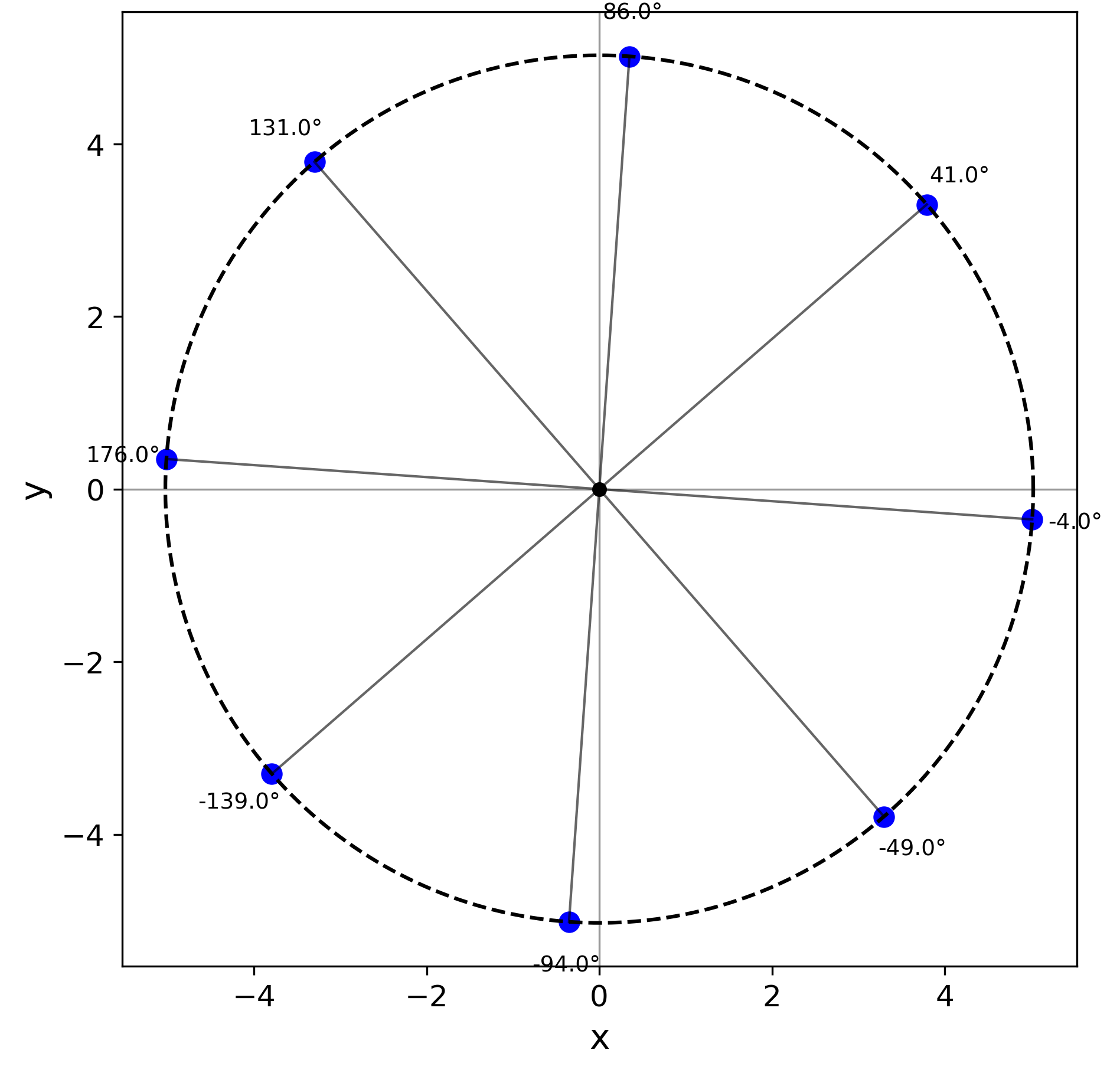}
     \end{subfigure}%
     \begin{subfigure}{0.33\textwidth}
         \centering
         \includegraphics[width=\textwidth, trim=0 0 0 24pt, clip]{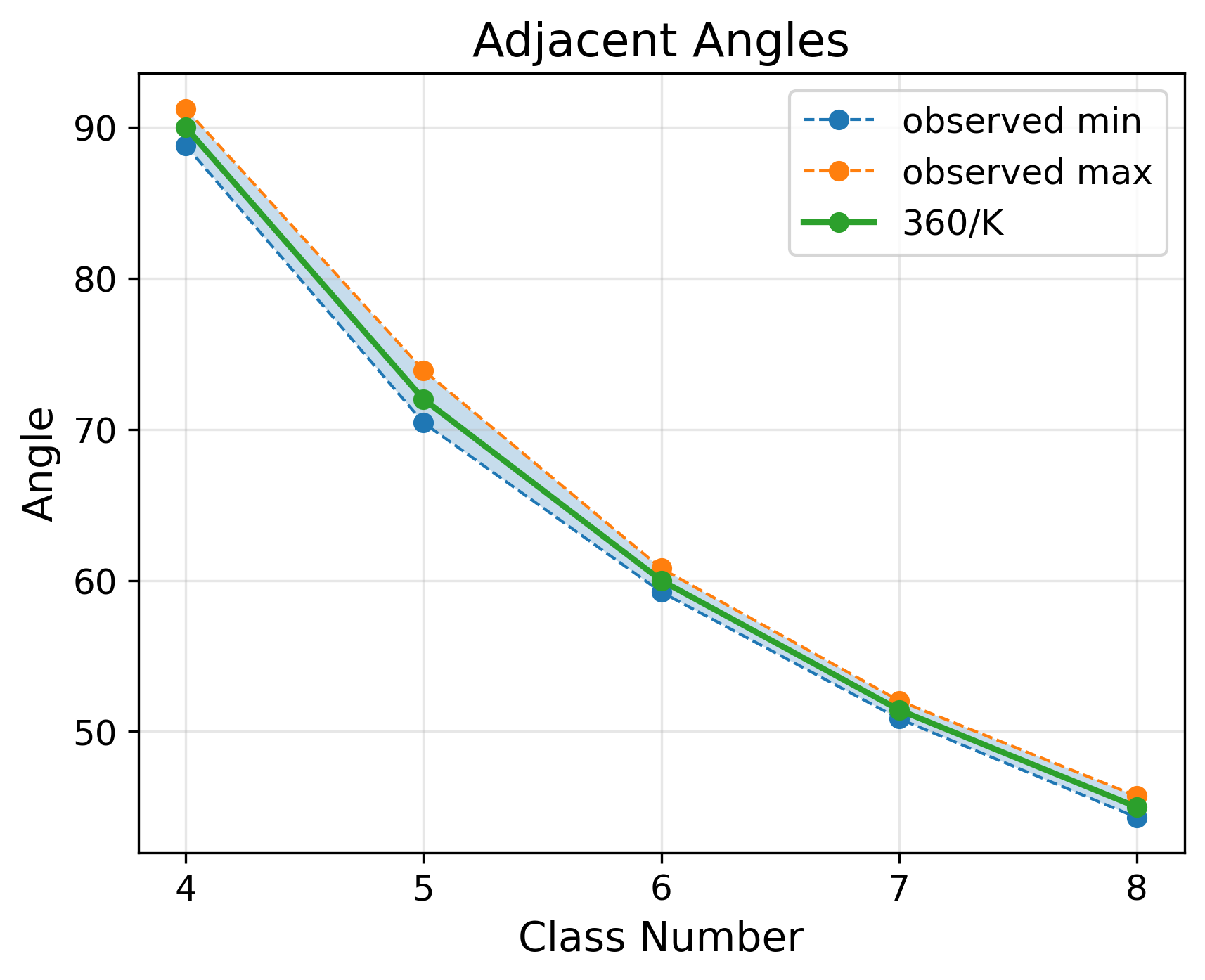}
     \end{subfigure}
     \begin{subfigure}{0.33\textwidth}
         \centering
         \includegraphics[width=\textwidth]{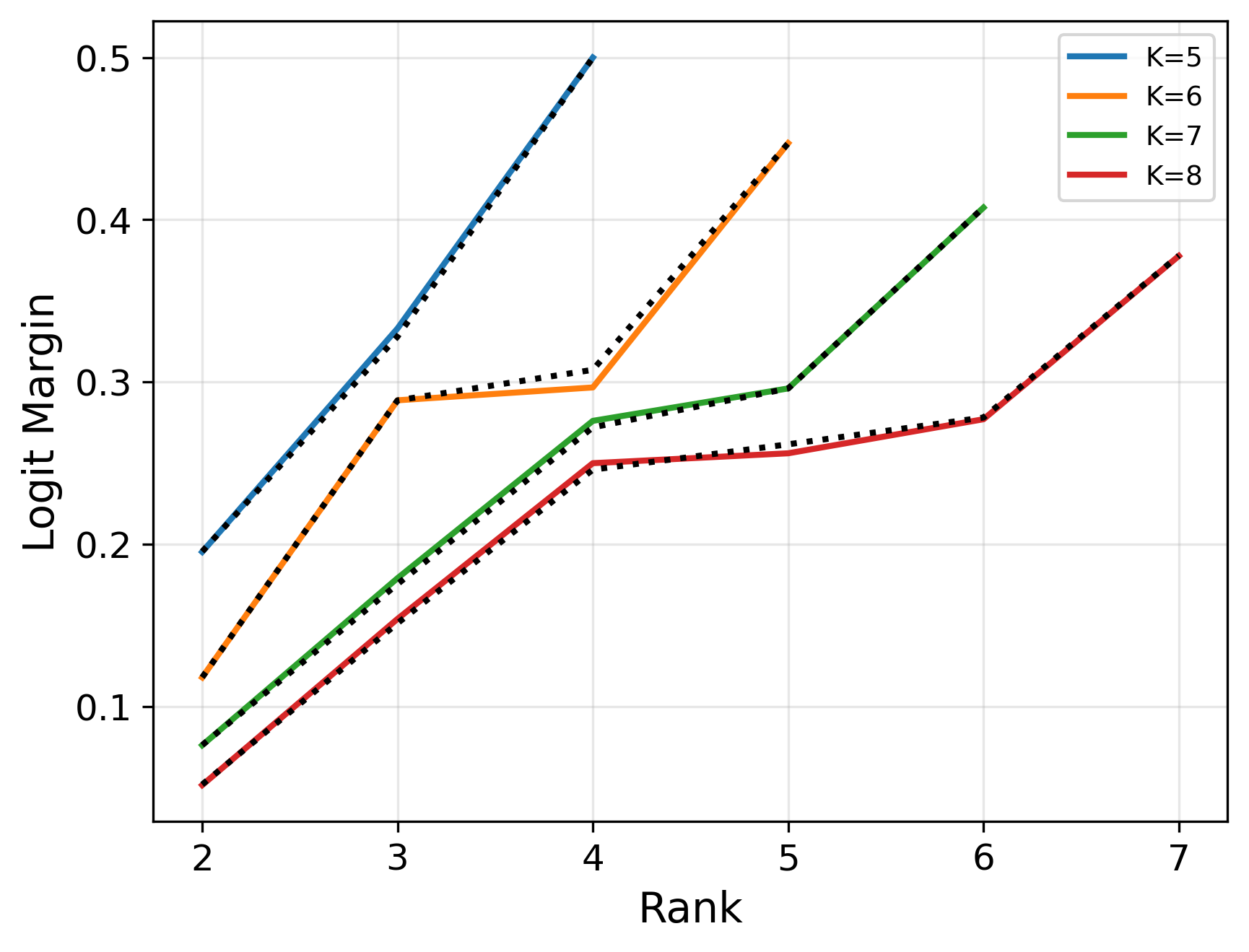}
     \end{subfigure}%
     \caption{
     Optimal low-rank structures correspond to softmax codes.
     \textbf{Left:} Columns of the gram factor $X$ defined in Eq. \eqref{eq:gram_fac} for one run with $L=10$ that converged to rank 2. \textbf{Middle:} For each $K$, we trained five deep UFMs that converged to rank 2; shown are the largest and smallest adjacent angle gaps of the columns $x_i$ of the Gram factor $X$ over all five runs. The angle $360/K$ corresponds to the $d=2$ softmax code. \textbf{Right:} Normalized margins for  low-rank solutions versus architecturally bottlenecked solutions. Dotted lines:  $L=1$ UFM with width $d=r$ which is known to yield softmax codes \citep{jiang2023generalized}. Solid lines: networks with $L=4$ and $d\geq K$ that converged to rank $r$, for varying $K$. 
     }
     \label{fig:global_min_recovery}
 \end{figure*}

\noindent\textbf{Angular versus Euclidean separation.} For fixed $\|Z\|_F$, DNC is the maximum-margin solution under direct logit optimization \citep{thrampoulidis2022imbalance, Garrod2025DiagonalizingTS}, and is known to maximize the angular separation between class means \citep{Papyan2020Prevalence}—a property closely related to margin maximization \citep{jiang2023generalized}. However, depth distinguishes angular separation from Euclidean separation. Low-rank structures can attain larger logit norms, and thus larger Euclidean margins, simply by lying on a larger hypersphere (even though their angular separation is inferior). Figure~\ref{fig:hyp_illustration} illustrates this geometry: DNC maximizes the angle between logit means, but the low-rank configuration achieves greater Euclidean separation by residing on a hypersphere of larger radius. Figure~\ref{fig:UFM_low_rank_evidence} provides empirical confirmation: as depth increases, the effective rank of the converged logit matrix decreases systematically.


\noindent\textbf{Discrimination versus confidence.} Low-rank solutions can achieve lower training loss than DNC despite being worse normalized classifiers. Since diverging logit norms drive softmax outputs toward one-hot vectors, larger $\|Z\|_F$ contributes only by increasing prediction confidence. Maximal angular separation, by contrast, accounts for the discrimination between classes. Low-rank solutions thus trade discrimination quality (encoded by $\hat{Z}$, which determines the predicted label) for increased confidence (encoded by $\|Z\|_F$). This suggests they may overfit more and generalize worse than DNC, connecting their rarity in standard training \citep{Papyan2020Prevalence} to the broader question of why deep networks generalize despite sufficient capacity to overfit \citep{zhang2017understanding}.



\subsection{Characterizing the Optimal Low-Rank Structures}\label{sec:softmax}

Having established that low-rank structures can outperform DNC and explained the mechanism, we now ask a more challenging question: \emph{what are these optimal structures?} Since explicit characterizations of non-convex landscapes are difficult to obtain, we focus on a   large-depth limit.


\begin{theorem} \label{thm:L_to_infty_optim}
    Consider the optimization problem in Eq. \eqref{eq:KKT_constrained_prob}, with class number $K > 2$ and network width $d \geq K$. In the limit $L \to \infty$, the globally optimal solutions among the set of positive semi-definite logit matrices satisfying NC1 (meaning $Z$ can be written as $Z=\bar{Z} \otimes 1_n^T$) and equal norm features is of the form 
    \begin{equation} \label{eq:gram_fac}
        Z=X^TX \otimes 1_n^T, \quad X \in \mathbb{R}^{2 \times K},
    \end{equation}
    where the columns $x_i \in \mathbb{R}^2$ of $X$ are uniformly spaced on a circle, meaning for some $\alpha ,\mu \in \mathbb{R}$, the set $\{ x_i :i=1,...,K \}$ is equal to
    $\left\{ \left( \mu \cos \left(\frac{2 \pi i}{K} + \alpha \right), \mu \sin \left(\frac{2 \pi i}{K} + \alpha \right) \right):i\in [K] \right\}.$
\end{theorem}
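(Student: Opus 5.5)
The plan is to reduce the deep problem to a spectral problem on the logit matrix alone, and then take $L\to\infty$ as a two-scale expansion: rank at leading order, a determinant-type quantity at next order.

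\textbf{Step 1: reduce to the logit matrix.} For a fixed product $Z=W_L\cdots W_1H_1$ with $d\ge K$, the minimal value of $\|H_1\|_F^2+\sum_l\|W_l\|_F^2$ over factorizations is the standard quantity
\[
(L+1)\sum_i \sigma_i(Z)^{2/(L+1)}.
\]
The lower bound follows from AM--GM together with the Hölder inequality for singular values of products. The upper bound is attained by a balanced SVD-based factorization, consistent with \eqref{eq:balanced}. Under NC1, $Z=\bar Z\otimes 1_n^T$, so $\sigma_i(Z)=\sqrt{n}\,\sigma_i(\bar Z)$, and the margin constraints depend only on $\bar Z$. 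With $p:=2/(L+1)$, the problem in Eq.~\eqref{eq:KKT_constrained_prob} restricted to our class becomes
\[
\min_{\bar Z}\ \sum_i \sigma_i(\bar Z)^p
\]
over PSD $\bar Z$ with equal diagonal $\bar Z_{cc}=\mu^2$ and $\bar Z_{cc}-\bar Z_{c'c}\ge 1$ for all $c'\neq c$. For PSD $\bar Z$, we have $\sigma_i=\lambda_i$.

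\textbf{Step 2: expand as $p\to0$.} For each fixed feasible $\bar Z$,
\[
\sum_i\lambda_i^p=\operatorname{rank}(\bar Z)+p\sum_{\lambda_i>0}\log\lambda_i+O(p^2).
\]
So, as $L\to\infty$, optimizers must first minimize rank, and then, among minimal-rank feasible matrices, minimize $\log\operatorname{pdet}(\bar Z)$. To make this rigorous, I would show that optimal values and minimizers converge, in a $\Gamma$-convergence style. This needs compactness: a priori bounds on $\mu$ and on the eigenvalues of near-optimal solutions. These follow because the objective blows up, for every fixed $p>0$, if any eigenvalue diverges.

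\textbf{Step 3: rank one is infeasible, so the minimal rank is $2$.} Suppose $\bar Z=xx^T$ with $x\in\mathbb{R}^K$. The margin constraint for class $c$ reads $x_c(x_c-x_{c'})\ge1$ for all $c'$. This forces $x_c$ to be the strict maximum of the entries if $x_c>0$, and the strict minimum if $x_c<0$. Hence at most two classes can satisfy it, which is impossible for $K>2$. Rank $2$ is feasible, for example via the uniform circle configuration. Therefore, in the limit, $\bar Z=X^TX$ with $X\in\mathbb{R}^{2\times K}$ and columns $x_c=\mu(\cos\theta_c,\sin\theta_c)$.

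\textbf{Step 4 (main obstacle): the next-order problem.} It remains to minimize
\[
\operatorname{pdet}(X^TX)=\det(XX^T)=\frac{\mu^4}{4}\Big(K^2-\big|\textstyle\sum_c e^{2i\theta_c}\big|^2\Big)
\]
subject to $\mu^2\bigl(1-\cos(\theta_c-\theta_{c'})\bigr)\ge1$ for all $c\neq c'$. The binding constraint is
\[
\mu^2\ge \frac{1}{1-\cos\delta_{\min}},
\]
where $\delta_{\min}$ is the smallest angular gap. Since $\delta_{\min}\le 2\pi/K$, with equality only for uniform spacing, the factor $\mu^4$ is minimized exactly at the uniform configuration, where $\sum_c e^{2i\theta_c}=0$ for $K>2$.

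The difficulty is the trade-off: nonuniform configurations can make $|\sum_c e^{2i\theta_c}|$ large, which shrinks $K^2-|\cdot|^2$, while $\mu^4$ grows. I would first fix $\delta_{\min}=\delta$. Given that all gaps are at least $\delta$ and sum to $2\pi$, I would bound $|\sum_c e^{2i\theta_c}|^2$ from above; the maximum should be attained by clustering the points as tightly as allowed. This yields an explicit one-variable function of $\delta\in(0,2\pi/K]$. I would then show that
\[
\frac{K^2-M(\delta)^2}{(1-\cos\delta)^2}
\]
is minimized at $\delta=2\pi/K$, where $M(\delta)$ denotes that maximal value of $|\sum_c e^{2i\theta_c}|$. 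I expect to do this through a monotonicity argument, handling small $K$ separately if needed. The uniqueness statement, up to rotation $\alpha$ and scale $\mu$, then follows from the equality cases.
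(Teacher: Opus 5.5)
Your proposal is correct and follows the paper's proof essentially step for step: the Schatten-$\frac{2}{L+1}$ quasi-norm reduction, the small-$p$ expansion that makes rank the leading-order term, rank-one infeasibility, and then minimizing $\det(XX^T)=\frac{\mu^4}{4}\bigl(K^2-|\sum_c e^{2i\theta_c}|^2\bigr)$ by fixing the minimal gap $\delta$, maximizing $\sum_c\cos 2\theta_c$ (two antipodal clusters with spacing exactly $\delta$, plus a separate extremal case for odd $K$ and $\delta>\frac{2\pi}{K+1}$), and showing the resulting one-variable function is decreasing on $(0,\frac{2\pi}{K}]$. Your $\Gamma$-convergence remark goes beyond the paper, which treats the $L\to\infty$ expansion formally; note, however, that blow-up of the objective at each fixed $p$ gives no a priori bound uniform in $p$ (near-optimality only yields eigenvalue bounds of order $2^{1/p}$), so that compactness step would need a separate argument.
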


The proof appears in App. \ref{sec:L_to_infty_optim_proof}.
Figure \ref{fig:global_min_recovery} confirms the prediction: for large $L$, when GD converges to rank two solutions they always correspond to the theorem's structure:  class means arranged as vertices of a regular $K$-gon.

\noindent\textbf{Connection to softmax codes.} The structure identified in Theorem~\ref{thm:L_to_infty_optim} coincides with the optimal solution of the single-layer UFM under an extreme width bottleneck $d = 2$ \citep{jiang2023generalized}. More generally, \citet{jiang2023generalized} show that when $d < K-1$, NC is geometrically infeasible and alternative structures—which they term \emph{softmax codes}—must arise. Theorem~\ref{thm:L_to_infty_optim} demonstrates that large depths select precisely the $d=2$ softmax code, even when no width constraint is present. Figure~\ref{fig:global_min_recovery} (right) suggests this connection extends beyond the $L \to \infty$ limit: the normalized margin achieved by GF on a depth-$L$ network that converges to a rank-$r$ logit matrix closely matches that of a single-layer network with width $r$, suggesting these alternative optima are softmax codes more generally. This confirms that depth acts as an implicit rank constraint, and intriguingly suggests it selects solutions that would otherwise require explicit width bottlenecks. Extending Theorem~\ref{thm:L_to_infty_optim} to finite $L$ is an exciting direction for future work.


\section{A Dynamical Analysis of Implicit Bias} \label{sec:dynam_analysis}
 
 \begin{figure*}
     \centering
     \begin{subfigure}{0.33\textwidth}
         \centering
         \includegraphics[width=\textwidth, trim=0 0 0 24pt, clip]{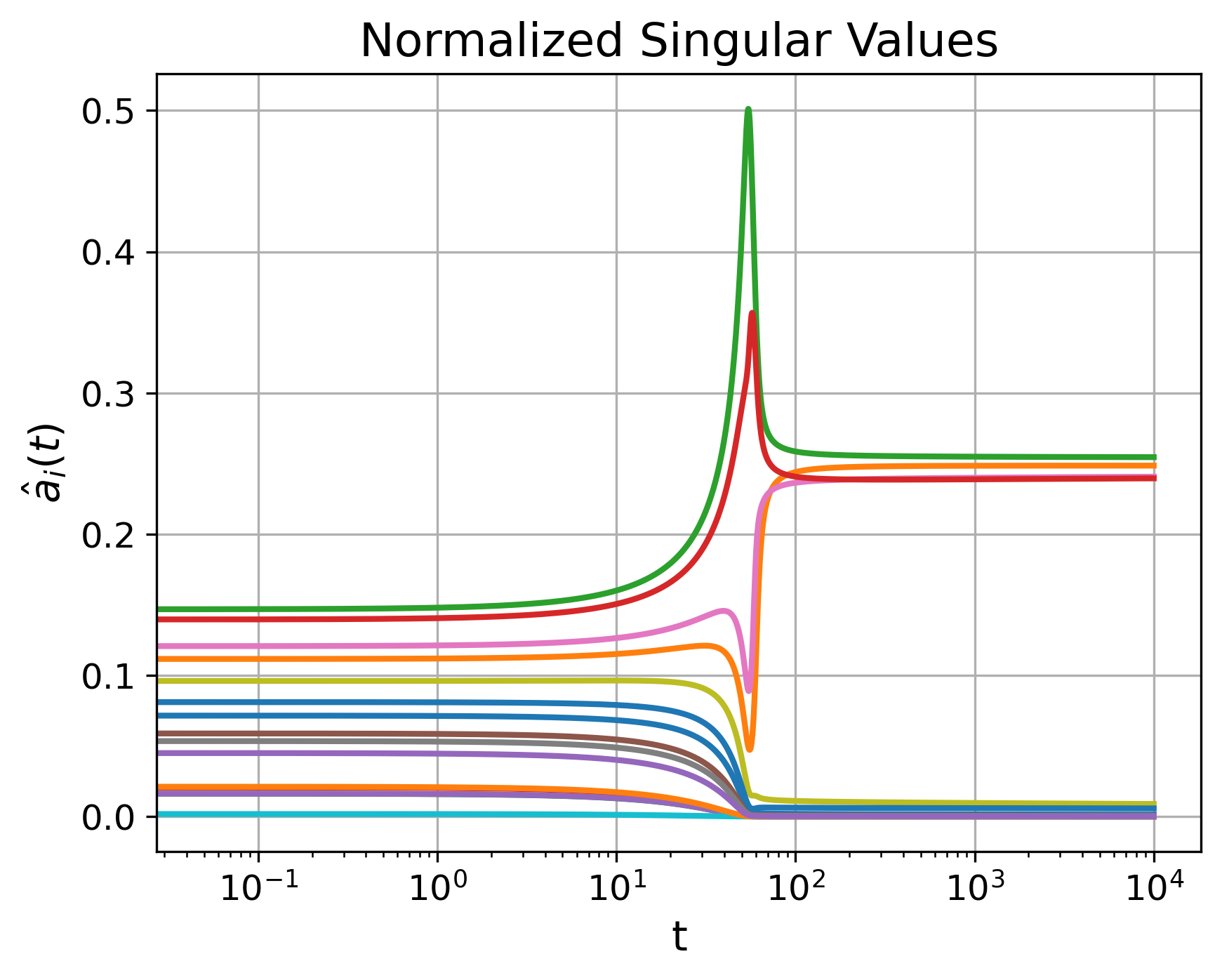}
     \end{subfigure}%
     \begin{subfigure}{0.33\textwidth}
         \centering
         \includegraphics[width=\textwidth, trim=0 0 0 24pt, clip]{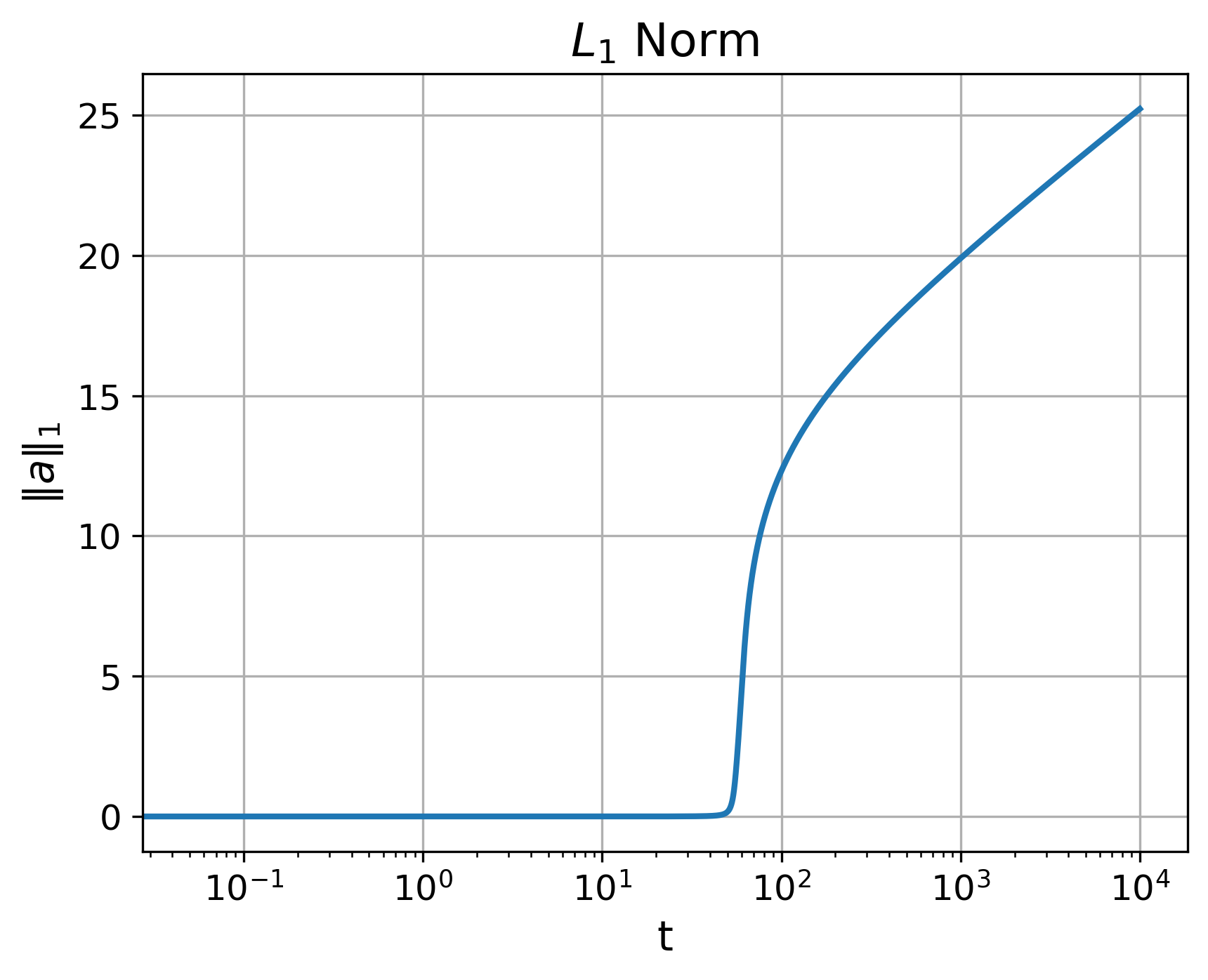}
     \end{subfigure}%
     \begin{subfigure}{0.33\textwidth}
         \centering
         \includegraphics[width=\textwidth, trim=0 0 0 24pt, clip]{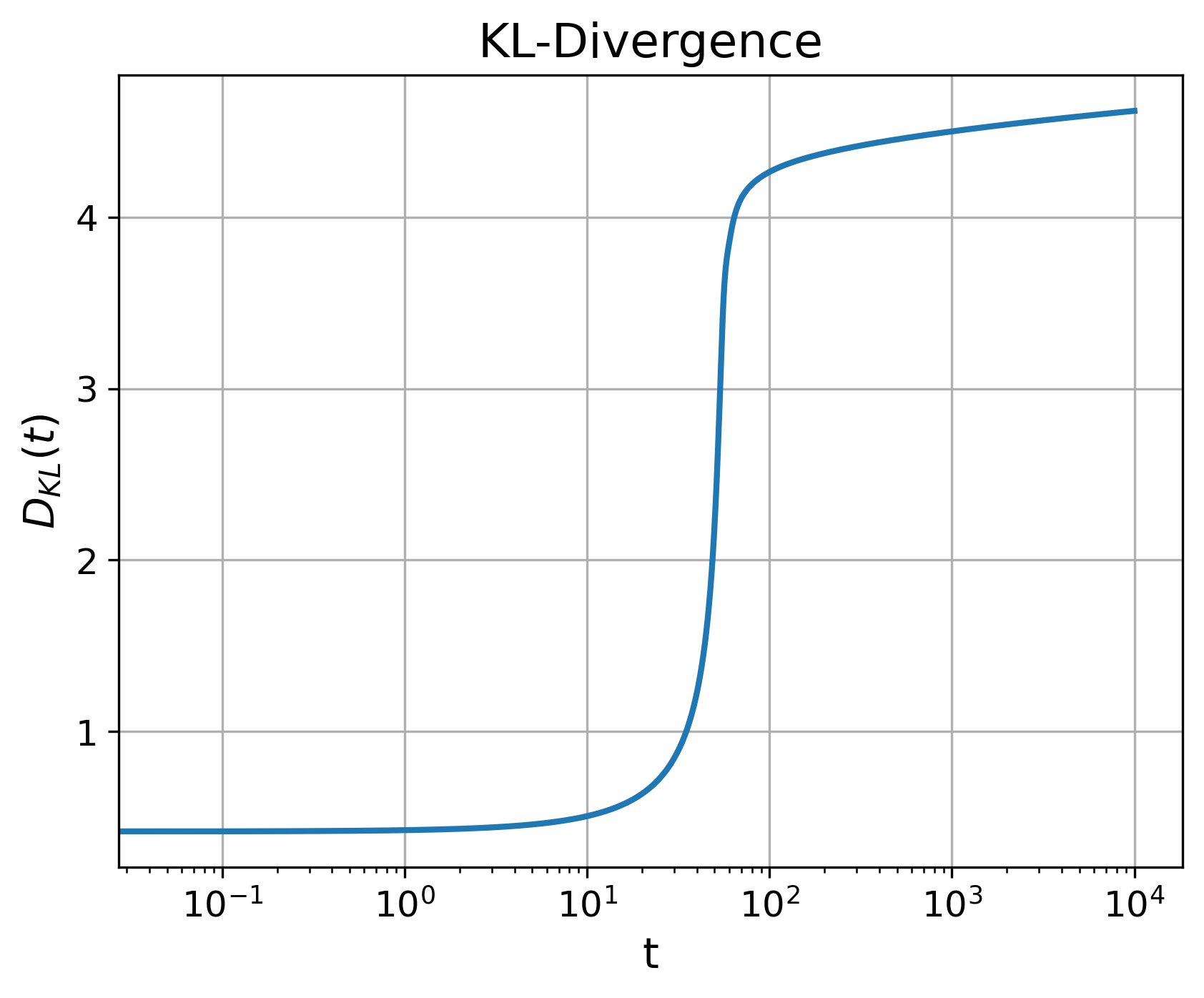}
     \end{subfigure}%
     \caption{Experiments under Hadamard initialization. \textbf{Left:} Evolution of the normalized logit singular values $\hat{a}_i$ under Eq. \eqref{eq:hada_logit_dynam} for depth $L=2$ and $K=16$ classes; singular values initialized with uniform entries in $[0,1]$, then rescaled to have $L_1$ norm $10^{-3}$. Note convergence to low-rank solution where many modes approach zero. \textbf{Middle:} Corresponding evolution of the logit norm $\| a\|_1=\sum_i a_i$. By the time norm increases and system exits the linearized regime, the low-rank structure has already taken hold. \textbf{Right}: The corresponding KL divergence (definition in Eq. \eqref{eq:KL_div}.).
     }
     \label{fig:rich_get_richer}
 \end{figure*}
 
The previous section characterized implicit bias asymptotically: we identified the KKT points to which GF converges as $t \to \infty$ and studied the landscape of the corresponding max-margin problem. 
We now take a more ambitious step: understanding the dynamics that GF traverses \emph{before} reaching the asymptotic regime. This is particularly important because convergence to KKT points of Eq. \eqref{eq:KKT_constrained_prob} only becomes accurate at exponentially large times—the convergence rate is $O(1/\log t)$ \citep{lyu2019gradient}. Several natural questions arise: \emph{When does DNC become a stable direction along the trajectory? Are there competing stable directions at finite times? What happens near initialization?}

For $L = 1$, \citet{Garrod2025DiagonalizingTS} recently provided complete answers under spectral initialization: NC is the unique stable direction throughout the entire trajectory, and the KL divergence to NC decreases monotonically for any positive initialization. In other words, the dynamics not only converge to NC but approach it monotonically (in KL distance) at all times.
\\
For $L>1$, we prove here that this picture changes entirely. 

\subsection{The Hadamard Framework} \label{sec:hada_framework}

To obtain these results, we employ Hadamard initialization \citep{Garrod2025DiagonalizingTS}, which provides a tractable framework for studying GF dynamics in CE models. Under this initialization, the singular vectors of all parameter matrices remain fixed throughout training; evolution occurs only in the singular values. The following theorem shows this remains true for arbitrary depth. Although this is a special initialization, it suffices for our purposes: it reveals structure in the loss surface, and for instability questions it is enough to exhibit one trajectory along which a given structure is unstable. Moreover, convergence to non-DNC solutions under Hadamard initialization implies such alternatives exist in the full landscape.

We further provide empirical evidence in App. \ref{sec:had_vs_rand_experiments} that both Hadamard initialization and small random display the same qualitative phenomenology. This is a common observation in the spectral dynamics literature, where structured analytic trajectories serve as indicative paths of a much wider set of initializations \citep{Saxe2013ExactST,Saxe2019,Gidel2019,Garrod2025DiagonalizingTS}. \newline



\begin{definition} [\textbf{Sylvester Hadamard Matrix}] \label{def:sylvester}
    The Sylvester Hadamard matrices $\{ \Phi_{2^m}:m \in \mathbb{N} \}$ are defined recursively by
    $$\Phi_1=1, \quad \Phi_{2^m}=\begin{bmatrix}
    \Phi_{2^{m-1}} & \Phi_{2^{m-1}} \\
    \Phi_{2^{m-1}} & -\Phi_{2^{m-1}}
\end{bmatrix} \in \mathbb{R}^{2^m \times 2^m}.$$
\end{definition}


\begin{theorem} \label{thm:general_L_hada_dynamics}
    Let $K=2^m$ for $m \in \mathbb{N}$, and let $U=\frac{1}{\sqrt{K}} \Phi$, where $\Phi$ is the $K \times K$ Sylvester Hadamard matrix. 
    Consider the deep UFM in Eq. \eqref{eq:general_deep_UFM}, with balanced classes of size $n$. Initialize the parameter matrices as $W_L=UD_L R_{L}^T$, $W_l=R_{l+1} D_l R_{l}^T$ for $l=1,...,L-1$, and $H_1 = R_1 D_0 V^T$, where $R_L,...,R_1 \in \mathbb{R}^{d \times d}$ are orthogonal, and $V=U \otimes Q$ with $Q$ a right singular matrix of $1_n^T$. Assume the only non-zero singular values of $D_l$ are $\alpha^{(l)}_0,...,\alpha_{K-1}^{(l)}$ for $l=0,...,L+1$. Then under balancedness, and absorbing constants into the time and singular value variables, the gradient flow equations reduce to
    \begin{equation} \label{eq:hada_logit_dynam}
        \frac{d a_i}{dt} = \frac{1}{D} b_i a_i^{\frac{2L}{L+1}}, \quad i=1,...,K-1,
    \end{equation}
    where $a_i = \prod_{l=0}^L \alpha_i^{(l)}$ are the logit singular values, 
    $b_i = \sum_{j=1}^{K-1} \Psi_{ij} e^{-(\Psi a)_j }$, $ D = 1+ \sum_{j=1}^{K-1} e^{-(\Psi a)_j },$
      and $\Psi \in \mathbb{R}^{K-1 \times K-1}$ is the core of the matrix $1_{K} 1_{K}^T - \Phi$, constructed by deleting its first column and row.
\end{theorem}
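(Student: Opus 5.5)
The plan is to verify that the ansatz (fixed singular vectors, evolving diagonal singular values) is invariant under gradient flow, and then to reduce the flow to the logit singular values $a_i$.

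\textbf{Step 1: the logit gradient.} Write $Z = W_L\cdots W_1 H_1 = U D V^T$ with $D = D_L\cdots D_0$. Since $V = U\otimes Q$ and $Q$ is a right singular matrix of $1_n^T$, we may take the first column of $Q$ to be $1_n/\sqrt n$ and the remaining singular values zero. Then $Z = \sqrt{1/n}\,(U\,\mathrm{diag}(a)\,U^T)\otimes 1_n^T$ up to the convention on $n$. Hence $Z$ satisfies NC1, and its class-mean matrix $\bar Z = U\,\mathrm{diag}(a)\,U^T$ is diagonalized by the normalized Sylvester Hadamard matrix.

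The CE gradient is $\partial\mathcal L/\partial Z = (P - Y)$, where $P$ is the softmax matrix. By NC1 this equals $G\otimes 1_n^T$, with $G = \mathrm{softmax}(\bar Z) - I_K$. The key fact, borrowed from the $L=1$ analysis of \citet{Garrod2025DiagonalizingTS}, is that $\Phi$ is the character table of $\mathbb{Z}_2^m$. Therefore $\bar Z$ is a group-convolution matrix: $\bar Z_{c c'}$ depends only on $c\oplus c'$. Applying softmax columnwise preserves this convolution structure, since each column is a permutation of the first. So $G$ is also diagonalized by $U$, i.e.\ $U^T G U = \mathrm{diag}(g)$. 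The eigenvalues $g_i$ are computed by applying the characters to the first column of $\mathrm{softmax}(\bar Z)$.

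Since $\bar Z = U\,\mathrm{diag}(a)\,U^T$ with $U$ symmetric, the entries are $\bar Z_{c0} = \frac1K\sum_i \Phi_{ci} a_i$. Differences of logits are then $\bar Z_{00}-\bar Z_{j0} = \frac1K\sum_i (1-\Phi_{ji})a_i = \frac1K(\Psi a)_j$, where $a_0$ drops out because the first row and column of $1_K1_K^T-\Phi$ vanish. Consequently the softmax column is $e^{-(\Psi a)_j}/D$ (constants absorbed), and $-g_i \propto \frac1D\sum_j \Psi_{ij}e^{-(\Psi a)_j} = b_i/D$ for $i\ge1$. For $i=0$ we get $g_0=0$, so $a_0$ is stationary and drops out; this explains why only $i=1,\dots,K-1$ appear.

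\textbf{Step 2: invariance of the ansatz.} Compute $\dot W_l = -(W_L\cdots W_{l+1})^T (G\otimes 1_n^T) (W_{l-1}\cdots H_1)^T$. Substituting the SVD forms, all inner $R$'s cancel by orthogonality, and the $Q$-factor collapses because $(G\otimes 1_n^T)(U\otimes Q) = GU\otimes 1_n^TQ$. The result is $\dot W_l = -R_{l+1}\,\mathrm{diag}\!\big(g_i \prod_{k\ne l}\alpha_i^{(k)}\big) R_l^T$ (with $U$ in place of $R_{L+1}$ and $V$ in place of $R_0$). This is again of the ansatz form, so by uniqueness of ODE solutions the singular vectors remain fixed, and
\[
\dot\alpha^{(l)}_i = -c\, g_i \prod_{k\neq l}\alpha^{(k)}_i .
\]

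\textbf{Step 3: reduction to $a_i$.} Balancedness (Eq.~\eqref{eq:balanced}), which is conserved under the flow, gives $\alpha^{(l)}_i = \alpha_i$ independent of $l$. Hence $a_i = \alpha_i^{L+1}$, and
\[
\dot a_i = (L+1)\,\alpha_i^{L}\,\dot\alpha_i = (L+1)(-c\,g_i)\,\alpha_i^{2L} = (L+1)(-c\,g_i)\,a_i^{2L/(L+1)} .
\]
Absorbing $(L+1)c$ and the $n$, $K$ factors into time yields Eq.~\eqref{eq:hada_logit_dynam}.

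\textbf{Main obstacle.} The main difficulty is Step 1: checking that softmax of a $\mathbb{Z}_2^m$-convolution matrix remains diagonalized by $\Phi$, with exactly the stated eigenvalues. I would handle it by writing $\mathrm{softmax}(\bar Z)_{cc'} = f(c\oplus c')$ and using the identity $\Phi_{ci}\Phi_{c'i} = \Phi_{(c\oplus c')i}$ for the Sylvester matrix. The remaining bookkeeping of normalization constants (the $\sqrt n$, $1/K$ and $L+1$ factors) is routine, since it is all absorbed by the time rescaling.
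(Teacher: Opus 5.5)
Your proposal is correct and follows essentially the same route as the paper: invariance of the fixed-singular-vector ansatz, diagonalization of the softmax by $U$ so that $1-\tilde\nu_i$ becomes $b_i/D$ via $\Psi$ with the $i=0$ mode frozen, then balancedness and a rescaling of $a$ and $t$. The one difference is that you derive the softmax diagonalization directly from the $\mathbb{Z}_2^m$ convolution structure, whereas the paper imports it as Lemma~\ref{lem:softmax_hadamard}.

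One small correction to Step 3: the balancedness you need is the unnormalized condition $W_l^TW_l = W_{l-1}W_{l-1}^T$, i.e.\ $(\alpha_i^{(l)})^2-(\alpha_i^{(r)})^2=0$. This is the quantity that is conserved, since $\frac{d}{dt}(\alpha_i^{(l)})^2$ is independent of $l$. The normalized Eq.~\eqref{eq:balanced} is not itself conserved, and it would only make the $\alpha^{(l)}$ proportional across layers rather than equal.
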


 \begin{figure*}
     \centering
     \begin{subfigure}{0.33\textwidth}
         \centering
         \includegraphics[width=\textwidth, trim=0 0 0 28pt, clip]{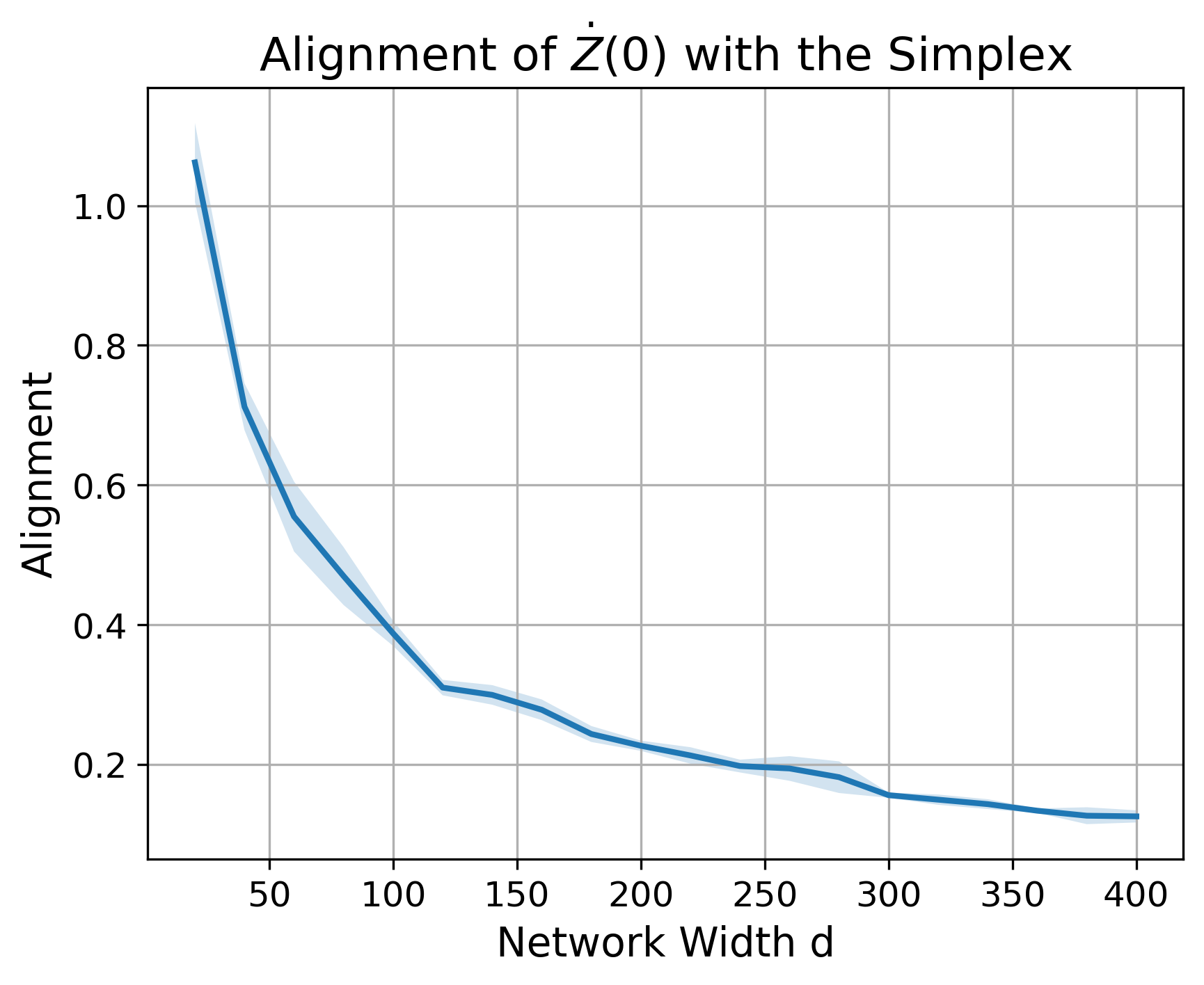}
     \end{subfigure}%
     \begin{subfigure}{0.33\textwidth}
         \centering
         \includegraphics[width=\textwidth, trim=0 0 0 24pt, clip]{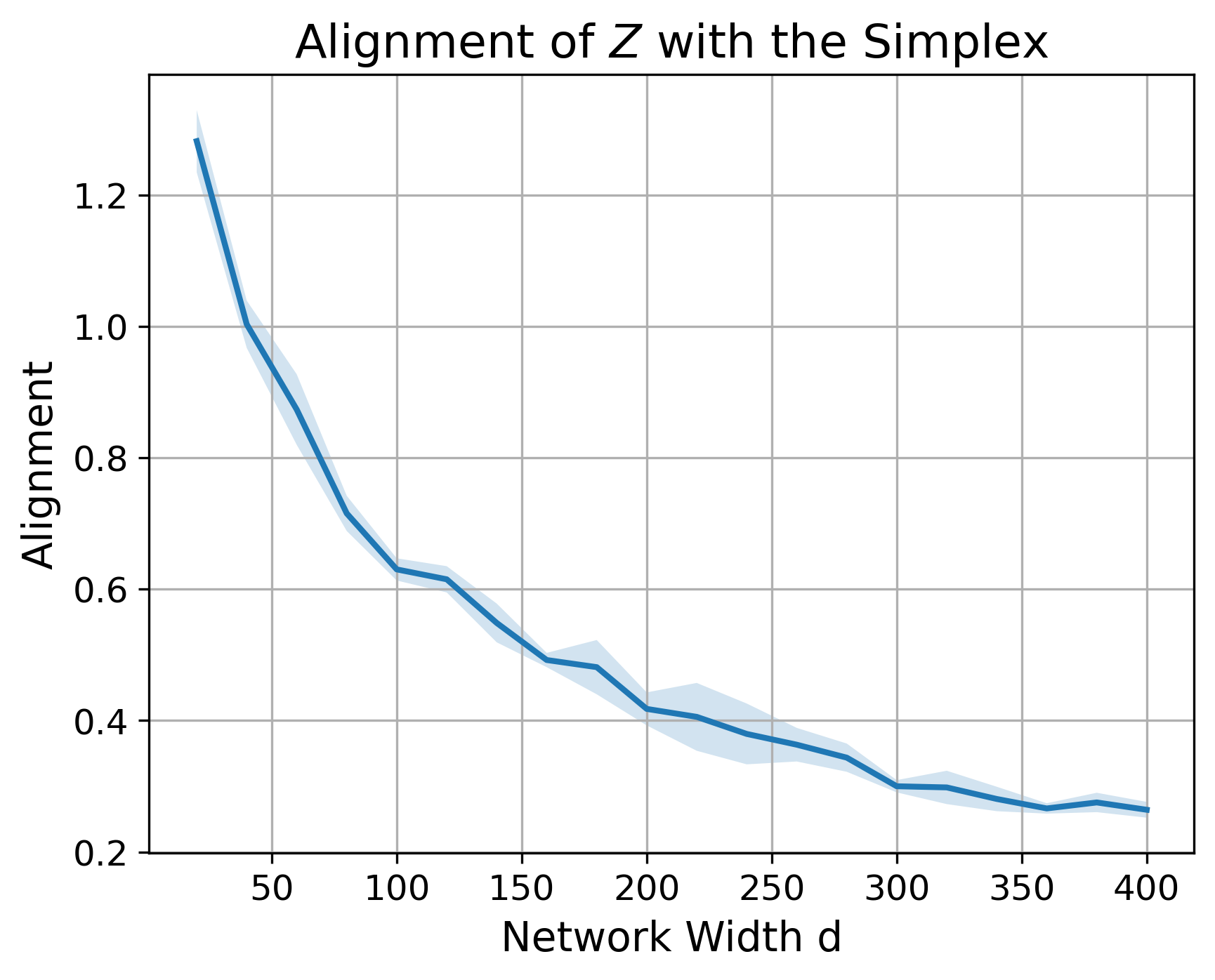}
     \end{subfigure}%
     \begin{subfigure}{0.33\textwidth}
         \centering
         \includegraphics[width=\textwidth, trim=0 0 0 24pt, clip]{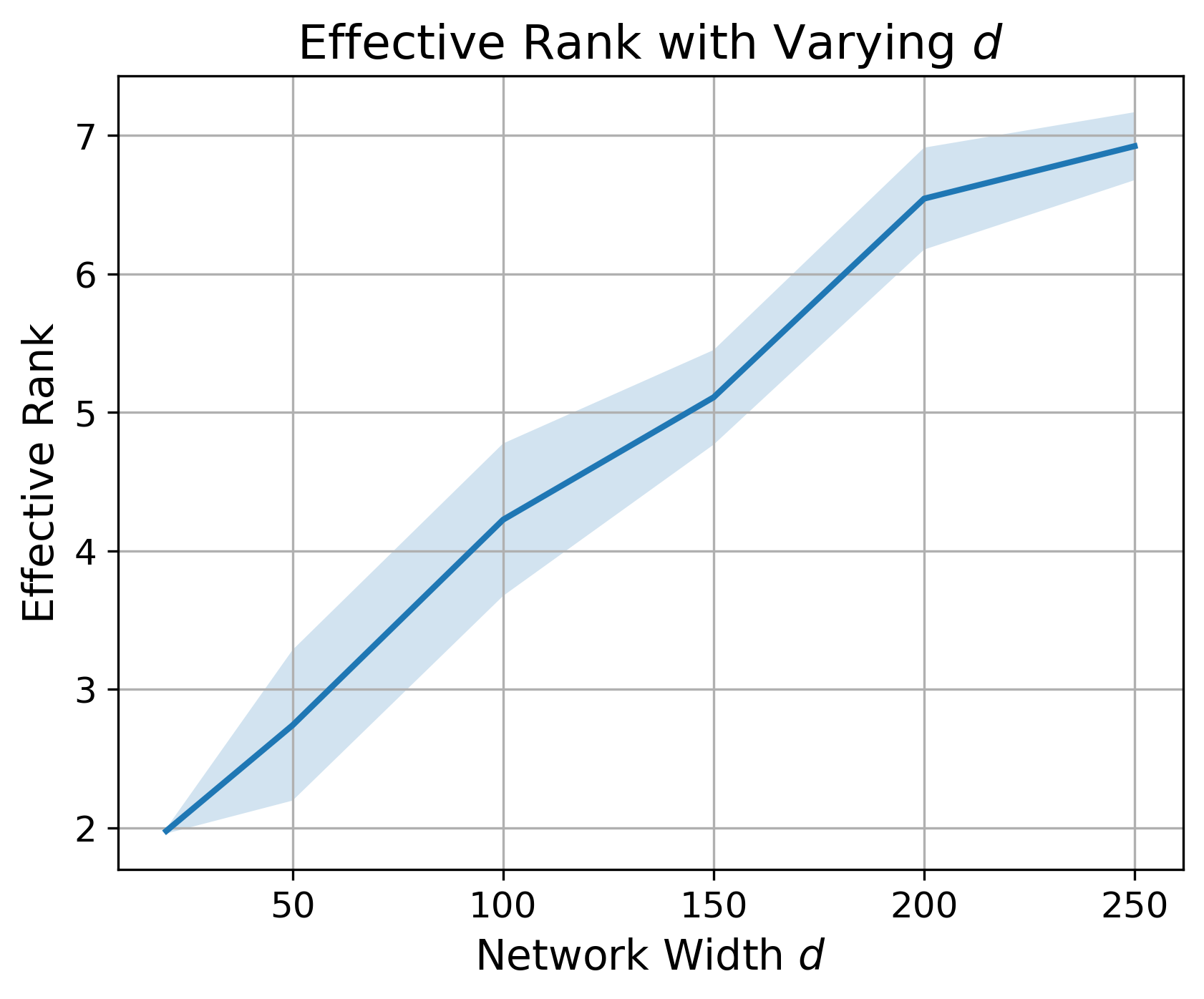}
     \end{subfigure}%
     \caption{Impact of network width on network training.  Depicted are the mean and one-standard-deviation error bars over five runs. \textbf{Left:} Frobenius distance metric $M$ (see \eqref{eq:frobenius_distance_metric} in App.) between the logit-matrix derivative $\frac{dZ}{dt}(0)$ and the simplex ETF. \textbf{Middle:}  Same metric for the logits after training has induced the logit norm to increase by a scale factor. \textbf{Right:} Effective rank of the logit matrix after training.}
     \label{fig:initialization_experiments}
     \vspace{-0.2in}
 \end{figure*}

The proof appears in App. \ref{sec:general_L_hada_dynamics_proof}. As in the $L=1$ case \citep{Garrod2025DiagonalizingTS}, the first singular value $a_0$ of the logit matrix does not influence the evolution of the remaining singular values nor evolves itself and can be set to zero; we henceforth drop it from our notation, so that $a \in \mathbb{R}^{K-1}$.

We are ultimately interested in the normalized logit matrix (the direction). Thus, normalizing with respect to the $L_1$ norm yields dynamics for $\hat{a}_i=a_i/\|a\|_1$:
$$\frac{d \hat{a}_i}{dt} = \frac{1}{D \|a\|_1} \Big[ b_i a_i^{\frac{2L}{L+1}} - \hat{a}_i \sum_j b_j a_j^{\frac{2L}{L+1}} \Big].$$
In this context, DNC corresponds to $\hat{a} \propto 1_{K-1}$, since $\Phi \textrm{diag}(0,1,...,1)\Phi^T=S$, where we recall that $a_0=0$. Moreover,
we say a direction $\hat{a}^*$ is \emph{stable} if the Euclidean distance from $\hat{a}$ to $\hat{a}^*$ decreases when sufficiently small.


\subsection{Instability of Neural Collapse}

For $L = 1$, \citet{Garrod2025DiagonalizingTS} proved that the KL divergence $D_{\mathrm{KL}}(\frac{1}{K-1} 1_{K-1} \,\|\, \hat{a})$ decreases monotonically, serving as a Lyapunov function that forces all strictly positive initializations to converge to NC. Since this KL distance diverges at any low-rank solution, the dynamics must consistently move away from such solutions. They also showed NC is locally stable throughout the trajectory.
We prove (see App.~\ref{sec:no_lyapunov_proof}) that both properties fail for $L > 1$:


\begin{theorem} \label{thm:no_lyapunov}
    Let $K=2^m$ for $m \in \mathbb{N}$ with $m>1$. Let $a(t) \in \mathbb{R}^{K-1}$ evolve according to Eq. \eqref{eq:hada_logit_dynam} with initialization $a_i(0)>0$ for all $i$, and let $L \in \mathbb{N}$. Then:
    \begin{enumerate}[label=(\roman*),noitemsep,topsep=0pt,leftmargin=*]
        \item The direction $\hat{a}=\frac{1}{K-1}1_{K-1}$ is stable when $\|a\|_1>\frac{L-1}{L+1}(K-1)$, but unstable below this threshold.
        \item When $L>1$, there exists alternative low-rank structures that are also stable above a threshold in $\|a\|_1$.
        \item For $L>1$, the KL-divergence $D_{\mathrm{KL}}(\frac{1}{K-1} 1_{K-1} \,\|\, \hat{a})$ is no longer monotonically decreasing, and there exists gradient-flow trajectories along which it diverges.
    \end{enumerate}
\end{theorem}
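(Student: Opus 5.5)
The plan is to prove (i) by linearizing the normalized dynamics around the DNC direction, (ii) by repeating that linearization on an invariant low-rank face, and (iii) by feeding a trajectory into the stable low-rank structure of (ii). Throughout, write $p=\frac{2L}{L+1}$; the only place depth enters is through $p>1$ when $L>1$. Two algebraic facts about $\Psi$ do most of the work. First, $\Psi$ is symmetric with entries in $\{0,2\}$, and $\Psi 1_{K-1}=K 1_{K-1}$, since each non-first row of $\Phi$ has $K/2$ entries equal to $-1$. Second, $\Psi^2 v = K v$ for every $v\perp 1_{K-1}$. To see the second fact, extend $v$ by a zero first coordinate to $\tilde v$. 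Then $\Psi v=-(\Phi\tilde v)_{\text{core}}$, and $(\Phi\tilde v)_0=\sum_i v_i=0$, so $\Phi^2=KI$ gives $\Psi^2 v=Kv$.

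\textbf{Step 1: part (i).} Set $a=c1_{K-1}+\varepsilon v$ with $v\perp 1$, so that $c=\|a\|_1/(K-1)$.
\begin{itemize}[noitemsep,topsep=0pt,leftmargin=*]
\item At the base point, $(\Psi a)_j=cK$ for all $j$, hence $b_i=Ke^{-cK}$ for all $i$.
\item To first order, $b_i$ changes by $-\varepsilon e^{-cK}(\Psi^2 v)_i=-\varepsilon Ke^{-cK}v_i$.
\item Similarly, $a_i^p$ changes by $\varepsilon p c^{p-1}v_i$.
\item The sum $\sum_j b_j a_j^p$ has no first-order change, because $\Psi$ preserves $1^\perp$.
\item The term $\hat a_i\sum_j b_ja_j^p$ contributes $-\varepsilon v_i K e^{-cK}c^{p-1}$.
\end{itemize}
Collecting terms, the perturbation obeys
\[
\dot v\propto Ke^{-cK}c^{p}\Big(\tfrac{p-1}{c}-1\Big)v
\]
with a positive prefactor. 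It therefore decays iff $c>p-1=\frac{L-1}{L+1}$, which is exactly the threshold $\|a\|_1>\frac{L-1}{L+1}(K-1)$. Strictly, this is slow-time stability, since $\|a\|_1$ itself drifts upward. I would phrase stability at a frozen norm, following the definition in Section~\ref{sec:hada_framework}.

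\textbf{Step 2: part (ii).} I would take the support $S$ to be the nonzero elements of an index-$2$ subgroup of $\mathbb{F}_2^m$, identified with the indices of the Sylvester matrix. This gives a rank-$(K/2-1)$ structure with $a$ constant on $S$ and zero off $S$.
\begin{itemize}[noitemsep,topsep=0pt,leftmargin=*]
\item \emph{Off-support directions.} Each such coordinate evolves as $\dot a_i\propto b_ia_i^{p}$ with $p>1$. Its relative growth rate $b_ia_i^{p-1}$ vanishes as $a_i\to0$, while on-support coordinates grow at a strictly positive relative rate $b_ja_j^{p-1}$, provided $b_j>0$. Hence small off-support mass decays in the normalized variables. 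This is the step that fails for $L=1$, where $p=1$.
\item \emph{On-support directions.} Here I expect $\Psi_{S\times S}$, after accounting for the off-support contributions to $(\Psi a)_j$, to reduce to a scaled copy of the core for the $K/2$ Sylvester matrix. Repeating the Step 1 computation should then give stability above a norm threshold.
\end{itemize}
Checking $b_j>0$ on $S$, and checking this restricted spectral structure, is the main technical obstacle. If the index-$2$ subgroup is awkward, I would fall back to any support $S$ with $S\cup\{0\}$ a subgroup. The Walsh characters restricted to $S\cup\{0\}$ are again Hadamard, up to multiplicity, so the same identity $\Psi^2=K'I$ should hold on the relevant complement.

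\textbf{Step 3: part (iii).} For KL divergence, initialize near the stable low-rank direction of Step 2, with $\|a\|_1$ above its threshold and small positive off-support entries. By (ii), the normalized off-support coordinates tend to $0$ along this trajectory. Consequently $D_{\mathrm{KL}}(\frac{1}{K-1}1\,\|\,\hat a)\ge\frac{1}{K-1}\sum_{i\notin S}\log\frac{1}{(K-1)\hat a_i}+O(1)\to\infty$, which diverges. This also shows the KL divergence is not monotonically decreasing. For a finite-time non-monotone example, one can alternatively start at small norm near DNC and invoke the instability in (i).
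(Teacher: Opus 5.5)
Your Step 1 is correct and is essentially the paper's computation: you perturb around $\mu 1_{K-1}$, use $\Psi^2=K(I+1_{K-1}1_{K-1}^T)$, and read off the sign of $\frac{L-1}{L+1}-\mu$. The genuine gap is in Step 2, and it propagates into Step 3.

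\textbf{The support in Step 2 does not separate the classes.} Note that $(\Psi a)_j=Z_{cc}-Z_{c\oplus j,c}$ for $Z=\Phi\,\mathrm{diag}(0,a)\Phi^T$, so $(\Psi a)_j$ is the margin against class $c\oplus j$. Your support is $S=H\setminus\{0\}$ with $H=w^{\perp}$. Every $u\in S$ has $w\cdot u=0$, so on this face $(\Psi a)_w=2\sum_{u\notin H}a_u=0$, and classes $c$ and $c\oplus w$ tie. This is exactly the obstruction in Proposition~\ref{prop:hada_rank}. For $K=4$ your structure is even rank one, which Lemma~\ref{lem:diag_superior} excludes. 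So no rescaling of it is feasible for \eqref{eq:KKT_constrained_prob}, and it cannot be a competing limit structure.

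\textbf{It is also not stable above a norm threshold.} At $a=\mu 1_S$, the tied pair's softmax weight $e^{-(\Psi a)_w}=1$ never decays. Every off-support $i\notin H$ therefore has $b_i=2+(K-2)e^{-K\mu/2}$, against $b_j=Ke^{-K\mu/2}$ on $S$. More generally, on the invariant family $a=\mu 1_S+\rho\mu 1_{H^c}$ with $\rho\in(0,1)$,
\[
\frac{b_i}{b_j}=\frac{2}{K}e^{K\mu(1-\rho)/2}+\frac{K-2}{K}.
\]
With your $p=\frac{2L}{L+1}$, this gives $\rho^{p-1}b_i/b_j\to\infty$ as $\mu$ grows, so off-support modes eventually outgrow on-support ones.

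Your frozen-norm observation is true: $b_ia_i^{p-1}\to0$ as $a_i\to0$, and $b_j>0$ is automatic since $\Psi\ge0$ entrywise. But the admissible perturbation size shrinks like $e^{-K\mu/(2(p-1))}$. Along the flow, the off-support modes obey $\dot a_i\gtrsim\frac{2}{K}a_i^{p}$ and reach order one in finite time. The on-support modes grow only logarithmically in time, so trajectories leave the face. Your criterion would in fact certify every coordinate face, rank-one ones included. What it misses is whether all margins $(\Psi a)_j$ grow with the norm. Your fallback (any $S$ with $S\cup\{0\}$ a subgroup) fails the same way unless the subgroup is all of $\mathbb{F}_2^m$, which is DNC.

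\textbf{The fix.} Take the nontrivial coset $\mathbb{F}_2^m\setminus H$ instead of $H\setminus\{0\}$, as the paper does. With $w$ the binary string of $1$, this is the set of odd indices (the paper's vector $c$), giving the rank-$K/2$ cross-polytope \eqref{eq:cross_poly}. Then $\Psi c=\frac{K}{2}(e_1+1_{K-1})$ and every margin is at least $K\mu/2$. Off-support $b$ is $Ke^{-K\mu/2}$; on-support it is $(K-2)e^{-K\mu/2}+2e^{-K\mu}$. The exponentials cancel in the comparison, so the off-support condition becomes a norm-independent bound on the ratio. On-support equalization then gives the threshold $\mu>\frac{L-1}{L+1}$ (Lemma~\ref{lem:cross_poly_computation}).

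\textbf{Step 3 needs more than (ii), even with the corrected structure.} Local stability (distance to $\hat a^*$ decreasing when small) does not show that the normalized off-support coordinates tend to $0$ along a trajectory. You need a forward-invariant region and a convergence argument. The paper uses the invariant family $a=\gamma c+\delta(1_{K-1}-c)$. On it, Lemma~\ref{lem:KL_div_counter} makes $\frac{d}{dt}(\gamma/\delta)$ explicit and positive whenever $\gamma/\delta>(K/(K-2))^{(L+1)/(L-1)}$, so that region is invariant. Convergence to a KKT direction (Lemma~\ref{lem:KKT_reduct}) then forces $\gamma/\delta\to\infty$, so the KL divergence diverges.
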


For $L=1$ the NC direction is stable for all $\|a\|_1$, but for $L>1$ it is unstable near the origin, with the unstable region growing monotonically in $L$. While DNC becomes stable at larger scales, other stable directions, corresponding to low-rank structures, coexist.  Thus, in the context of the model, depth induces a low-rank bias not only in the asymptotic landscape (Section~\ref{sec:implicit bias}) but also along finite-time trajectories.
Crucially, for $L>1$ the KL divergence is not only non-monotone but can diverge along certain GF paths. This occurs when trajectories converge towards low-rank solutions, confirming that DNC is not guaranteed as an outcome of training.

\subsection{The “Rich-get-Richer” in the Linearized Regime}\label{sec:rich}

 \begin{figure*}[!t]
     \centering
     \begin{subfigure}{0.31\textwidth}
         \centering
         \includegraphics[width=\textwidth]{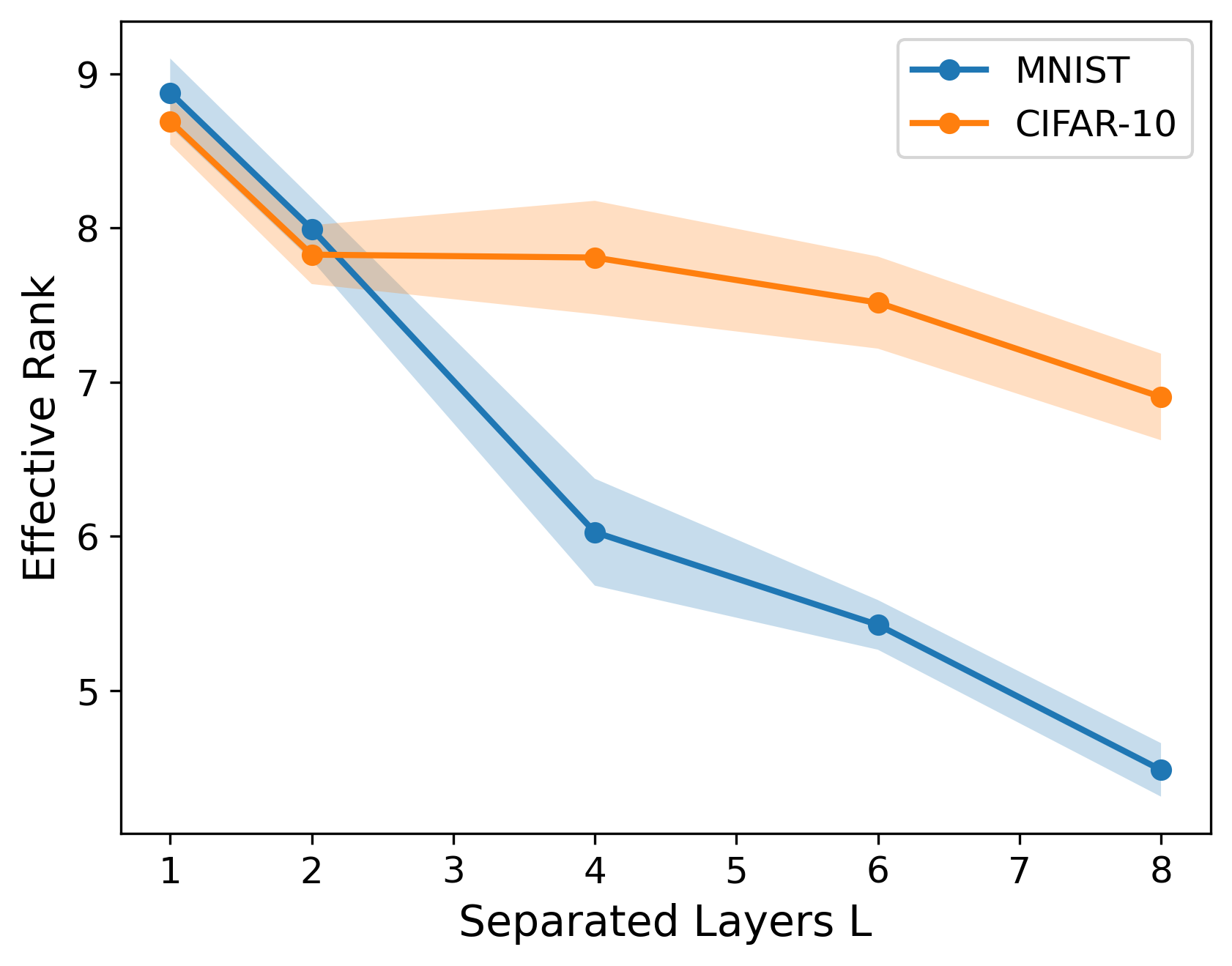}
     \end{subfigure}%
     \begin{subfigure}{0.33\textwidth}
         \centering
         \includegraphics[width=\textwidth]{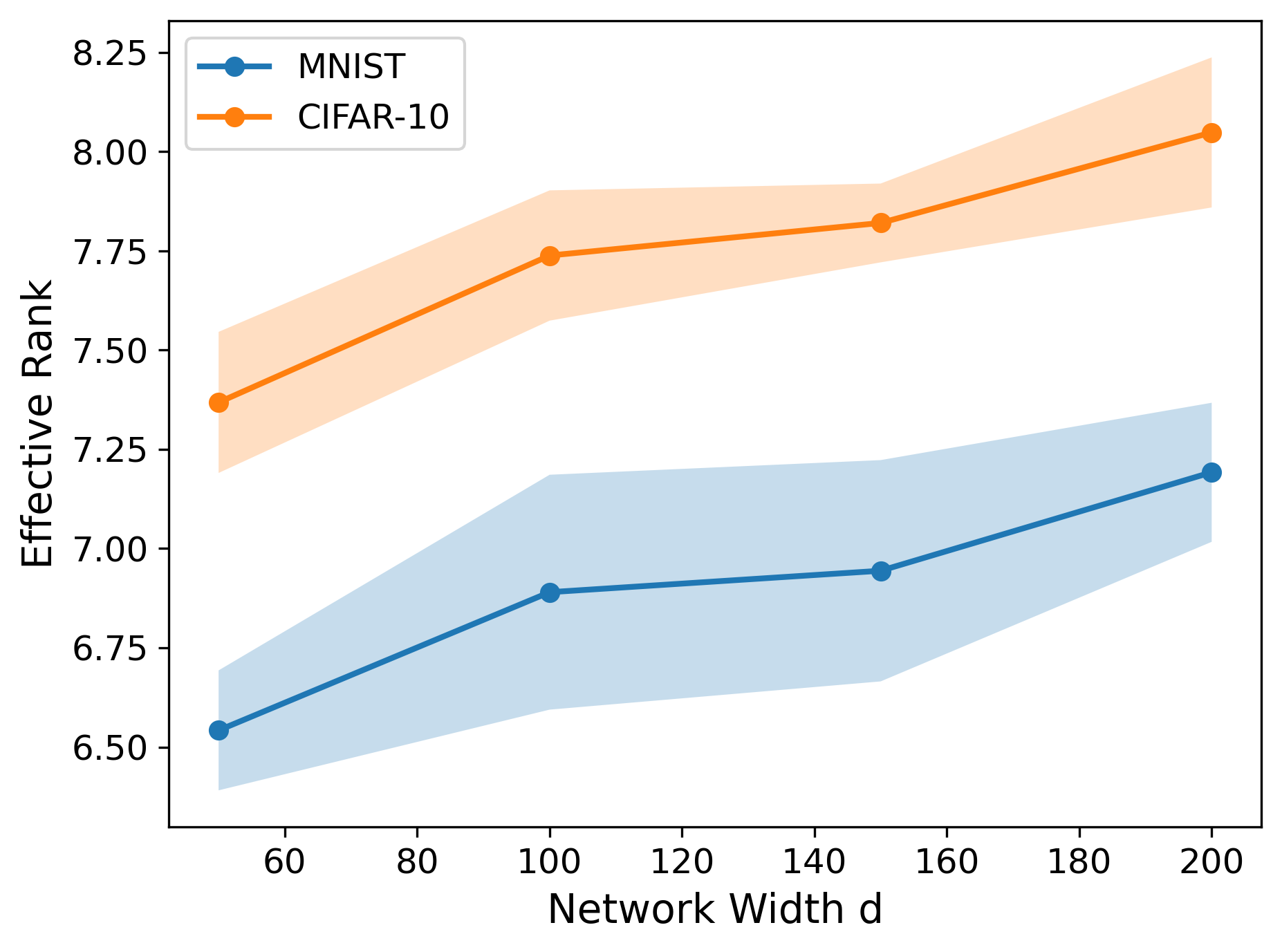}
     \end{subfigure}%
     \begin{subfigure}{0.33\textwidth}
         \centering
         \includegraphics[width=\textwidth]{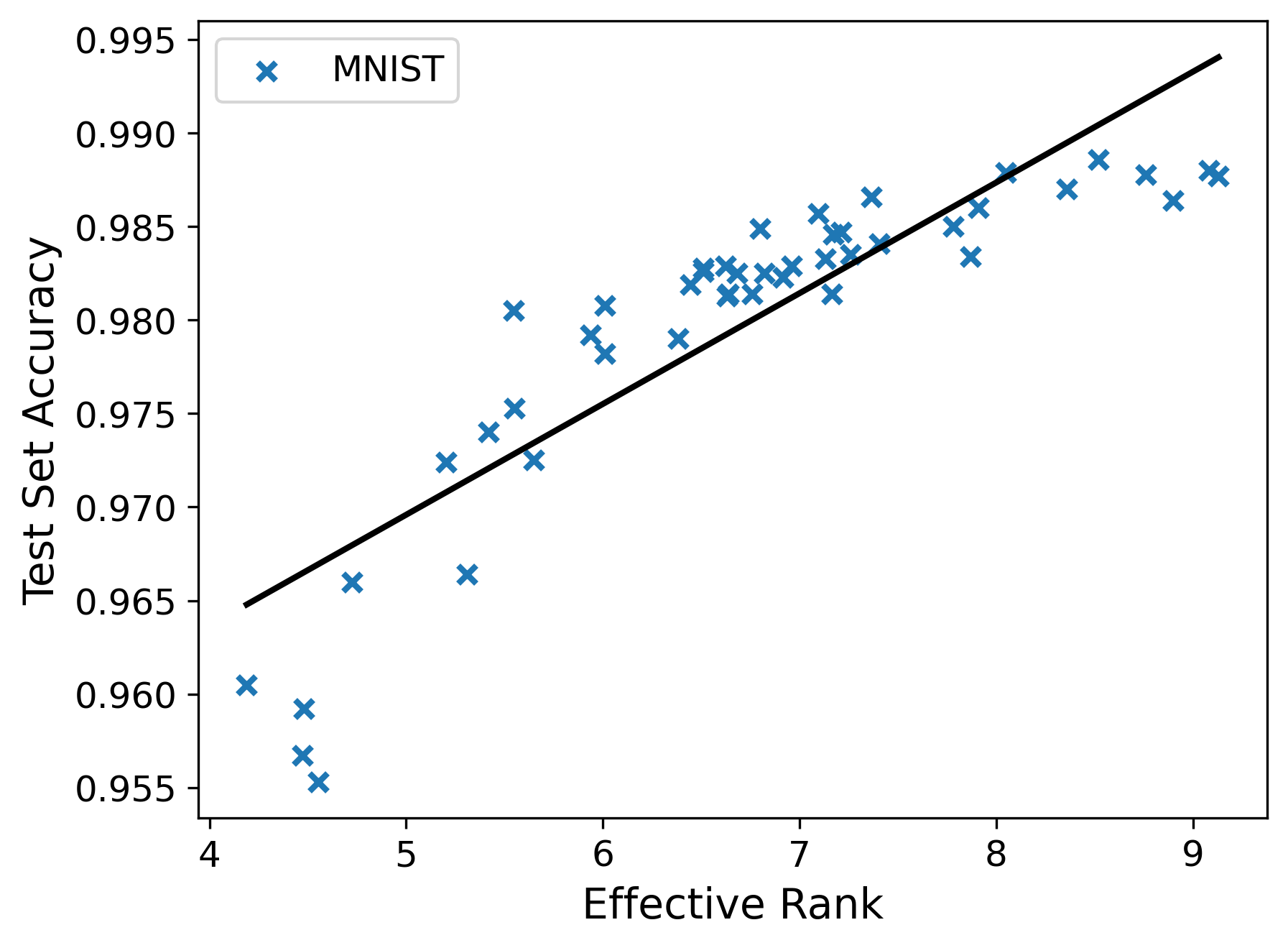}
     \end{subfigure}%
     \caption{Confirmation of theory on the MNIST and CIFAR-10 datasets using the ResNet-20 architecture with a ReLU head. We report the average effective rank of the mean logit matrix at the end of training and one-standard-deviation error bars over five runs. \textbf{Left:} Here the ReLU head has width $d=50$ and a variety of depths $L$. \textbf{Middle:} Here the ReLU head has depth $L=3$ for a variety of widths. \textbf{Right:} Scatter plot of the generalization performance against the effective rank of the mean logit matrix across the forty five MNIST models of the previous two plots. We also show the line of best fit to show the general trend.}
     \label{fig:real_nets}
     \vspace{-0.15in}
 \end{figure*}

To understand \emph{why} DNC is unstable near the origin for $L > 1$, we analyze the small-norm regime $\|a\|_1 \ll 1$.


\begin{theorem} \label{thm:lin_hada_dynam}
    Let $K=2^m$ for $m \in \mathbb{N}$. Let $a(t) \in \mathbb{R}^{K-1}$ evolve according to Eq. \eqref{eq:hada_logit_dynam} with initialization $a_i(0)>0$ for all $i=1,...,K-1$, and let $L \in \mathbb{N}$. To linear order in the scale $\|a\|_1$, the dynamics reduce to:
    $$\frac{da_i}{dt} = a_i^{\frac{2L}{L+1}} (1- a_i) + O \big( \|a\|_1^{2+\frac{2L}{L+1}} \big).$$
    As a consequence, when $\|a\|_1$ is small:
    \begin{itemize}[noitemsep,topsep=0pt,leftmargin=*]
        \item If $L=1$, then $\frac{d}{dt} \left(\frac{\hat{a}_i}{\hat{a}_j} \right)$ has the opposite sign to $\hat{a}_i - \hat{a}_j$. Consequently, unequal modes are driven toward each other, and DNC is the only stable direction under the linearized dynamics.
        \item If $L>1$, then $\frac{d}{dt} \left(\frac{\hat{a}_i}{\hat{a}_j} \right)$ has the same sign as $\hat{a}_i - \hat{a}_j$. Consequently, DNC is a critical direction, but is not stable under the linearized dynamics.
    \end{itemize}
\end{theorem}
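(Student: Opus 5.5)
The plan is to Taylor-expand the exact right-hand side of Eq.~\eqref{eq:hada_logit_dynam} in $a$ around the origin, using a few algebraic identities for $\Psi$ that follow from $\Phi^2 = K I_K$. Then I will read off the sign of the evolution of the ratios $a_i/a_j$.

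\textbf{Step 1 (identities for $\Psi$).} Every row of the Sylvester matrix other than the first sums to zero, and the first column of $\Phi$ is all ones. Writing $\Phi_c$ for the core of $\Phi$ (first row and column deleted) and $J = 1_{K-1}1_{K-1}^T$, this gives $\Phi_c 1_{K-1} = -1_{K-1}$. Since $\Psi = J - \Phi_c$, it follows that $\Psi 1_{K-1} = K\,1_{K-1}$. Next, restricting $\Phi^2 = K I_K$ to indices $i,j \ge 1$ and separating the $k=0$ term gives $\Phi_c^2 = K I - J$. Combining these, $\Psi$ is symmetric with
\[
\Psi^2 = (K-1)J + 2J + K I - J = K(I + J).
\]

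\textbf{Step 2 (expansion).} For $\|a\|_1$ small, write $e^{-(\Psi a)_j} = 1 - (\Psi a)_j + O(\|a\|_1^2)$.

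\begin{itemize}[noitemsep,topsep=0pt,leftmargin=*]
\item Using $1^T\Psi = K 1^T$, the denominator is $D = K - K\|a\|_1 + O(\|a\|_1^2)$.
\item Using the row sums and $\Psi^2 = K(I+J)$, the numerator is $b_i = \sum_j \Psi_{ij} - (\Psi^2 a)_i + O(\|a\|_1^2) = K(1 - \|a\|_1 - a_i) + O(\|a\|_1^2)$, where the constant $K$ is the row sum and $K(a_i + \|a\|_1)$ is $(\Psi^2 a)_i$.
\end{itemize}

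Dividing, $b_i/D = 1 - a_i + O(\|a\|_1^2)$. After absorbing the constant into time (as the statement already does), this gives the claimed form
\[
\frac{da_i}{dt} = a_i^{\frac{2L}{L+1}}(1 - a_i) + O\big(\|a\|_1^{2+\frac{2L}{L+1}}\big).
\]
Here I use that each $a_i^{2L/(L+1)} \le \|a\|_1^{2L/(L+1)}$ to bound the error multiplying the quadratic remainder.

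\textbf{Step 3 (ratios).} Set $p = \tfrac{2L}{L+1}$ and note $\hat a_i/\hat a_j = a_i/a_j$. Then
\[
\frac{d}{dt}\log\frac{a_i}{a_j} = a_i^{p-1}(1-a_i) - a_j^{p-1}(1-a_j) + (\text{remainder}),
\]
with $p - 1 = \tfrac{L-1}{L+1}$.

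\begin{itemize}[noitemsep,topsep=0pt,leftmargin=*]
\item \emph{Case $L=1$.} Here $p - 1 = 0$, so the leading term is exactly $-(a_i - a_j)$. This has sign opposite to $\hat a_i - \hat a_j$, so unequal modes contract toward each other and the equal direction is the only stable one.
\item \emph{Case $L>1$.} The leading term is $a_i^{p-1} - a_j^{p-1}$, which has the same sign as $a_i - a_j$ because $t \mapsto t^{p-1}$ is strictly increasing. It is of order $\|a\|_1^{p-1}$, which dominates the $O(\|a\|_1)$ correction $-(a_i^p - a_j^p)$. Hence larger modes grow relatively faster.
\end{itemize}

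\textbf{Step 4 (DNC as a critical but unstable direction).} In the exact dynamics, if $a \propto 1_{K-1}$ then $\Psi a \propto \Psi 1_{K-1} = K 1_{K-1}$, so all $b_i$ are equal and the ratios stay equal to one. Thus DNC is invariant, i.e.\ a critical direction. Linearizing the ratio ODE around equal values, with $a_i = c(1 + \epsilon_i)$, the leading coefficient is $(p-1)c^{p-1} > 0$. This gives instability for $L > 1$ and the opposite sign, hence stability, for $L = 1$.

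\textbf{Main obstacle.} The delicate step is controlling the remainder relative to the leading sign term when $a_i - a_j$ is itself small; the sign claims are only meaningful to the stated order. I would handle this by keeping the remainder in difference form. The remainder is a smooth, symmetric function of the modes, so its contribution to $\frac{d}{dt}\log(a_i/a_j)$ is $O(\|a\|_1^2)\cdot|a_i - a_j|/\min(a_i,a_j)$. The leading term, by contrast, is of order $\|a\|_1^{p-2}|a_i - a_j|$ for $L>1$ and $|a_i - a_j|$ for $L=1$, so the leading term wins for $\|a\|_1$ small with the ratios $\hat a$ bounded away from zero.
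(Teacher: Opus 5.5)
Your Steps 1--4 follow the paper's proof. You use the same first-order expansion of $D$ and $b_i$ via $\Psi 1_{K-1}=K1_{K-1}$ and $\Psi^2=K(I+J)$; you derive these from $\Phi^2=KI_K$, whereas the paper simply quotes them. This yields $b_i/D=1-a_i+O(\|a\|_1^2)$, so in fact no constant is left to absorb into time. You then make the same comparison of $a_i^{\frac{L-1}{L+1}}(1-a_i)$ across modes. Your explicit check that $a\propto 1_{K-1}$ is invariant under the exact flow is a welcome addition.

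The fix proposed in your ``Main obstacle'' paragraph, however, is wrong. The $O(\|a\|_1^2)$ remainder in $b_i/D$ is not ``a smooth, symmetric function of the modes'' in the sense you need. The Hadamard structure gives equivariance under the index permutations $u\mapsto Au$ with $A\in GL(m,\mathbb{F}_2)$ (indices read in $\mathbb{F}_2^m$), but not under arbitrary permutations of the modes; for $m\ge 3$ that group contains no transposition. Concretely, using $\Psi_{uv}=1-(-1)^{u\cdot v}$, the second-order part of $b_i$ is
\[
\tfrac12\sum_k \Psi_{ik}(\Psi a)_k^2=\tfrac K2\Bigl[\|a\|_1^2+2a_i\|a\|_1+\|a\|_2^2-\sum_{u\neq i}a_u a_{u\oplus i}\Bigr],
\]
and the XOR-pairing sum depends on $i$ through more than $a_i$.

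Take $K=8$, $i=1$, $j=2$ and $a_1=a_2$. Then $(b_1-b_2)/D=-(a_4-a_7)(a_5-a_6)+O(\|a\|_1^3)$. So the exact derivative $\frac{d}{dt}\log(a_1/a_2)=\frac{b_1-b_2}{D}\,a_1^{\frac{L-1}{L+1}}$ is nonzero, of order $\|a\|_1^{1+\frac{2L}{L+1}}$. This happens at a point where $\hat a_1=\hat a_2$ and all $\hat a_u$ are bounded away from zero. Your bound $O(\|a\|_1^2)\,|a_i-a_j|/\min(a_i,a_j)$ therefore fails for $K\ge 8$. It is fine for $K=4$, where every permutation of the three modes is realized by $GL(2,\mathbb{F}_2)$.

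Consequently the sign claims cannot be transferred to the exact flow uniformly in $\hat a$. What can be proved is what the paper proves. Either the sign claims hold for the truncated dynamics $\dot a_i=a_i^{\frac{2L}{L+1}}(1-a_i)$; the paper's $L=1$ bracket $[\hat a_i-\hat a_j+O(\epsilon)]$ carries exactly this caveat. Or they hold for the exact flow under a separation condition: roughly $|\hat a_i-\hat a_j|\gg\|a\|_1^2$ when $L>1$, and $|\hat a_i-\hat a_j|\gg\|a\|_1$ when $L=1$.

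Near the DNC direction itself the asymmetry disappears at first order in the perturbation. The Jacobian of $b$ at $a=c1_{K-1}$ is $-Ke^{-cK}(I+J)$, and the gradient of $D$ there is a multiple of $1_{K-1}$. So your Step 4 conclusion is unaffected.
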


The proof appears in App. \ref{sec:lin_hada_dynam_proof}. 
The theorem identifies the mechanism behind DNC instability: A ``rich-get-richer'' effect that emerges for $L > 1$. Near initialization,
larger singular values grow relatively faster than smaller ones, amplifying initial disparities. This promotes the development of approximate low-rank structure in the logit matrix. By the time the dynamics exit the linearized regime, the system is more likely to lie in the basin of attraction of a low-rank direction than of DNC. These small-norm effects are particularly relevant for CE models due to the lazy dynamics that emerge as the norm grows \cite{Garrod2025DiagonalizingTS}.

Figure \ref{fig:rich_get_richer} illustrates this mechanism. Initially,  singular values evolve under this “rich-get-richer” effect, and the logit matrix becomes low-rank. Around $t \approx 5 \times 10^1$, the norm $\|a\|_1$ grows large enough that the linearized dynamics no longer apply, but by then the low-rank structure has already taken hold, and the trajectory settles into a low-rank optimum. 

\subsection{Random Initialization and Large Width is Biased Towards Neural Collapse}

The preceding sections established that depth alone—without $L_2$ regularization—suffices to make DNC suboptimal and to create stable low-rank alternatives. Yet empirically, DNC is observed ubiquitously in standard training, even with little or no regularization \citep{Papyan2020Prevalence,Parker2023NeuralCI}. Here we highlight how random initialization combines with large network width actually encourages higher rank outcomes.

\citet{Garrod2024ThePO} conjecture that DNC becomes more prevalent with increasing width, offering a heuristic explanation for the impact of width. However, they do not study optimization trajectories or whether DNC is dynamically reachable from typical initializations. Here we give a dynamical explanation: \emph{standard random initialization, together with large width, biases early training toward DNC}, steering optimization away from regions of the loss surface where low-rank optima are attractive.



%
\begin{theorem} \label{thm:rand_init}
    Consider the deep UFM described in Eq. \eqref{eq:general_deep_UFM} with parameter matrices initialized as $W_l = \epsilon B_l$ for $l=1,...,L$, and $H_1 = \epsilon B_0$, where each $B_l$ has entries sampled i.i.d. from a Gaussian distribution with mean $0$ and variance $\frac{1}{d}$. Then, in the joint limits $\epsilon \to 0$ and $d \to \infty$ we have the following convergence in probability
    $$\dot{Z}(0) \rightarrow \epsilon^{2L} (L+1) S \otimes 1_n^T.$$
\end{theorem}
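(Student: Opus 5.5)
The plan is to write $\dot Z(0)$ through the chain rule as a sum of $L+1$ terms, one per trainable factor. Then I show that each term concentrates, after dividing by $\epsilon^{2L}$, on $S\otimes 1_n^T$ in the joint limit.

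\textbf{Step 1: the logit velocity.} Write $W_0:=H_1$ and let $G:=\partial\mathcal{L}/\partial Z = \mathrm{softmax}(Z)-Y$, where $Y = I_K\otimes 1_n^T$ is the one-hot label matrix. Gradient flow gives $\dot W_l = -P_{>l}^T\, G\, P_{<l}^T$, with $P_{>l}=W_L\cdots W_{l+1}$ and $P_{<l}=W_{l-1}\cdots W_0$ (empty products are identities). Since $\dot Z=\sum_{l=0}^L P_{>l}\dot W_l P_{<l}$,
\begin{equation*}
\dot Z = -\sum_{l=0}^{L} \big(P_{>l}P_{>l}^T\big)\, G\, \big(P_{<l}^T P_{<l}\big).
\end{equation*}
At initialization $P_{>l}=\epsilon^{L-l}B_L\cdots B_{l+1}$ and $P_{<l}=\epsilon^{l}B_{l-1}\cdots B_0$. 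Hence every summand carries exactly the factor $\epsilon^{2(L-l)}\epsilon^{2l}=\epsilon^{2L}$.

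\textbf{Step 2: the loss gradient at small scale.} We have $Z(0)=\epsilon^{L+1}B_L\cdots B_0$, whose entries are $O_P(1)\cdot\epsilon^{L+1}$ uniformly in $d$. This is because each normalized Gaussian factor preserves norms of fixed vectors in expectation. Therefore $\mathrm{softmax}(Z(0))=\frac1K 1_K 1_{nK}^T+O_P(\epsilon^{L+1})$, which gives
\begin{equation*}
G = -\Big(I_K-\tfrac1K 1_K1_K^T\Big)\otimes 1_n^T + O_P(\epsilon^{L+1}) = -S\otimes 1_n^T + O_P(\epsilon^{L+1}).
\end{equation*}

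\textbf{Step 3: concentration of the Gram factors (main obstacle).} I need $M_l:=(B_L\cdots B_{l+1})(B_L\cdots B_{l+1})^T\to I_K$ and $N_l:=(B_{l-1}\cdots B_0)^T(B_{l-1}\cdots B_0)\to I_{nK}$ in probability as $d\to\infty$. Both matrices have fixed size, $K\times K$ and $nK\times nK$ respectively, while the inner dimension $d$ grows.

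I would argue by induction on the number of factors, conditioning on the inner factors. For $N_l$, write $C=B_{l-2}\cdots B_0$. Conditionally on $C$, the independent factor $B_{l-1}$ satisfies
\begin{equation*}
\mathbb{E}\big[C^TB_{l-1}^TB_{l-1}C \,\big|\, C\big]=C^TC.
\end{equation*}
Each entry of this Gram matrix is an average over $d$ independent rows, so by Gaussian moment formulas its conditional variance is $O(\|C\|^4/d)$. Chebyshev's inequality together with the inductive hypothesis $C^TC\to I$ then gives $N_l\to I_{nK}$. The argument for $M_l$ is symmetric: peel off factors from the left, starting with $B_L$, which has $K$ fixed rows. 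Products of at most $L+1$ factors, with $L$ fixed, keep all error terms $O_P(d^{-1/2})$.

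\textbf{Step 4: assembly.} Substituting Steps 2 and 3 into Step 1 yields
\begin{equation*}
\epsilon^{-2L}\dot Z(0) = \sum_{l=0}^L \big(I_K+o_P(1)\big)\big(S\otimes 1_n^T+O_P(\epsilon^{L+1})\big)\big(I_{nK}+o_P(1)\big)\;\longrightarrow\;(L+1)\,S\otimes 1_n^T,
\end{equation*}
with the $o_P(1)$ terms vanishing as $d\to\infty$. The convergence is joint in $\epsilon\to0$ and $d\to\infty$ because the softmax error depends only on $\epsilon$ times a tight random quantity, and the Gram errors depend only on $d$. This is the stated limit, read as $\dot Z(0)/\epsilon^{2L}\to (L+1)S\otimes 1_n^T$.

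The only delicate point is Step 3, namely uniform tightness of the fixed-size Gram matrices of products of $d\times d$ Gaussian matrices. I expect the conditioning-plus-second-moment induction to suffice there, without any free-probability machinery.
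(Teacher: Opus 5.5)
Your proposal is correct and follows the paper's proof essentially step for step. You use the same chain-rule expression $\dot Z=\sum_{l}A_{l+1}A_{l+1}^T(Y-P)H_l^TH_l$ and the same conditioning, second-moment and Chebyshev induction to show that the fixed-size Gram factors concentrate at the identity (the paper's Lemmas \ref{lem:intermediate_conv_in_prob} and \ref{lem:conv_in_prob}), together with the same uniform-softmax limit $Y-P\to S\otimes 1_n^T$. Your one addition is to rescale by $\epsilon^{-2L}$ and track the $\epsilon$-dependent softmax error separately from the $d$-dependent Gram errors, which states the joint limit more precisely than the paper does.
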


The proof appears in App. \ref{sec:rand_init_proof}. 
The theorem shows that under random initialization and large network width, the initial velocity of the logits is biased toward alignment with the simplex ETF—the signature of DNC. This is due to a concentration of measure among the initial eigenvalues of the logit derivative. Although the result is stated for small $\epsilon$, the result holds whenever the softmax outputs are approximately uniform at initialization, which does not require vanishingly small logit norms.

By continuity of GF, the velocity remains aligned with $S$ for a nontrivial interval after initialization. Since $Z(0)$ has small norm and balancedness is preserved (keeping layers approximately aligned), the  singular values develop nearly equally, up to $\epsilon$- and $d$-suppressed corrections. Figure~\ref{fig:initialization_experiments} confirms this: increasing width strengthens the alignment of the initial logit derivative with the simplex ETF, and after brief training the logits inherit this alignment. Correspondingly, the effective rank at convergence increases, indicating that initialization meaningfully shapes the final  outcome.

Note that this initialization bias directly counteracts the rich-get-richer effect from Section~\ref{sec:rich}. While that effect amplifies disparities among singular values, its strength depends on how non-uniform they are at initialization. Because DNC remains a critical point of the linearized dynamics (Theorem~\ref{thm:lin_hada_dynam}), nearly equal singular values drift only weakly. Thus, if initialization places the network sufficiently close to DNC, the linearized dynamics will not move it far before norm growth pushes the system out of this regime. The network then enters the large-norm regime with high effective rank, where DNC is locally stable (Theorem~\ref{thm:no_lyapunov}) and hence can converge to DNC. This gives a dynamical explanation for why width affects the empirical persistence of DNC under standard training, and highlights the role of concentration-of-measure at initialization.

\section{Empirical Verification of Implicit Bias} \label{sec:experiments}

We evaluate our theory on MNIST \cite{MNIST} and CIFAR-10 \cite{CIFAR}. In each experiment, we train a ResNet-20 with a fully connected ReLU head of varying depth and width (see App. \ref{sec:further_numerical_experiments} for full details). The first panel of Figure \ref{fig:real_nets} shows that the effective rank of the mean logit matrix at convergence decreases with head depth, matching the deep UFM: depth induces low-rank structure. We also include the $L=1$ case, where all effective ranks are approximately $K-1$, which corresponds to NC. The low-rank bias effect is stronger on MNIST than on CIFAR-10, suggesting dataset complexity influences convergence behavior. The second panel shows that increasing width generally increases the effective rank on both datasets. 

The final panel connects generalization with effective rank at convergence. In general, lower effective rank is associated with poorer generalization, consistent with the idea that lower-rank solutions achieve smaller margins and therefore generalize less well. That said, the relationship is not absolute: randomly initialized networks can exhibit high effective rank while still generalizing poorly, whereas NC solutions have low rank yet generalize best among the solutions we observe. The key conclusion is that generalization deteriorates when optimization favors low-rank structure at the expense of the margins the network can achieve.

Overall, these results suggest our theory extends beyond the modeling setting to more general deep classification networks.

\section{Conclusion}
We have shown that depth alone induces an implicit low-rank bias in deep UFMs trained with CE loss, producing alternative low-rank structures, which we partially characterize. Our analysis provides the first rigorous account of implicit bias in these models.
This analysis opens several directions. Our infinite-depth theorem and empirics establish correspondence between low-rank optima and softmax codes; extending the theory to finite depth remains open. The low-rank alternatives also provide a natural control group for studying NC's role in generalization—unlike prior comparisons against less-trained models \citep{Papyan2020Prevalence}, they offer a cleaner baseline (Figure~\ref{fig:real_nets} provides preliminary evidence). Finally, the absence of these alternatives in \citet{Papyan2020Prevalence}'s experiments—most of which use residual connections, with VGG (closest to our setup) showing the weakest NC convergence—suggests skip connections may play a key role in NC's prevalence. Clarifying this mechanism is a promising direction for future work.

\paragraph{Limitations:}
Our work studies deep neural networks through a principled abstraction of the overparameterized limit. Although this model is empirically well supported in that regime (see App.~A of \citet{garrod2024unifying}), its applicability may weaken in underparameterized or highly constrained settings. Moreover, the UFM is primarily a model of optimization and therefore does not directly address generalization. We formulate and empirically validate hypotheses about generalization via the correlated notion of dataset margin, but a formal link would require additional modeling. Our real-network experiments further indicate that dataset complexity affects the observed phenomena, and other practical factors---such as batch size, learning rate, and initialization scale---may also influence how the theory manifests in practice. Finally, Theorems~\ref{thm:L_to_infty_optim} and~\ref{thm:rand_init} are asymptotic; obtaining rates or finite analogues is an important direction for future work.

Our finite-time training dynamics require Hadamard initialization, which is substantially more structured than random initialization and requires $K=2^m$ for some $m\in\mathbb{N}$. We provide evidence in App.~\ref{sec:had_vs_rand_experiments} that the same phenomenology persists under small random initialization, even for general $K$. Since Hadamard-initialized paths are genuine trajectories of the loss surface, they remain informative about stability. Nevertheless, formally extending these arguments to random initialization is an important open direction.

\section*{Acknowledgments}

CG is supported by the Charles Coulson Scholarship. The authors also acknowledge support from His Majesty’s Government in the development of this research. CT acknowledges support by the NSERC Discovery Grant No. 2021-03677, the Alliance Grant ALLRP 581098-22 and a gift from Google. For the purpose of Open Access, the authors have applied a CC BY public copyright license to any Author Accepted Manuscript (AAM) version arising from this submission.

\newpage

\section*{Impact Statement}

This paper presents work whose goal is to advance the field
of machine learning, and more specifically, the impact of implicit biases on the representations of deep neural networks. There are many potential societal consequences of our work, none which we feel must be specifically highlighted here.

\bibliography{example_paper}
\bibliographystyle{icml2026}

\newpage
\appendix
\onecolumn

\section{Detailed Discussion on Related Work}\label{sec:related work full}

\textbf{Neural collapse:} First introduced by \citet{Papyan2020Prevalence}, NC has evolved into a central object of study in the theory of deep classification models. Much of the subsequent analysis has been conducted within the unconstrained feature model (UFM) \citep{Mixon2020, fang2021exploring} with explicit $L_2$ regularization, where a broad range of loss functions has been examined \citep{Zhou2022, Mixon2020, ji2022an, Han2022, ma2025neural}. Across these settings, NC consistently emerges as the globally optimal geometric configuration, with all other critical points being non-degenerate saddles \citep{zhou22c, zhu2021geometric, ji2022an}. Follow up research has since extended the original NC observation  to imbalanced data \citep{thrampoulidis2022imbalance, behnia2023implicit, fang2021exploring, Dang2023NeuralCI, hong2023neural} and to regimes with many classes \citep{jiang2023generalized}. Broader generalizations of the phenomenon have also been established in multilabel settings \citep{li2023neural}, soft-label language models \citep{Zhao2024ImplicitGO}, and multi-class regression \citep{andriopoulos2024prevalence}. The relevance of the UFM for real neural networks has further been supported by \citet{Sukenik2025NeuralCI}, who show its equivalence for deep architectures with skip connections.

Complementary perspectives on NC have emerged outside the UFM as well, including analyses in alternative theoretical frameworks \citep{jacot2024wide, zangrando2024neural}, as well as treatments via mean-field methods \citep{wu2025neural} and NTK-based approaches \citep{kothapalli2024kernel, Seleznova2023}. Recent works additionally highlight the role of dataset structure in shaping NC behavior \citep{hong2024beyond, kothapalli2024kernel}. Beyond theoretical significance, NC has been linked to a number of practical properties in deep learning, such as adversarial robustness \citep{su2023robustness}, transfer learning \citep{galanti2021role, li2022principled}, generalization \citep{hui2022limitations, Gao2023TowardsDT}, and out-of-distribution detection \citep{wu2024pursuing, zhang2024epa}. Its dependence on architecture has been examined in the context of ResNets \citep{li2024residual, Sukenik2025NeuralCI, wang2024progressive} and large language models \citep{wu2024linguistic, Zhao2025OnTG}. An overview of developments surrounding NC can be found in the survey by \citet{Kothapalli2022}.

DNC has also been explored empirically \citep{Parker2023NeuralCI,rangamani23,He2022}. Efforts to theoretically characterize DNC have relied predominantly on deep UFMs trained with MSE loss, including two-layer constructions \citep{Tirer2022}, linear layers \citep{Dang2023NeuralCI, garrod2024unifying}, binary classification \citep{Sukenik2023} and more general settings \citep{sukenik2024neural}. In the CE setting, \citet{Garrod2024ThePO} show that $L_2$ regularization induces an explicit low-rank bias that influences the emergence of DNC. Additional perspectives outside the UFM—such as layer-wise training schemes—have also been explored \citep{beaglehole2024average}.

\textbf{Implicit bias:} A substantial literature investigates how gradient-based optimization shapes model structure in the absence of explicit regularization. Soudry et al. \cite{Soudry_2018} established that gradient descent converges in direction to the max-margin classifier for logistic regression. Researchers have also related the implicitly regularized path to explicitly regularized solutions in binary classification \citep{ji20a}. Subsequent studies have broadened our understanding of implicit bias to homogeneous \citep{lyu2019gradient, Ji_directional, vardi2022margin, timor2023implicit} and non-homogeneous networks \citep{cai2025implicit}, to alternative optimization algorithms \citep{gunasekar2018characterizing, azizan2018stochastic, sun2022mirror, pesme2024implicit, zhang2024implicit, fan2025implicit}, to large step-size regimes \citep{even2023s, wu2023implicit}, and to specific architectural families such as convolutional networks \citep{Gunasekar2018ImplicitBO, lawrence2021implicit} and transformers \citep{tarzanagh2023transformers, vasudevaimplicit, julistiono2024optimizing}. A parallel direction examines implicit regularization in linear networks trained with MSE loss, studying the interplay between gradient descent and depth \citep{pmlr-v80-arora18a, Yaras2023TheLO, Tu_mixed_dynam, bah2022learning,arora2018convergence}, including classical matrix factorization dynamics \citep{li2020towards, NEURIPS2019_c0c783b5, razin2020implicit, Gunasekar2017ImplicitRI}.

Implicit bias has additionally served as a direct theoretical tool for studying NC within the unregularized CE UFM. Ji et al. \cite{ji2022an} established convergence to KKT points of a constrained norm-minimization formulation. \citet{thrampoulidis2022imbalance} provided a characterization of the implicit solution path under class imbalance. \citet{Garrod2025DiagonalizingTS} proved convergence to NC under Hadamard-style initializations. Our work extends this line of inquiry by showing how depth-induced implicit bias qualitatively alters the nature of convergence.

\textbf{Depth \& low-rank bias:} Several lines of work have studied how network depth affects the rank of the solutions reached by optimization. Much of this literature originates in matrix factorization \citep{Gunasekar2017ImplicitRI,NEURIPS2019_c0c783b5,li2020towards,Chou2020GradientDF}, where low-rank bias is induced in the absence of nonlinearities and is therefore conceptually distinct from the cross-entropy classification setting studied here. More recent work has explored the relationship between depth and rank in nonlinear architectures, both empirically \citep{Huh2021TheLS} and in homogeneous networks \citep{jacot2023implicit,timor2023implicit}. In contrast, we analyze the representations that emerge in highly overparameterized classifiers, showing that depth induces a low-rank bias that is reflected in smaller normalized margins and in the transition from neural collapse to softmax codes.

\textbf{Spectral initialization:} Spectral initialization analyses aim to understand learning behavior by reducing gradient-flow evolution to closed-form differential equations governing the singular values of the weight matrices, usually for linear networks with MSE loss. This began with work by \citet{Saxe2013ExactST, Saxe2019}, who established the foundational framework. Subsequent research extended these tools to explore discrete-time gradient descent \citep{Gidel2019}, alternative initializations \citep{Kunin2024GetRQ, Braun2022, tarmoun21a} and to consider lazy vs rich learning regimes \citep{Braun2022, Domine2024FromLT}. More recent work has attempted to push these techniques beyond the purely linear setting. \citet{Mainali2025ExactLD} applied similar analyses to transformer architectures with linearized attention. \citet{zhao2025geometry} used these tools in the UFM to study semantic structure in language data within the context of MSE loss. \citet{Garrod2025DiagonalizingTS} advanced this line by establishing spectral initialization for the UFM with CE loss for a single hidden layer via Hadamard initialization. To date, however, no prior work has applied these techniques to deep networks trained with CE loss.

\section{Further background on deep UFM and NC}\label{sec:more background}

Here we provide more details about NC and the UFM construction. We study classification with $K$ classes and $n$ samples per class. We denote data point $i$ of class $c$ by $x_{ic} \in \mathbb{R}^{d_0}$, with corresponding one-hot encoded label $y_c \in \mathbb{R}^K$. A deep neural network $f(x): \mathbb{R}^{d_0} \rightarrow \mathbb{R}^K$ models the relationship between inputs and labels. We decompose the network as $f_\theta (x) = W_L \sigma ( W_{L-1} \sigma (... \sigma(W_1 h_{\bar{\theta}}(x))...)),$ where $W_L \in \mathbb{R}^{K \times d}$ and $W_{L-1}, \dots, W_1 \in \mathbb{R}^{d \times d}$ are weight matrices, and $h_{\bar{\theta}}: \mathbb{R}^{d_0} \rightarrow \mathbb{R}^d$ is a highly expressive feature map representing the remainder of the network. The activation function $\sigma: \mathbb{R} \rightarrow \mathbb{R}$ is applied elementwise. We denote the image of the data under $h_{\bar{\theta}}$ by $h_{\bar{\theta}}(x_{ic})=h_{ic}$, and define the feature matrix $H_1 = [ h_{1,1}, ... , h_{n,1}, h_{1,2},...,h_{n,K} ] \in \mathbb{R}^{d\times Kn}$ as the collection of all features in class order. 

The parameters $\{ W_L,...,W_1, \bar{\theta} \}$ are trained using a variant of gradient descent on CE loss without explicit regularization

$$\mathcal{L}(\theta) = \sum_{i,c} g(f_{\theta}(x_{ic}),y_c),$$

where $g$ implements the CE loss function. The matrix $H_1$ denotes the features entering the first separated layer of the fully connected head. Similarly, define $H_l$ for $l=2,...,L$ to be the features entering the $l^{\textrm{th}}$ layer of the fully connected head. DNC then refers to the following observations, which are found to approximately hold in overparameterized neural networks as training continues \citep{Parker2023NeuralCI}. For $L=1$, this coincides with NC \citep{Papyan2020Prevalence}.

\begin{definition} [\textbf{Deep Neural Collapse}]\label{def:DNC}
    The last $L$ layers obey DNC if, for $l=1,...,L$, they satisfy:

\begin{itemize}[noitemsep, topsep=-3pt, leftmargin=*]
    \item \textbf{DNC1}: Features collapse to their means, meaning there is a matrix $M_l \in \mathbb{R}^{K \times K}$ such that $H_l=M_l \otimes 1_n^T$.
    \item \textbf{DNC2}: The feature mean matrices $M_l$, after global centering, align with a simplex ETF, meaning
    $(M_l - \mu_G^{(l)}1_K^T)^T (M_l - \mu_G^{(l)}1_K^T) \propto S$, where $\mu_G = \frac{1}{K} \sum_j M_{ij}$.
    \item \textbf{DNC3}: The rows of the weight matrices $W_l$ are linear combinations of the columns of the matrix $M_l - \mu_G^{(l)}1_K^T$.
\end{itemize}
\end{definition}

It is shown by \citet{Garrod2024ThePO} that Definition \ref{def:lin_UFM_DNC} naturally implies all of these properties.

\subsection{The Deep UFM Construction}

To formally define the deep UFM, we approximate the feature map $h_{\bar{\theta}}(x)$ as being capable of mapping the training data to arbitrary points in feature space. Accordingly, we treat the feature vectors $h_{ic}$ as freely optimized variables and abstract away the parameters $\bar{\theta}$. This is motivated by the fact that $\bar{\theta}$ interact with the loss only through the features $h_{ic}$; therefore, high performing $\bar{\theta}$ should correspond to high performing feature vectors. In particular, when the feature map is sufficiently expressive, it should be able to map the data close to the optimal features recovered by the deep UFM. Consequently, the deep UFM is expected to describe deep neural networks in the overparametrization limit. 

The loss function is then parameterized by $W_L,...,W_1$ and $H_1 $, and becomes
\begin{equation}
    \mathcal{L}=  g(Z)= - \sum_{c=1}^K \sum_{i=1}^n \log \left( \frac{\exp((z_{ic})_c)}{\sum_{c'=1}^K \exp((z_{ic})_{c'})} \right),
\end{equation}
where $z_{ic} = W_L \sigma (W_{L-1} \sigma (... W_2 \sigma (W_1 h_{ic}^{(1)}) ...))$ are the logit vectors, which make up the columns of the logit matrix $Z \in \mathbb{R}^{K \times Kn}$, using the same ordering as in the feature matrices. In the main text we set $\sigma=\textrm{id}$ throughout, and leave consideration of homogeneous networks to further work.

\section{Further Results for Hadamard Initialization} \label{sec:hada_more_results}

Here we provide an additional result under the Hadamard initialization framework discussed in Section \ref{sec:hada_framework}.

\subsection{Minimal Rank Logit Matrices for Hadamard Initialization} \label{sec:min_rank_hada_logits}

We derive a rank lower bound for logit matrices in this Hadamard setting that are compatible with a continuously decreasing loss along gradient flow. This hence informs us how low rank we can expect the logit matrix to become whilst still fitting the data in the Hadamard context.

\begin{proposition} \label{prop:hada_rank}
    Let $K=2^m$ for $m \in \mathbb{N}$, and let $\Phi$ be the Sylvester-Hadamard matrix of size $K$. The lowest rank achievable for a matrix of the form $Z=\Phi \textrm{diag}(0,a) \Phi^T \in \mathbb{R}^{K \times K}$ satisfying $Z_{cc}-Z_{c'c} >0$ for all $c' \neq c$, where $a \in \mathbb{R}^{K-1}_{\geq 0}$, is $m=\log_2(K)$.
\end{proposition}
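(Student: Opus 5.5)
The plan is to use the character structure of the Sylvester Hadamard matrix to turn the margin condition into a linear-algebra statement over $\mathbb{F}_2^m$. Index the rows and columns of $\Phi$ by vectors in $\mathbb{F}_2^m$, using binary expansions of $0,\dots,K-1$. By induction on the recursion in Definition~\ref{def:sylvester}, one has $\Phi_{x,j}=(-1)^{\langle x,j\rangle}$, where $\langle\cdot,\cdot\rangle$ is the $\mathbb{F}_2$ inner product. Index $0$ corresponds to the all-ones column, which is the coordinate deleted by the leading $0$ in $\mathrm{diag}(0,a)$. The characters are multiplicative, so $(-1)^{\langle c',j\rangle}(-1)^{\langle c,j\rangle}=(-1)^{\langle c'\oplus c,j\rangle}$. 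Hence
\begin{equation*}
Z_{c'c}=\sum_{j\neq 0} a_j (-1)^{\langle c'\oplus c,\, j\rangle}=:f(c'\oplus c).
\end{equation*}
In particular $Z_{cc}=f(0)=\sum_j a_j$ for every $c$.

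\textbf{Rank and margin reformulation.} Since $\Phi\Phi^T=K I$, the matrix $\Phi$ is invertible. Therefore $\mathrm{rank}(Z)=\mathrm{rank}(\mathrm{diag}(0,a))=|T|$, where $T=\{j\neq 0: a_j>0\}$ is the support of $a$; here $a\ge 0$ is used. The margin condition $Z_{cc}-Z_{c'c}>0$ for all $c'\neq c$ becomes, with $x=c'\oplus c$ ranging over all nonzero vectors,
\begin{equation*}
f(0)-f(x)=\sum_{j\in T} a_j\bigl(1-(-1)^{\langle x,j\rangle}\bigr)=2\sum_{j\in T,\ \langle x,j\rangle=1} a_j>0 \qquad \forall x\in\mathbb{F}_2^m\setminus\{0\}.
\end{equation*}
Because every $a_j$ with $j\in T$ is strictly positive, this holds if and only if every nonzero $x$ has $\langle x,j\rangle=1$ for some $j\in T$. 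Equivalently, $T^\perp=\{0\}$, which over $\mathbb{F}_2^m$ means that $T$ spans $\mathbb{F}_2^m$.

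\textbf{Conclusion.} A spanning set of $\mathbb{F}_2^m$ has at least $m$ elements, since $\dim T^\perp\ge m-\dim\mathrm{span}(T)\ge m-|T|$. This gives $\mathrm{rank}(Z)\ge m$. For the matching upper bound, take $T$ to be the standard basis $\{e_1,\dots,e_m\}$, i.e.\ the indices $2^{k}$ for $k=0,\dots,m-1$. Set $a_j=1$ on $T$ and $a_j=0$ elsewhere. Then for every nonzero $x$ some coordinate $x_k=1$, so $\langle x,e_k\rangle=1$ and the margin is at least $2$. This $Z$ has rank exactly $m=\log_2 K$.

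The only step needing care is the identification $\Phi_{x,j}=(-1)^{\langle x,j\rangle}$ and the resulting Cayley-table form $Z_{c'c}=f(c'\oplus c)$. This follows from a short induction: the block $\begin{bmatrix}\Phi&\Phi\\ \Phi&-\Phi\end{bmatrix}$ multiplies the entry by $(-1)$ exactly when both the new top row bit and the new top column bit are $1$. Everything after that is elementary $\mathbb{F}_2$ linear algebra.
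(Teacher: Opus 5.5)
Your proof is correct and follows the paper's argument essentially step for step. Both use the character form $\Phi_{ij}=(-1)^{i\cdot j}$ and the resulting XOR structure $Z_{c'c}=f(c'\oplus c)$, then a nonzero vector orthogonal to the support of $a$ (which exists whenever $|T|<m$) forces an off-diagonal entry to equal the diagonal, and attainability comes from the standard-basis construction. Your only refinement is stating the margin condition as an exact equivalence ($T$ spans $\mathbb{F}_2^m$) instead of proving just the necessary direction by contradiction, which is a clean but minor sharpening.
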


\begin{proof}

Denote the $i^{\textrm{th}}$ column of $\Phi$ by $\phi_i$, using zero indexing $i=0,...,K-1$. Let
\begin{equation}
    S=\{ u \in \{ 1,...,K-1 \}: a_u \neq 0 \},
\end{equation}
so that $S$ is the support of $a$, and note that $\textrm{rank}(Z)=|S|$. 
We can rewrite the entries of $Z=\Phi \textrm{diag}(0,a) \Phi^T$ as
$$Z_{ij} = \sum_{u,v=0}^{K-1} (\phi_u)_i \textrm{diag}(0,a)_{uv} (\phi_v)_j$$
$$=\sum_{u,v=1}^{K-1} (\phi_u)_i a_u \delta_{uv} (\phi_v)_j=\sum_{u=1}^{K-1} a_u (\phi_u)_i (\phi_u)_j. $$
Using that $\Phi$ is symmetric, this becomes
$$Z_{ij} = \sum_{u=1}^{K-1} a_u (\phi_i)_u (\phi_j)_u = \sum_{u=1}^{K-1} a_u (\phi_i \circ \phi_j)_u, $$
where $\circ$ denotes the Hadamard (entrywise) product. Using Lemma \ref{lem:bitwise_XOR}, the columns satisfy $\phi_i \circ \phi_j=\phi_{i \oplus j}$, where $\oplus$ is the bitwise XOR. Hence
$$Z_{ij} = \sum_{u=1}^{K-1} a_u (\phi_{i \oplus j})_u.$$
In particular, when $i=j$ we have $Z_{ii} = \sum_u a_u$, while as $j$ ranges over $j \neq i$, the value $i \oplus j$ ranges over $\{ 1,..., K-1 \}$, so the off-diagonal entries in column $i$ are exactly 
$$\left\{ \sum_{u=1}^{K-1}a_u (\phi_{k})_u : k =1,...,K-1  \right\}.$$
Therefore, the condition that $Z_{cc} > Z_{c'c}$ for all $c' \neq c$ is equivalent to
$$\sum_{u=1}^K a_u > \sum_{u=1}^{K-1} a_u (\phi_k)_u, \quad \textrm{for all } k \in \{ 1,...,K-1 \}.$$
Now identify the indices $\{ 1,...,K-1 \}$ with elements of $\mathbb{F}_2^m$ via binary strings, recalling that $K=2^m$. By Lemma \ref{lem:Hada_as_minus_ones}, we have $\Phi_{ij} = (-1)^{i \cdot j}$, where $i \cdot j$ is the mod-2 dot product. Thus
$$ \sum_{u=1}^{K-1} a_u (\phi_k)_u = \sum_{u=1}^{K-1} a_u (-1)^{k \cdot u}.$$
We begin by proving the lower bound $|S| \geq m$ by contradiction. Suppose $|S|<m$. View $S \subset \mathbb{F}_2^m$ and let
\begin{equation}
    V=\textrm{span}_{\mathbb{F}_2}(S) = \left\{ \sum_{j=1}^{|S|} \lambda_j s_j:s_j \in S, \lambda_j \in \mathbb{F}_2 \right\}.
\end{equation}
Since $S$ has fewer than $m$ elements, $|V| \leq 2^{|S|}<2^m$, and so $V \neq \mathbb{F}_2^m$. consequently $\textrm{dim}(V) \leq m-1$, and the orthogonal complement $V^{\perp}$ is non-empty:
$$\exists v^* \neq 0 : v^* \in V^{\perp} = \{ x \in \mathbb{F}_2^m: x \cdot v = 0,  \forall v \in V \}.$$
now note that the index $v^* \in V^{\perp}$ satisfies for all $u \in S \subset V$ that $v^* \cdot u =0$, and hence $(-1)^{v^* \cdot u}=1$. Consequently 
$$a \cdot \phi_{v^*} = \sum_{u \in S} a_u (-1)^{v^* \cdot u} = \sum_{u \in S} a_u = \sum_{u=1}^{K-1} a_u.$$
This shows that for $k=v^* \neq 0$, the corresponding off-diagonal value equals the diagonal value, i.e. $Z_{cc} = Z_{c'c}$ for some $c' \neq c$, contradicting the strict inequality. Hence we must have $|S| \geq m$.

To prove that the minimal size of $S$ is exactly $m$, choose $S$ to be the standard basis of $\mathbb{F}_2^m$, and set $a_i = 1$ for all $i \in S$. then for any index $v \neq 0$ we have:

$$\sum_{u=1}^{K-1} a_u (-1)^{v \cdot u}=\sum_{r=1}^m (-1)^{v_r} = m - 2|\{ r:v_r =1 \}|<m=\sum_{u=1}^{K-1} a_u,$$

where $v_r$ is the $r^{\textrm{th}}$ element of the binary representation of $v$. since $v \neq 0$, the set $\{ r:v_r =1 \}$ must be non-empty, and so we must have $Z_{cc} - Z_{c'c}>0$ for all $c' \neq c$. This completes the proof.
\end{proof}

\section{Proofs}

\subsection{Proof of Theorem \ref{thm:NC_suboptimal}} \label{sec:proof_NC_suboptimal}

We consider the objective described in Eq. \eqref{eq:KKT_constrained_prob}. We initially focus on the DNC solution, as described in Definition \ref{def:lin_UFM_DNC}. In particular, since this definition specifies that the normalized matrices are balanced, and the asymptotic problem characterizes the normalized matrices, we consider balanced matrices throughout the proof. Hence DNC is given by
\begin{equation} \label{eq:balanced_set_up}
    W_L = Q \Sigma U_L^T, \quad H_1 = U_1 \Sigma' V^T, \quad W_l = U_{l+1} \Sigma'' U_{l}^T, \quad l=1,...,L-1,
\end{equation}
where $U_L,...,U_1 \in \mathbb{R}^{d \times d}$ are any orthogonal matrices, $\Sigma \in \mathbb{R}^{K \times d}$, $\Sigma' \in \mathbb{R}^{d \times Kn}$, $\Sigma'' \in \mathbb{R}^{d \times d}$ all have their top $K \times K$ block given by $\alpha \textrm{diag}(1,1,...,1,0)$, for some $\alpha \in \mathbb{R}$. The matrices $Q \in \mathbb{R}^{K \times K}$ and $V \in \mathbb{R}^{Kn \times Kn}$ are orthogonal such that
$$Q \textrm{diag}(1,1,...,1,0) V^T = \frac{1}{\sqrt{n}} S \otimes 1_n^T,$$
and hence $Z= \frac{1}{\sqrt{n}} \alpha^{L+1} S \otimes 1_n^T$.

Note as a consequence we have for all $i =1,...,n$ and $c \neq c' =1,..,K$
$$(z_{ic})_c - (z_{ic})_{c'} = \frac{1}{\sqrt{n}} \alpha^{L+1} \left[ \frac{K-1}{K} + \frac{1}{K} \right] = \frac{1}{\sqrt{n}} \alpha^{L+1},$$
so the constraint of Eq. \eqref{eq:KKT_constrained_prob} is simply $\alpha \geq (\sqrt{n})^{\frac{1}{L+1}}$.

The objective value is then
\begin{equation} \label{eq:DNC_obj_val}
    \| H_1 \|_F^2 + \sum_{l=1}^L \| W_l \|_F^2 = (L+1)(K-1) \alpha^2 \geq (L+1)(K-1) n^{\frac{1}{L+1}},
\end{equation}
and this gives the objective value achieved by the DNC solution. 

We shall now show that the cross-polytope solution outperforms DNC for the given range of $K$ and $L$. First we shall assume $K$ is even, we will cover the odd case after. The cross-polytope uses a similarly balanced construction to Eq. \eqref{eq:balanced_set_up}, and is given by
\begin{equation} \label{eq:CP_SVD}
    W_L = \tilde{Q} \tilde{\Sigma} \tilde{U}_L^T, \quad H_1 = \tilde{U}_1 \tilde{\Sigma}' \tilde{V}^T, \quad W_l = \tilde{U}_{l+1} \tilde{\Sigma}'' \tilde{U}_{l}^T, \quad l=1,...,L-1,
\end{equation}
where $\tilde{U}_L,...,\tilde{U}_1 \in \mathbb{R}^{d \times d}$ are any orthogonal matrices, $\tilde{\Sigma} \in \mathbb{R}^{K \times d}$, $\tilde{\Sigma}' \in \mathbb{R}^{d \times Kn}$, $\tilde{\Sigma}'' \in \mathbb{R}^{d \times d}$ all have their top $K \times K$ block given by $\beta \textrm{diag}(1_{\frac{K}{2}},0_{\frac{K}{2}})$, for some $\beta \in \mathbb{R}$. The matrices $\tilde{Q} \in \mathbb{R}^{K \times K}$ and $\tilde{V} \in \mathbb{R}^{Kn \times Kn}$ are orthogonal such that
\begin{equation} \label{eq:CP_logit}
    \tilde{Q} \textrm{diag}(1_{\frac{K}{2}},0_{\frac{K}{2}}) \tilde{V}^T = \frac{1}{2\sqrt{n}} C \otimes 1_n^T,
\end{equation}
where $C \in \mathbb{R}^{K \times K}$ is given in block form by
\begin{equation} \label{eq:cross_poly}
    C= \begin{bmatrix}
    X & 0 & ... & 0 \\
    0 & X & ... & 0 \\
    ... & ... & ... & ... \\
    0 & 0 & ... & X
\end{bmatrix}, \quad \textrm{where } X= \begin{bmatrix}
    1 & -1 \\
    -1 & 1
\end{bmatrix}.
\end{equation}
Hence $Z = \frac{1}{2 \sqrt{n}} \beta^{L+1} C \otimes 1_n^T$.

This gives that the constraint condition of the objective in Eq. \eqref{eq:KKT_constrained_prob} is
$$(z_{ic})_c - (z_{ic})_{c'} =\begin{cases}
    \frac{1}{\sqrt{n}} \beta^{L+1} & \textrm{if $c$ and $c'$ are paired in the cross-polytope, } \\
    \frac{1}{2\sqrt{n}} \beta^{L+1} & \textrm{otherwise.}
\end{cases}$$
Hence the constraint is satisfied when $\beta \geq (2 \sqrt{n})^{\frac{1}{L+1}}$.

The objective value is then
\begin{equation}
    \| H_1 \|_F^2 + \sum_{l=1}^L \| W_l \|_F^2 = (L+1) \frac{K}{2} \beta^2 \geq (L+1) \frac{K}{2} (2 \sqrt{n})^{\frac{2}{L+1}}.
\end{equation}
This outperforms the NC objective value stated in Eq. \eqref{eq:DNC_obj_val} when
$$(L+1) \frac{K}{2} (2 \sqrt{n})^{\frac{2}{L+1}} < (L+1)(K-1) n^{\frac{1}{L+1}},$$
which is equivalent to
$$2^{\frac{2}{L+1}} < 2 \left( 1 - \frac{1}{K} \right),$$
and this inequality holds when $L=2, K \geq 6$, or $L \geq 3, K \geq 4$. This covers all even $K$ in this range.

It remains to cover the odd $K$. Here take the same solution as the even case specified in Eq. \eqref{eq:CP_SVD} and Eq. \eqref{eq:CP_logit}, but update the singular values so that $\tilde{\Sigma}, \tilde{\Sigma}', \tilde{\Sigma}''$ have their top $K \times K$ block given by $\beta \textrm{diag}(1_{\frac{K-1}{2}}, 2^{- \frac{1}{L+1}} , 0_{\frac{K-1}{2}})$, and the orthogonal matrices $\tilde{Q}, \tilde{V}$ so that
$$\tilde{Q} \textrm{diag}(1_{\frac{K-1}{2}}, \frac{1}{2} , 0_{\frac{K-1}{2}}) \tilde{V}^T = \frac{1}{2 \sqrt{n}} \bar{C} \otimes 1_n^T,$$
where
$$\bar{C} = \begin{bmatrix}
    C & 0 \\
    0 & 1
\end{bmatrix} , \quad \textrm{where $C \in \mathbb{R}^{K-1 \times K-1}$ has entries as before. }$$
Consequently $Z = \frac{1}{2 \sqrt{n}} \beta^{L+1} \bar{C} \otimes 1_n^T$.

The condition $(z_{ic})_c - (z_{ic})_{c'} \geq 1$ again reduces to $\beta \geq (2 \sqrt{n})^{\frac{1}{L+1}}$, and now the objective takes value
\begin{equation}
    \| H_1 \|_F^2 + \sum_{l=1}^L \| W_l \|_F^2 = (L+1) \left[ \frac{K-1}{2} \beta^2 + 2^{- \frac{2}{L+1}} \beta^2 \right] \geq (L+1) (2 \sqrt{n})^{\frac{2}{L+1}} \left[ \frac{K-1}{2} + 2^{- \frac{2}{L+1}} \right],
\end{equation}
and this outperforms the DNC objective value of Eq. \eqref{eq:DNC_obj_val} when
$$(L+1) n^{\frac{1}{L+1}} \left[ \frac{K-1}{2} 2^{\frac{2}{L+1}} + 1 \right] \leq (L+1)(K-1) n^{\frac{1}{L+1}},$$
which is equivalent to
$$2^{\frac{2}{L+1}} \leq 2 \left( 1 - \frac{1}{K-1} \right).$$
This holds when $L=2, K \geq 6$ or $L \geq 3, K \geq 5$.

Consequently we have that the final values of $L$ and $K$ for which DNC is suboptimal are given by $L=2, K \geq 6$ and $L \geq 3, K \geq 4$.

It remains to prove local optimality of the DNC solution.

\textbf{Proof of Local Optimality:} Denote the set of parameters by $\Theta = (W_L, \dots, W_1, H_1)$, and the max-margin objective by $E(\Theta) = \|H_1\|_F^2 + \sum_{l=1}^L \|W_l\|_F^2$. The logit matrix is a continuous mapping $Z(\Theta) = W_L \dots W_1 H_1$. 
By Lemma \ref{lem:Schatten_reduction}, for any $\Theta$, the objective is bounded below as follows:
$$E(\Theta) \ge (L+1)\|Z(\Theta)\|_{S_p}^p=:f(Z)\,,\qquad p = \frac{2}{L+1}.$$
Moreover, for $\Theta^*$  the parameters of the DNC solution, for which $Z(\Theta^*) = Z_* = S \otimes 1_n^\top$, we have $E(\Theta^*) = (L+1)\|Z_*\|_{S_p}^p = f(Z_*)$.

To show that $\Theta^*$ is a local minimum of $E(\Theta)$, it suffices to prove that $Z_*$ is a strict local minimum of $f(Z)$ over the feasible set $\mathcal{Z}_{\mathrm{feas}} = \{Z \mid g_{icc'}(Z) \ge 0\}$, where $g_{icc'}(Z) = (Z_{ic})_c - (Z_{ic})_{c'} - 1$. If this were true, then for any $\Theta$ sufficiently close to $\Theta^*$ such that $Z(\Theta) \neq Z_*$ and $Z(\Theta) \in \mathcal{Z}_{\mathrm{feas}}$, we have $E(\Theta) \ge f(Z(\Theta)) > f(Z_*) = O(\Theta^*)$.

Let $r = K-1$, and recall this is the rank of $Z_*$ which has $r$ identical singular values equal to $\sqrt{n}$. 
Define
\begin{equation}
    f_{\mathrm{smooth}}(Z) = (L+1)\sum_{i=1}^r \sigma_i(Z)^p\,,
\end{equation}
over the top-$r$ singular values of $Z$.
Because these top $r$ singular values of $Z_*$ are separated from the  zero singular value by a strictly positive spectral gap, 
this function is twice continuously-differentiable in a neighborhood of $Z_*$. Specifically, its gradient
$$\nabla f_\mathrm{smooth}(Z) = 2\, U \textrm{diag}(\sigma_1^{p -1},\ldots,\sigma_r^{p -1})\, V^T, \quad \textrm{where } Z=U \textrm{diag}(\sigma_1,\ldots,\sigma_r) V^T,$$
at $Z_*$, evaluates to
\[
G:=\nabla_Z f_\mathrm{smooth}(Z_*) = 2n^{-L/(L+1)} Z_*.
\]

The constraints $g_{icc'}(Z)$ defining $\mathcal{Z}_{\mathrm{feas}}$ are  linear in $Z$. Also, at $Z_*$, all constraints are tight ($g_{icc'}(Z_*) = 0$) and, for any feasible $Z_k \in \mathcal{Z}_{\mathrm{feas}}$, we have $g_{icc'}(Z_k) \ge 0$. By linearity, this implies:
\begin{align}\label{eq:feasibility definition gicc'} \langle \nabla g_{icc'}(Z), Z_k - Z_* \rangle = g_{icc'}(Z_k) - g_{icc'}(Z_*) \ge 0\,.
\end{align}
Here, since 
$$g_{icc'}(Z)=(e_c-e_{c'})^\top Z e_{ci}-1\implies \nabla_Z g_{icc'}(Z) = (e_c-e_{c'})e_{ci}^\top=:\nabla_Z g_{icc'}. $$
Thus, it is easy to check that
\begin{align}\label{eq:G as nablas}
G = 2n^{-L/(L+1)} Z_* = \lambda \sum_{i, c, c' \neq c}  \nabla g_{icc'}\,, \qquad\text{for}\,\,
\lambda:= \frac{2}{K} n^{-L/(L+1)} > 0\,.
\end{align}
Combining the above displays, it follows that for \textit{any} feasible $Z_k$:
\begin{align}\label{eq:first-order pos}
\langle G, Z_k - Z_* \rangle = \lambda \sum_{i,c,c'\neq c} \langle \nabla g_{icc'}, Z_k - Z_* \rangle \ge 0\,.
\end{align}

Now, assume for contradiction that $Z_*$ is \emph{not} a \emph{strict} local minimum of $f(Z)$ over $\mathcal{Z}_{\mathrm{feas}}$. Then, for every $\epsilon > 0$, there exists at least one feasible point $Z \neq Z_*$ within an $\epsilon$-neighborhood of $Z_*$ that does not strictly increase the objective. 
By taking $\epsilon_k = 1/k$, we can explicitly construct an infinite sequence $(Z_k)_{k=1}^\infty$ converging to $Z_*$ such that for every $k \ge 1$:
(1) $Z_k \in \mathcal{Z}_{\mathrm{feas}}$,
(2) $Z_k \neq Z_*$, and,
(3) $f(Z_k) \le f(Z_*)$.

Let $t_k = \|Z_k - Z_*\|_F$, so $t_k \rightarrow 0$, and $t_k > 0$ for all $k$. Define  normalized directions $\Delta_k = (Z_k - Z_*)/t_k$. Since $\|\Delta_k\|_F = 1$ for all $k$, we can extract a convergent subsequence $\Delta_k \to \bar{\Delta}$ with $\|\bar{\Delta}\|_F = 1$. Without loss of generality,  relabel the subsequence indices as $k$ so that the properties $f(Z_k) \le f(Z_*)$ and $t_k > 0$ continue to hold for all $k$ in the subsequence.

Because $f_{\mathrm{smooth}}$ is twice continuously differentiable in a neighborhood of $Z_*$, its Hessian $\nabla^2 f_{\mathrm{smooth}}$ is continuous. On any sufficiently small compact ball centered at $Z_*$, the operator norm of the Hessian is bounded by a finite maximum $M$. For sufficiently large $k$, all matrices $Z_k$ in the sequence lie within this ball. In this case, by Taylor's theorem, there exists a uniform constant $C = M/2 > 0$ independent of $k$ such that for all for all large $k$:
\begin{align}\label{eq:Taylor smooth}
f_{\mathrm{smooth}}(Z_k) \ge f_{\mathrm{smooth}}(Z_*) + t_k \langle G, \Delta_k \rangle - C t_k^2
\end{align}

Because of Eq. \eqref{eq:first-order pos} and $t_k>0$, it holds that $\langle G, \Delta_k \rangle \ge 0$ for all $k$, which implies 
\begin{align}\label{eq:G--Delta}
    \langle G, \bar{\Delta} \rangle \ge 0\,.
\end{align} We split the analysis into two cases based on $\bar{\Delta}$:

\textbf{Case 1: $\langle G, \bar{\Delta} \rangle > 0$.}

Let $\eta = \langle G, \bar{\Delta} \rangle > 0$. 
Since $f(Z) \ge f_{\mathrm{smooth}}(Z)$ and $f(Z_*) = f_{\mathrm{smooth}}(Z_*)$, by applying Eq. \eqref{eq:Taylor smooth}, for all sufficiently large $k$:
$$f(Z_k) - f(Z_*) \ge f_{\mathrm{smooth}}(Z_k) - f_{\mathrm{smooth}}(Z_*) \ge t_k \langle G, \Delta_k \rangle - C t_k^2 = t_k \left( \langle G, \Delta_k \rangle - C t_k \right)$$
Since $t_k \to 0$ and $C$ is a constant independent of $k$, the term $C t_k \to 0$. Meanwhile, because the  inner product is a continuous linear functional and $\Delta_k \to \bar{\Delta}$, we have $\lim_{k \to \infty} \langle G, \Delta_k \rangle = \eta>0$. Therefore, for sufficiently large $k$, we simultaneously have $\langle G, \Delta_k \rangle > \eta/2$ and $C t_k < \eta/2$. 
For all such $k$, the term in the parentheses is strictly positive, yielding $f(Z_k) - f(Z_*) > 0$, contradicting our assumption that $f(Z_k) \le f(Z_*)$.

\textbf{Case 2: $\langle G, \bar{\Delta} \rangle = 0$.}

In this case, by Eq. \eqref{eq:G as nablas}, 
\[
0= \langle G, \bar{\Delta} \rangle = \lambda \sum_{i,c,c'\neq c}\langle \nabla g_{icc'} , \bar{\Delta} \rangle \implies \langle \nabla g_{icc'} , \bar{\Delta} \rangle = 0, \forall i,c,c'\neq c\,,
\]
where for the implication we used that $\lambda>0$ and from Eq. \eqref{eq:feasibility definition gicc'} that $\langle \nabla g_{icc'} , \bar{\Delta} \rangle\geq0$ for all $i,c,c'\neq c$.

In fact, this gives
$$\langle \nabla g_{icc'}, \bar{\Delta} \rangle = \bar{\Delta}_{c, ic} - \bar{\Delta}_{c', ic}  = 0 \implies \bar{\Delta}_{c, ic} = \bar{\Delta}_{c', ic},\quad\forall i,c,c'\neq c\,\,\implies\,\,\bar{\Delta} = 1_K v^\top\,,$$
for some $v$. Since $\|\bar{\Delta}\|_F = 1$, we know $v \neq 0$. 

We will compute the singular values of 
\[
M_t:=Z_*+t\bar{\Delta}=Z_*+t\,1_Kv^\top.
\]
It suffices to work with the eigenvalues of \[
M_t^\top M_t
=
Z_*^\top Z_* + Kt^2 vv^\top\,,
\] where we used that $1_K^\top Z_*=0$.

Now decompose
\[
v=v_\parallel+v_0,
\qquad
v_\parallel\in \operatorname{row}(Z_*),
\quad
v_0\in \ker(Z_*).
\]
Because all $K-1$ nonzero singular values of $Z_*$ are equal to $\sqrt n$, $Z_*^\top Z_*$ acts as $nI$ on $\operatorname{row}(Z_*)$ and as $0$ on $\ker(Z_*)$. Hence
\[
Z_*^\top Z_*\, v_\parallel = n v_\parallel,
\qquad
Z_*^\top Z_*\, v_0 = 0.
\]
We distinguish two subcases.

\emph{Subcase 2a: $v_0\neq 0$.}

We will show that there exists constant $c_1>0$ such that for all small enough $t$
\begin{align}\label{eq:subcase 2a main}
    \sigma_K(M_{t})\geq c_1 t\,.
\end{align}
If this were true then we argue as follows: Write
\[
Z_k=M_{t_k}+t_kE_k,
\qquad
E_k:=\Delta_k-\bar{\Delta},
\qquad
\|E_k\|_F\to 0.
\]
By Weyl's inequality,
\[
\sigma_K(Z_k)\ge \sigma_K(M_{t_k})-t_k\|E_k\|_{\mathrm{op}}.
\]
Since $\|E_k\|_{\mathrm{op}}\le \|E_k\|_F\to 0$, for all sufficiently large $k$, by Eq. \eqref{eq:subcase 2a main}:
\[
\sigma_K(Z_k)\ge \frac{c_1}{2} t_k.
\]
Using Taylor of the smooth part \eqref{eq:Taylor smooth},
\[
f_{\mathrm{smooth}}(Z_k)-f_{\mathrm{smooth}}(Z_*)
\ge -Ct_k^2,
\]
from which we conclude that
\[
f(Z_k)-f(Z_*)
\ge
(L+1)\sigma_K(Z_k)^p - Ct_k^2
\ge
(L+1)\Bigl(\frac{c_1}{2}\Bigr)^p t_k^p - Ct_k^2.
\]
Because $p=\frac{2}{L+1}<1$ when $L\ge 2$, the term $t_k^p$ dominates $t_k^2$ as $k\to\infty$. Hence
$
f(Z_k)>f(Z_*)$
for all sufficiently large $k$, contradicting the assumption $f(Z_k)\le f(Z_*)$.

It remains proving Eq. \eqref{eq:subcase 2a main}.
We first handle separately the degenerate case $v_\parallel=0$. 
Since $vv^\top$ acts on $\ker(Z_*)$ while $Z_*^\top Z_*$ acts as $nI$ on $\operatorname{row}(Z_*)$ and vanishes on $\ker(Z_*)$, the non-zero eigenvalues of $M_t^\top M_t$ are
\[
n \quad\text{with multiplicity }K-1,
\qquad
Kt^2\|v\|_2^2 \quad\text{with multiplicity }1\,.
\]
Therefore, for small enough $t<\sqrt{n/(K\|v\|^2)}$
\[
\sigma_K(M_t)\ge c_1 t
\]
for $c_1 = \sqrt{K}\,\|v\|_2$.

Assume now that both $v_\parallel\neq 0$ and $v_0\neq 0$. Define orthonormal basis of $\operatorname{span}\{v_\parallel,v_0\}$:
\[
e_1:=\frac{v_\parallel}{\|v_\parallel\|_2},
\qquad
e_0:=\frac{v_0}{\|v_0\|_2}\,.
\]
 In this basis, the restriction of $M_t^\top M_t$ to $\operatorname{span}\{v_\parallel,v_0\}$ is
\[
C_t=
\begin{bmatrix}
n+Kt^2\|v_\parallel\|_2^2 & Kt^2\|v_\parallel\|_2\,\|v_0\|_2\\
Kt^2\|v_\parallel\|_2\,\|v_0\|_2 & Kt^2\|v_0\|_2^2
\end{bmatrix}.
\]
Indeed, $Z_*^\top Z_*$ contributes $n$ on the $e_1$ direction and $0$ on the $e_0$ direction, while
$
vv^\top
=
(v_\parallel+v_0)(v_\parallel+v_0)^\top\,,
$
has the above $2\times 2$ coordinate matrix in the basis $(e_1,e_0)$.

All remaining eigenvalues of $M_t^\top M_t$ are unchanged: they are equal to $n$ on the orthogonal complement of $e_1$ inside $\operatorname{row}(Z_*)$, and equal to $0$ on the orthogonal complement of $e_0$ inside $\ker(Z_*)$. Thus the only nontrivial eigenvalues are those of $C_t$.

Let $0\leq \mu_-(t)\le \mu_+(t)$ denote the eigenvalues of the positive semidefinite matrix $C_t$. Then,
\[
\mu_-(t)\mu_+(t)=\det(C_t)=
\bigl(n+Kt^2\|v_\parallel\|_2^2\bigr)Kt^2\|v_0\|_2^2
-
K^2t^4\|v_\parallel\|_2^2\|v_0\|_2^2
=
Knt^2\|v_0\|_2^2\,,
\]
and
\[
\mu_+(t)\le \operatorname{tr}(C_t)=n+Kt^2\|v\|_2^2.
\]
Combining,
\[
\mu_-(t)\ge
\frac{Knt^2\|v_0\|_2^2}{\,n+Kt^2\|v\|_2^2\,}.
\]
Moreover, 
\[
\mu_-(t) \leq \operatorname{tr}(C_t)/2 = n/2 + Kt^2\|v\|^2/2\,.
\]
For all sufficiently small $t>0$, we have
$
n+Kt^2\|v\|_2^2\le 3n/2,
$ 
and hence
\[
n> \frac{3n}{4} \ge \mu_-(t)\ge \frac{2K}{3} \|v_0\|_2^2\, t^2.
\]
Thus, again, for all sufficiently small $t$, $\sigma_{K}(Z_k)=\sqrt{\mu_-(t_k)}$ and Eq. \eqref{eq:subcase 2a main} holds.

\emph{Subcase 2b: $v_0=0$.}

In this case, $v=v_\parallel\in \operatorname{row}(Z_*)$. Then
\[
M_t^\top M_t=Z_*^\top Z_*+Kt^2vv^\top.
\]
Since $v$ lies in the row space of $Z_*$, one eigenvalue of $Z_*^\top Z_*$ equal to $n$ is perturbed to
\[
n+Kt^2\|v\|_2^2,
\]
while all other nonzero eigenvalues remain equal to $n$, and the zero eigenvalues remain unchanged. 
In particular, $\sigma_K(M_t)=0$, and therefore
\[
f(M_t)=f_{\mathrm{smooth}}(M_t).
\]

Define
\[
\bar g(t):=f_{\mathrm{smooth}}(Z_*+t\bar\Delta)=f(M_t).
\]
Then
\[
\bar g(t)-\bar g(0)
=
(L+1)\Bigl[\bigl(n+Kt^2\|v\|_2^2\bigr)^{p/2}-n^{p/2}\Bigr].
\]
A Taylor expansion at $t=0$ yields
\[
\bar g(t)-\bar g(0)=\eta t^2+O(t^4),\quad 
\eta:=\,n^{p/2-1}K\|v\|_2^2>0.
\]
Since also
\[
\bar g'(0)=\langle G,\bar\Delta\rangle=0,
\]
it follows that
\[
\bar g''(0)=2\eta>0.
\]

Now define, for each $k$,
\[
g_k(t):=f_{\mathrm{smooth}}(Z_*+t\Delta_k).
\]
By Taylor, there exists $s_k\in(0,t_k)$ such that
\[
f_{\mathrm{smooth}}(Z_k)-f_{\mathrm{smooth}}(Z_*)=g_k(t_k)-g_k(0)
=
t_k g_k'(0)+\frac{t_k^2}{2}g_k''(s_k)= t_k\langle G,\Delta_k\rangle+\frac{t_k^2}{2}g_k''(s_k)\,.
\]
Here,
\[
g_k''(t)
=
\nabla^2 f_{\mathrm{smooth}}(Z_*+t\Delta_k)[\Delta_k,\Delta_k].
\]
Because $s_k\to 0$ and $\Delta_k\to\bar\Delta$, and because $\nabla^2 f_{\mathrm{smooth}}$ is continuous, we obtain
\[
g_k''(s_k)\to \nabla^2 f_{\mathrm{smooth}}(Z_*)[\bar\Delta,\bar\Delta]
= \bar g''(0)=2\eta.
\]
Hence, for all sufficiently large $k$,
\[
g_k''(s_k)\ge \eta.
\]
Using also $\langle G,\Delta_k\rangle\ge 0$, we conclude that
\[
f_{\mathrm{smooth}}(Z_k)-f_{\mathrm{smooth}}(Z_*)
\ge \frac{\eta}{2}t_k^2>0
\]
for all sufficiently large $k$.

Since$
f(Z_k)\ge f_{\mathrm{smooth}}(Z_k)
$
and 
$
f(Z_*)=f_{\mathrm{smooth}}(Z_*),
$
it follows that
\[
f(Z_k)-f(Z_*)>0,
\]
contradicting the assumption $f(Z_k)\le f(Z_*)$.

Thus both subcases lead to a contradiction, and Case 2 cannot occur.

\subsection{Proof of Theorem \ref{thm:L_to_infty_optim}} \label{sec:L_to_infty_optim_proof}

Recall the constrained optimization problem stated in Eq. \eqref{eq:KKT_constrained_prob}. Since here we are only interested in the global minimum of the constrained optimization problem, we can apply Lemma \ref{lem:Schatten_reduction} to reduce to the following alternative problem that shares its global minimizer
\begin{equation}
    \min_Z \left\{  \| Z \|_{S_{\frac{2}{L+1}}}^{\frac{2}{L+1}} \right\} \quad \textrm{s.t. } (z_{ic})_c - (z_{ic})_{c'} \geq 1, \quad c' \neq c=1,...,K, \quad i=1,...,n.
\end{equation}
Where the Schatten quasi norm is defined in Eq. \eqref{eq:schatten_quasi}. We now note that we assume the NC1 property, and so $Z=Z' \otimes 1_n^T$ for some $Z' \in \mathbb{R}^{K \times K}$. Since $Z$ and $Z'$ have the same singular values up to a scale of $\sqrt{n}$, we can further reduce the problem in this case to
\begin{equation} \label{eq:reduced_obj_1}
    \min_{Z'} \left\{  \| Z' \|_{S_{\frac{2}{L+1}}}^{\frac{2}{L+1}} \right\} \quad \textrm{s.t. } Z_{cc}' - Z_{c'c}' \geq 1, \quad c' \neq c=1,...,K.
\end{equation}
Now expand the Schatten-quasi norm term in the $L \to \infty$ limit
$$\| Z' \|_{S_{\frac{2}{L+1}}}^{\frac{2}{L+1}} = \sum_{i=1}^{\textrm{rank}(Z')} \sigma_i^{\frac{2}{L+1}} = \sum_{i=1}^{\textrm{rank}(Z')} e^{\frac{2}{L+1} \log(\sigma_i)}$$
$$=\textrm{rank}(Z') + \frac{2}{L+1} \sum_{i=1}^{\textrm{rank}(Z')} \log( \sigma_i) + O \left( \frac{1}{L^2} \right),$$
where $\sigma_1,...,\sigma_{\textrm{rank} (Z')}$ are the non-zero singular values of $Z'$. Hence we have the objective of Eq. \eqref{eq:reduced_obj_1} becomes
\begin{equation} \label{eq:reduced_obj_2}
    \min_{Z'} \left\{  \textrm{rank}(Z') + \frac{2}{L+1} \sum_{i=1}^{\textrm{rank}(Z')} \log( \sigma_i) + O \left( \frac{1}{L^2} \right) \right\} \quad \textrm{s.t. } Z_{cc}' - Z_{c'c}' \geq 1, \quad c' \neq c=1,...,K.
\end{equation}
The global minimum in the $L \to \infty$ limit must be minimal rank such that the constraint is satisfied. We first remark that the constraint can be satisfied by matrices of rank two. The construction in the theorem statement suffices, since if $Z' = \beta X^T X$, where $X \in \mathbb{R}^{2 \times K}$ has each column vector of unit norm, and no vector repeated then $Z_{cc}' = \beta x_c^T x_c =\beta$, $Z_{c'c}' = \beta x_{c'}^T x_c < \beta$, and then setting $\beta > \max_{c,c' \neq c} (1-x_{c'}^T x_c)^{-1}$ produces a rank two matrix that satisfies the constraint.

We also quote Lemma \ref{lem:diag_superior}, which states that, for $K > 2$, $Z'$ can only satisfy the condition if $\textrm{rank}(Z') \geq 2$, hence the minimal rank in our setting must be exactly two, and we reduce our objective value from Eq. \eqref{eq:reduced_obj_2} to
$$\min_{Z'} \left\{ \frac{2}{L+1} \sum_{i=1}^{\textrm{rank}(Z')} \log( \sigma_i) + O \left( \frac{1}{L^2} \right) \right\} \quad \textrm{s.t. } Z_{cc}' - Z_{c'c}' \geq 1, \quad c' \neq c=1,...,K, \textrm{ and } \textrm{rank}(Z')=2.$$
We now use the technical assumption that $Z'$ is positive semi-definite, so that we can write $Z'=X^T X$ for $X \in \mathbb{R}^{2 \times K}$. Noting that minimizing $\sum_{i=1}^{\textrm{rank}(Z')} \log( \sigma_i)$ is the same as minimizing $\sigma_1 \sigma_2 = \textrm{det}(XX^T)$, and that $Z'_{c'c}=x_{c'}^Tx_c$,this reduces for large $L$ to
$$\min_{X \in \mathbb{R}^{2 \times K}} \left\{ \textrm{det}(XX^T)  \right\} \quad \textrm{s.t. } x_c^T x_c - x_{c'}^T x_c \geq 1, \quad c' \neq c=1,...,K,$$
or stated entirely in terms of the column vectors
\begin{equation} \label{eq:reduced_obj_3}
    \min_{x_i \in \mathbb{R}^{2}, i =1,...,K} \left\{ \textrm{det} \left( \sum_i x_i x_i^T \right)  \right\} \quad \textrm{s.t. } x_c^T x_c - x_{c'}^T x_c \geq 1, \quad c' \neq c=1,...,K.
\end{equation}
We now use that the features have equal norm, meaning $\|x_i\|=r$ for all $i=1,...,K$. Writing $x_i=(r \cos (\theta_i),r \sin (\theta_i))$, and using the shorthand $c_i=\cos(\theta_i), s_i = \sin ( \theta_i)$ we have:
$$\sum_i x_i x_i^T = \sum_i \begin{bmatrix}
    r c_i & r s_i
\end{bmatrix} \begin{bmatrix}
    r c_i \\
    r s_i
\end{bmatrix} = \sum_i \begin{bmatrix}
    r^2 c_i^2 & r^2 c_i s_i \\
    r^2 c_i s_i & r^2 s_i^2
\end{bmatrix},$$
and hence
$$\textrm{det} \left( \sum_i x_i x_i^T \right) = r^4 \sum_{i,j=1}^K [c_i^2 s_j^2 - c_i c_j s_i s_j].$$
Note that the two terms cancel when $i=j$. Combining the terms $(i,j)$ and $(j,i)$, this then becomes:
$$r^4 \sum_{i<j} [c_i^2 s_j^2 + c_j^2 s_i^2 -2 c_i c_j s_i s_j] =r^4 \sum_{i<j} (c_i s_j - c_j s_i)^2,$$
and hence
$$\textrm{det} \left( \sum_i x_i x_i^T \right) = r^4 \sum_{i<j} \sin^2 (\theta_i - \theta_j).$$
The constraint of Eq. \eqref{eq:reduced_obj_3} becomes
$$x_i^T x_i - x_i^T x_j = r^2 - r^2 [c_i c_j + s_i s_j] = r^2 (1-\cos( \theta_i - \theta_j)) \geq 1,$$
so the optimization problem of Eq. \eqref{eq:reduced_obj_3} has reduced to
\begin{equation} \label{eq:intermediate_obj}
    \min_{r,\theta_1,...,\theta_K } \left\{ r^4 \sum_{i<j} \sin^2 (\theta_i - \theta_j)  \right\} \quad \textrm{s.t. } r^2 (1-\cos( \theta_i - \theta_j)) \geq 1, \quad j \neq i=1,...,K.
\end{equation}
Note that our set of constraints can only be satisfied when
$$r^2 \geq \max_{i,j\neq i} (1-\cos(\theta_i - \theta_j))^{-1}.$$
Let $\delta = \min_{i,j \neq i} \left\{ |\theta_i - \theta_j| \right\}$, where the angular difference is measured modulo $2 \pi$, so as to capture the wrap around value as well. By the pigeonhole principle $\delta \leq \frac{2 \pi}{K}$, and $\delta >0$ else the constraint is not satisfied. Given that $K>2$, we have our constraints satisfied when
$$r^2 \geq (1-\cos(\delta))^{-1}.$$
Since the objective is monotonic increasing in $r$, when considering global minima we can set $r$ to this minimal value, and our objective of Eq. \eqref{eq:intermediate_obj} is now
\begin{equation} \label{eq:next_intermediate_obj}
    \min_{\theta_1,...,\theta_K } \left\{  \frac{\sum_{i<j} \sin^2 (\theta_i - \theta_j)}{(1-\cos(\delta))^2}  \right\} \quad \textrm{s.t. } \delta = \min_{i,j \neq i} \{ |\theta_i - \theta_j| \}.
\end{equation}
Note that this has a rotational symmetry, $\theta_i \rightarrow \theta_i + \alpha$ leaves the optimal value unchanged.

We now aim to further simplify the expression in the numerator. using that for any angle $\phi$ we have $2\sin^2(\phi)=1-\cos(2 \phi)$, we can rewrite the objective as
$$\frac{\frac{1}{4} K(K-1)-\frac{1}{2} \sum_{i<j} (\cos2 (\theta_i - \theta_j))}{(1-\cos(\delta))^2}.$$
Next use that 
$$\cos2 (\theta_u - \theta_v))=\textrm{Re}(e^{2i(\theta_u - \theta_v})=\textrm{Re}(z_u \bar{z}_v),$$
where $z_u = e^{2i \theta_u}$, $\textrm{Re}$ denotes the real part, and a bar denotes the complex conjugate. Also note that
$$\left| \sum_u z_u \right|^2 = \sum_{u,v} z_u \bar{z}_v = K+\sum_{u<v} (z_u \bar{z}_v + z_v \bar{z}_u) = K + 2 \sum_{u<v}\textrm{Re}( z_u \bar{z}_v),$$
and hence we can write the objective as
$$\frac{\frac{1}{4} K^2-\frac{1}{4} | \sum_u e^{2i \theta_u} |^2}{(1-\cos(\delta))^2}.$$
We now use the rotational symmetry, choosing to transform $\theta_i \rightarrow \theta_i + \alpha$, where $\alpha$ is such that 
$$\sum_u \sin(2 \theta_u + 2 \alpha)=0.$$
Doing this, and redefining $\theta_u$, we have the reduction:
$$\left| \sum_u e^{2i \theta_u} \right|^2 = \left[ \sum_u \cos(2 \theta_u )\right]^2,$$
and so our objective of Eq. \eqref{eq:next_intermediate_obj} is now:
\begin{equation} \label{eq:final_cos_reduction}
    \min_{\theta_1,...,\theta_K}\left\{ \frac{\frac{1}{4} K^2-\frac{1}{4} \left[ \sum_u \cos(2 \theta_u )\right]^2}{(1-\cos(\delta))^2} \right\},
\end{equation}
with the constraint that the gaps are at least $\delta$.

We reduce this to a problem solely in $\delta$ by minimizing the numerator subject to the condition. This is equivalent to:
\begin{equation} \label{eq:cos_sum}
    \max_{\theta_1,...,\theta_k} \left\{ \sum_u \cos(2 \theta_u ) \right\},
\end{equation}
subject to the smallest gap being $\delta$. Here we use Lemma \ref{lem:max_of_cos_sum}, which states that for $K$ even, or $K$ odd and $\delta \in (0,\frac{2 \pi}{K+1}]$, the maximum value of this sum, subject to the constraint, is given by
$$\frac{\sin \left( \lceil \frac{K}{2} \rceil\delta \right) + \sin \left( \lfloor \frac{K}{2} \rfloor \delta \right)}{\sin(\delta)}.$$
We will return to the odd $K$, $\delta \in ( \frac{2 \pi}{K+1}, \frac{2 \pi}{K}]$ case later. 

Our full optimization problem of Eq. \eqref{eq:final_cos_reduction} becomes:
\begin{equation}
    \min_{\delta \in (0 , \frac{2 \pi}{K}]} \left\{ \frac{K^2 \sin^2(\delta) - (\sin \left( \lceil \frac{K}{2} \rceil\delta \right) + \sin \left( \lfloor \frac{K}{2} \rfloor \delta \right))^2}{4 (\sin(\delta))^2 (1-\cos(\delta))^2} \right\}.
\end{equation}
From here we focus on the case where $K=2m$ for some integer $m$, the proof for the odd case for this objective is similar. The goal is to show this is minimized at $\delta = \frac{2 \pi}{K}$, and hence the vectors are uniformly spaced on a circle. Writing $K=2m$, we have
\begin{equation}
    \min_{\delta \in (0 , \frac{\pi}{m}]} \left\{ \frac{m^2 \sin^2(\delta) - \sin^2 \left(  m \delta \right) }{\sin^2(\delta) (1-\cos(\delta))^2} \right\}.
\end{equation}
Define the function
$$F(\delta) = \frac{\sin(m \delta)}{\sin(\delta)}=\frac{e^{im \delta} - e^{- i m \delta}}{e^{i \delta} - e^{i \delta}} = \sum_{u=0}^{m-1} e^{(2u - (m-1))i \delta}.$$
Taking a square
$$F^2 ( \delta) = \sum_{u,v=0}^{m-1} e^{-2(m-1) i \delta} e^{(2 u + 2v)i \delta}=m +2 \sum_{u=1}^{m-1} (m-u) \cos(2u \delta).$$
So we can rewrite our objective as
\begin{equation}
    \min_{\delta \in (0 , \frac{\pi}{m}]} \left\{ \frac{m^2 -m - 2 \sum_{u=1}^{m-1} (m-u) \cos(2u \delta) }{ (1-\cos(\delta))^2} \right\}.
\end{equation}
Now use that
$$2 \sum_{j=1}^{m-1} (m-j) =m^2 - m,$$
to further reduce to
$$\min_{\delta \in (0 , \frac{\pi}{m}]} \left\{ 2 \sum_{u=1}^{m-1} (m-u) \frac{ (1-\cos(2u \delta)) }{ (1-\cos(\delta))^2} \right\}.$$
It is clear that it is sufficient for the objective to be monotonic decreasing on its domain, and a sufficient condition for this to occur is that each of the functions:
\begin{equation}
    f_u(\delta) = \frac{1-\cos(2u \delta)}{(1-\cos( \delta))^2}, \quad u=1,..,m-1,
\end{equation}
is monotonic decreasing on the range $(0,\frac{\pi}{m}]$.

Using standard trigonometric identities, and defining $\delta = 2t$, this reduces to
$$f_u(t) = \frac{2\sin^2(2ut)}{4 \sin^4 (t)}, \quad u=1,...,m-1, \quad t \in \left(0,\frac{\pi}{2m} \right].$$
Dropping a multiplicative factor, and using a square root, which doesn't change monotonicity, we can redefine our functions to
$$f_u(t) = \frac{\sin(2ut)}{ \sin^2 (t)}, \quad u=1,...,m-1, \quad t \in \left(0,\frac{\pi}{2m} \right].$$
Taking a derivative, the condition for monotonic decreasing is equivalent to
$$u \cot(2ut) < \cot(t), \quad u=1,...,m-1, \quad t \in \left( 0,\frac{\pi}{2m} \right].$$
Note the right hand side is always positive on the range, so this inequality holds trivially if the left hand side is negative. This occurs if $2ut > \pi /2$. Otherwise, we use the fact that the function $x \cot(x)$ is strictly decreasing on $x \in(0,\pi/2]$, and hence, since $2ut > t$, we have
$$2ut \cot(2ut) < t \cot(t),$$
so that
$$u \cot(2ut) \leq 2u \cot(2ut)< t \cot(t),$$
So each of these functions is monotonic decreasing on the given range.

Hence our objective is monotonic decreasing in the minimum gap $\delta$, and so the best objective occurs when $\delta$ is maximized, which corresponds with the Gram vectors $x_1,...,x_K$ being uniformly spaced on a circle in some order.

The remaining sub-case is $K$ odd, and $\delta \in (\frac{2 \pi}{K+1}, \frac{2\pi}{K}]$. Lemma \ref{lem:max_of_cos_sum} also provides the maximum of the cosine sum in Eq. \eqref{eq:cos_sum} in this case, giving
$$\max_{\theta_1,...,\theta_k \textrm{ s.t. gaps } \geq \delta} \left\{ \sum_u \cos(2 \theta_u ) \right\} = \frac{|\sin(K\delta)|}{\sin(\delta)}.$$
Following the same steps above, it only remains to prove that the following objective
\begin{equation} \label{eq:K-gon_hard_case}
    \min_{\delta \in (\frac{2 \pi}{K+1}, \frac{2 \pi}{K}]} \left\{ \frac{K^2 - \frac{\sin^2(K \delta)}{\sin(\delta)}}{4(1-\cos(\delta))^2} \right\},
\end{equation}
takes its minimal value at $\delta = \frac{2 \pi}{K}$.

To show this, write $u(\delta)=\sin(K \delta)/\sin(\delta)$ and denote the objective function in Eq. \eqref{eq:K-gon_hard_case} by $f(\delta)$. Also, let $\gamma=2 \pi - K \delta$, so that $\gamma \in [0,\frac{2\pi}{K+1})$. We note that $\gamma \leq \delta$, and $\delta + \gamma \leq \frac{4 \pi}{K+1} \leq \pi$. Hence
$$0 \leq \gamma \leq \delta \leq \pi - \gamma \implies \sin (\gamma) \leq \sin (\delta).$$
and so
$$u(\delta) = \frac{-\sin(\gamma)}{\sin(\delta)} \in [-1,0],$$
and
$$\frac{f'(\delta)}{f(\delta)} = \frac{-2uu'}{K^2 - u^2} - \frac{2 \sin(\delta)}{1 - \cos (\delta)}.$$
Note since $-u \in [0,1]$, we have $K^2-u^2 \geq K^2 -1$. Also
$$u'(\delta) = \frac{1}{\sin^2(\delta)} [K \cos(K \delta) \sin(\delta) - \sin( K \delta) \cos (\delta)].$$
We then use that $\cos(K \delta) \leq 1$, $-\sin(K \delta) \leq \sin(\delta)$ and $\cos( \delta) \leq 1$, which gives
$$u'(\delta) \leq \frac{K\sin(\delta) + \sin(\delta)}{\sin^2(\delta)} = \frac{K+1}{\sin(\delta)}.$$
Also note that since $\delta \leq \frac{2 \pi}{K} \leq \frac{2 \pi}{3}$, we have
$$\frac{\sin(\delta)}{1 - \cos(\delta)} = \frac{1 + \cos(\delta)}{\sin(\delta)} \geq \frac{1}{2 \sin(\delta)},$$
since the inequality reduces to $1 + 2 \cos(\delta) \geq 0$, which is true on our range of $\delta$.

Also note it is the case that $K^2 =1 \geq 2K +2 $ for $K \geq 3$. Combining all of this gives
$$\frac{-2 u u'}{K^2 - u^2} \leq \frac{2 u'}{K^2 -1} \leq \frac{2(K+1)}{(K^2 -1) \sin(\delta)} \leq \frac{1}{\sin (\delta)} \leq \frac{2 \sin(\delta)}{1 - \cos(\delta)},$$
and hence
$$\frac{f'(\delta)}{f(\delta)} \leq 0.$$
Finally, noting that $f(\delta)$ is always greater than zero gives the result that $f$ is decreasing on this range, and hence the minimal value occurs at $\delta = \frac{2 \pi}{K}$.

\subsection{Proof of Theorem \ref{thm:general_L_hada_dynamics}} \label{sec:general_L_hada_dynamics_proof}

We begin by considering the deep UFM with linear activations as defined in Eq. \eqref{eq:general_deep_UFM}, denoting the logit matrix as $Z=W_L ... W_1 H_1$. We will denote $W_0:=H_1$ to make equations more compact.

We have the following elementary calculations:
$$\frac{\partial Z_{xy}}{\partial (W_l)_{ab}} = (A_{l+1})_{xa} (H_l)_{by},$$
$$\frac{\partial g(Z)}{\partial Z} = P-Y,$$
where we have defined the following quantities:
\begin{equation}
    A_{l+1} = W_L ... W_{l+1}, \quad A_{L+1}=I_K, \quad H_{l} = W_{l-1}...W_1 H_1, \quad H_0=I_{Kn},
\end{equation}
$$Y=I_K \otimes 1_n^T,$$
$$P_{ij}=\frac{\exp(Z_{ij})}{\sum_k \exp(Z_{kj)}},$$
from which it is clear that the equations of gradient flow are
\begin{equation} \label{eq:grad_flow}
    \frac{dW_l}{dt} = A_{l+1}^T (Y-P) H_l^T.
\end{equation}
We shall now specialize to Hadamard initialization. Recall we initialize with $W_L=U D_L R_L^T$, $W_l=R_{l+1} D_l R_l^T$ for $l=1,...,L-1$. $H_1 = R_1 D_0 V^T$, where $U=\frac{1}{\sqrt{K}} \Phi$, $V=U \otimes Q$, where $\Phi$ is the $K \times K$ Sylvester Hadamard matrix, and $Q$ is a right singular matrix of $1_n^T$, meaning $1_n^T=\sqrt{n} e_1^{(n)T} Q^T$, where $e_1^{(n)}$ is the first standard basis vector in $\mathbb{R}^n$. Also each $D_l$ has its first $K$ diagonal entries given by $\alpha_0^{(l)},...,\alpha_{K-1}^{(l)}$, with all other entries being zero, so that the initializations are in the form of a singular value decomposition.

As a consequence, we have at initialization that $Z=U D_Z V^T$, where the first $K$ diagonal entries of $D_Z$ are given by $a_i=\prod_{l=0}^L \alpha_i^{(l)}$ for $i=0,...,K-1$.

We can write this as
$$D_Z=[ D_a , 0_{K \times K(n-1)}] = D_a \otimes e_1^{(n)T},$$
where $D_a=\textrm{diag}(a_0,...,a_{K-1}) \in \mathbb{R}^K$. Hence
$$Z=U (D_a \otimes e_1^{(n)T})(U \otimes Q)^T = U[(D_a U^T) \otimes (e_1^{(n)T} Q^T)],$$
then using the SVD of $1_n^T$
$$Z=\frac{1}{\sqrt{n}} U[D_a U^T \otimes 1_n^T],$$
and using that this has repeated columns
$$Z=(U D_{\tilde{a}}U^T) \otimes 1_n^T,$$
where we have defined $\tilde{a} = \frac{1}{\sqrt{n}}a$ and $D_{\tilde{a}}= \textrm{diag}(\tilde{a}_0,...,\tilde{a}_{K-1})$. We now apply Lemma \ref{lem:softmax_hadamard}, and use that there are repeated columns, to get the form of the softmax matrix
$$P(Z)=(U D_{\tilde{\nu}}U^T) \otimes 1_n^T,$$
where
\begin{equation}
    \tilde{\nu}_i = \frac{\sum_j \Phi_{ij} \exp( \frac{1}{K} (\Phi \tilde{a})_j)}{\sum_j \exp( \frac{1}{K} (\Phi \tilde{a})_j)}.
\end{equation}
Noting that $Y=\sqrt{n} U[I_K, 0_{K \times K (n-1)}]V^T$, and that under our initialization
$$A_{l+1} = U D_L ... D_{l+1} R_{l+1}^T, \quad H_l= R_l D_{l-1} ... D_0 V^T,$$
we have our gradient flow equations specified in Eq. \eqref{eq:grad_flow} become
$$\frac{d W_l}{dt} = \sqrt{n} R_{l+1} \left[ \prod_{r=l+1}^L D_r^T \right] \textrm{diag}(1_K - \tilde{\nu},0_{K \times K(n-1)}) \left[ \prod_{r=0}^{l-1} D_r^T \right] R_l^T.$$
We see that if $W_l$ has singular vectors given by $R_{l+1}$ and $R_l$ at initialization, then the derivative has the same singular vectors. This implies that this property is preserved for all future times if it holds at initialization.

Consequently, only the singular values of $W_l$ evolve, and the singular vectors remain fixed. These singular values evolve under
$$\frac{d D_l}{dt} = \sqrt{n} \left[ \prod_{r=l+1}^L D_r^T \right] \textrm{diag}(1_K - \tilde{\nu},0_{K \times K(n-1)}) \left[ \prod_{r=0}^{l-1} D_r^T \right].$$
Stated more succinctly in terms of the individual values:
\begin{equation} \label{eq:intermediate_hada}
    \frac{d \alpha_i^{(l)}}{dt} = \sqrt{n} (1- \tilde{\nu}_i) \prod_{r \neq l} \alpha_i^{(r)}.
\end{equation}
We now note that
$$1- \tilde{\nu}_i = \frac{\sum_j (1-\Phi_{ij}) \exp( \frac{1}{K} (\Phi \tilde{a})_j)}{\sum_j \exp( \frac{1}{K} (\Phi \tilde{a})_j)}.$$

For $i=0$, we clearly have $1-\tilde{\nu}_i=0$ since the first column and row of $\Phi$ are given by $1_K$. Hence $\frac{d}{dt}(\alpha_0^{(l)})=0$ for $l=0,...,L$. For the other cases, dividing the numerator and denominator by $\exp( \frac{1}{K} \sum_{j} \tilde{a}_j)$ gives
$$1- \tilde{\nu}_i = \frac{\sum_{j=1}^{K-1} \Psi_{ij} \exp(-\frac{1}{K\sqrt{n}} (\Psi a)_j)}{1+ \sum_{j=1}^{K-1} \exp(-\frac{1}{K \sqrt{n}} (\Psi a)_j) },$$
where $\Psi \in \mathbb{R}^{K-1 \times K-1}$ is the matrix formed by deleting the first column and row of $1_{K}1_{K-1}^T - \Phi$, and we used that $\tilde{a}=\frac{1}{\sqrt{n}}a$. Hence we can write our gradient flow as

\begin{equation} \label{eq:original_hadamard_dynamics}
        \frac{d \alpha_i^{(l)}}{dt} = \frac{\sqrt{n}}{D} b_i \prod_{r \neq l} \alpha_i^{(r)}, \quad i=1,...,K-1, \quad  \frac{d \alpha_0^{(l)}}{dt} = 0,
\end{equation}
where
$$b_i = \sum_{j=1}^{K-1} \Psi_{ij} e^{-\frac{1}{K \sqrt{n}}(\Psi a)_j }, \quad D = 1+ \sum_{j=1}^{K-1} e^{-\frac{1}{K \sqrt{n}}(\Psi a)_j },$$
with $a_i = \prod_{l=0}^L \alpha_i^{(l)}$.

It is straightforward to verify that for all $l,r=0,...,L+1$, the quantities $(\alpha_i^{(l)})^2-(\alpha_i^{(r)})^2$ are conserved along the flow given in Eq. \eqref{eq:original_hadamard_dynamics}. From small initialization such conserved quantities would be approximately zero. Here we set them exactly to zero, in which case we have
\begin{equation}
    \alpha_i^{(l)}=\alpha_i^{(r)} \quad \forall l,r=0,...,L ,
\end{equation}
where we have used that singular values are non-negative. This is known as the balancedness condition. Hence we can drop the $l$ index from these singular values, writing them as $\alpha_i$. This reduces us to only $K-1$ evolving variables, one for each $i=1,...,K-1$. We define the logit singular values to be
$$a_i = \prod_{l=0}^L \alpha_i^{(l)} = \alpha_i^{L+1}.$$
Transforming to such variables captures the full equations. Using Eq. \eqref{eq:original_hadamard_dynamics}, these quantities evolve as:
$$\frac{d a_i}{dt} = (L+1) \alpha_i^L \frac{d \alpha_i}{dt} = \sqrt{n} (L+1) \frac{b_i}{D} \alpha_i^{2L}=\sqrt{n} (L+1) \frac{b_i}{D} a_i^{\frac{2L}{L+1}},$$
where here
$$b_i = \sum_{j=1}^{K-1} \Psi_{ij} e^{-\frac{1}{K \sqrt{n}}(\Psi a)_j }, \quad D = 1+ \sum_{j=1}^{K-1} e^{-\frac{1}{K \sqrt{n}}(\Psi a)_j },$$
Absorbing constants into the $a$ and $t$ variables via the transform
$$a' = \frac{1}{K \sqrt{n}} a, \quad t' = \frac{1}{K} (L+1) (K \sqrt{n})^{\frac{2L}{L+1}} t,$$
and dropping the primes then gives the final reduction stated in Eq. \eqref{eq:hada_logit_dynam}.
$$\frac{da_i}{dt} = \frac{1}{D} b_i a_i^{\frac{2L}{L+1}},$$
with
$b_i = \sum_{j=1}^{K-1} \Psi_{ij} e^{-(\Psi a)_j }$ and $ D = 1+\sum_{j=1}^{K-1} e^{-(\Psi a)_j }.$ 

\subsection{Proof of Theorem \ref{thm:no_lyapunov}} \label{sec:no_lyapunov_proof}

We write our singular values in the following form, so as to linearize around the DNC solution
\begin{equation}
    a_i = \mu + \mu \epsilon_i, \quad \sum_i \epsilon_i=0, \quad \mu = \frac{1}{K-1} \sum_i a_i=\frac{1}{K-1} \|a\|_1,
\end{equation}
and let $\sum_i |\epsilon_i| = \| \epsilon \|_1 \ll 1$. This gives that
$$(\Psi a)_i = K \mu + \mu (\Psi \epsilon)_i.$$
Consequently
$$D = 1+ e^{-K \mu} \sum_j (1-\mu (\Psi \epsilon)_j) + O( \|\epsilon \|_1^2).$$
Then using that $\sum_i \Psi_{ij} = K$, this reduces to
\begin{equation} \label{eq:D_intermediate}
    D = 1+ (K-1) e^{-K \mu} + O(\|\epsilon \|_1^2).
\end{equation}
Similarly
$$b_i = e^{-K \mu} \sum_j \Psi_{ij} [1- \mu (\Psi \epsilon)_j] + O( \|\epsilon \|_1^2)$$
$$= e^{-K \mu} [K - \mu (\Psi^2 \epsilon)_i] + O( \|\epsilon \|_1^2).$$
Then using that $\Psi^2 = K(I_{K-1} + 1_{K-1} 1_{K-1}^T)$ gives
\begin{equation} \label{eq:b_intermediate}
    b_i = K e^{-K \mu}(1- \mu \epsilon_i) + O( \| \epsilon \|_1^2 ).
\end{equation}
Hence, inputting Eq. \eqref{eq:D_intermediate} and \eqref{eq:b_intermediate} into the gradient flow equation \eqref{eq:hada_logit_dynam} the derivative is given by
$$\frac{d a_i}{dt} = \frac{K e^{-K \mu}}{1+ (K-1) e^{-K \mu}} (1- \mu \epsilon_i)(\mu + \mu \epsilon_i)^{\frac{2L}{L+1}} + O( \| \epsilon \|_1^2)$$
$$=\frac{K e^{-K \mu} \mu^{\frac{2L}{L+1}}}{1+ (K-1) e^{-K \mu}} (1- \mu \epsilon_i) \left( 1+ \frac{2L}{L+1} \epsilon_i \right) + O(\| \epsilon \|_1^2)$$
$$=\underbrace{\frac{K e^{-K \mu} \mu^{\frac{2L}{L+1}}}{1+ (K-1) e^{-K \mu}} }_{C(\mu)}\left[ 1+ \underbrace{\left( \frac{2L}{L+1} -\mu \right)}_{B( \mu)} \epsilon_i \right] + O(\| \epsilon \|_1^2),$$
where we have defined two constants $C(\mu)$ and $B(\mu)$. Now consider how the ratio of normalized logit values changes in this regime
$$\frac{d}{dt} \left( \frac{\hat{a}_i}{\hat{a}_j} \right)=\frac{1}{a_j^2} \left(a_j \frac{da_i}{dt} - a_i \frac{d a_j}{dt} \right)$$
$$ = \frac{C(\mu)}{a_j^2} \left[ (\mu + \mu \epsilon_j)(1+ B( \mu) \epsilon_i) - (\mu + \mu \epsilon_i)(1+ B(\mu) \epsilon_j) \right] + O( \| \epsilon \|_1^2)$$
$$ = \frac{\mu C(\mu)}{a_j^2} (B(\mu)-1)(\epsilon_i - \epsilon_j) + O(\| \epsilon \|_1^2).$$
Then using the expression for $B(\mu)$ this becomes
$$\frac{d}{dt} \left( \frac{\hat{a}_i}{\hat{a}_j} \right) = \frac{\mu C(\mu)}{a_j^2} \left[ \frac{L-1}{L+1} - \mu \right] ( \epsilon_i - \epsilon_j) + O( \| \epsilon \|_1^2).$$
Noting that the sign of $\hat{a}_i - \hat{a}_j$ is the same as the sign of $\epsilon_i - \epsilon_j$, we see that in the regime linearized around DNC whether two distinct modes grow towards each other depends on the sign of $\frac{L-1}{L+1}-\mu$. In particular for $\frac{L-1}{L+1}> \mu$, DNC is unstable at all scales, whilst for $\frac{L-1}{L+1}< \mu$ it is stable at all scales. This recovers the bound in the theorem statement when we recall that $\|a\|_1 = (K-1) \mu$. We also note this bound is trivial for $L=1$, and so DNC is stable throughout the full loss surface in this case.

We now show an alternative low-rank structure that is stable above a threshold in the norm of the logit singular values. Define the vector $c \in \mathbb{R}^{K-1}$ that has value one in odd indices and value 0 in even indices. Write the singular values in the following form:
\begin{equation}
    a=\mu c + \mu \epsilon, \quad \mu = \frac{2}{K} \|a\|_1, \quad \sum_i \epsilon_i = 0,
\end{equation}
and consider the regime where $\| \epsilon \|_1 = \sum_i | \epsilon_i | \ll 1$. Using Lemma \ref{lem:cross_poly_computation}, we have that the corresponding $b$ vector is given by:
$$b_i = \begin{cases}
        K e^{- \frac{K}{2} \mu} + 2(e^{-K \mu} - e^{\frac{K}{2} \mu}) (1 - \mu (\Psi \epsilon)_1 ) - K e^{- \frac{K}{2} \mu} \mu \epsilon_i + O(\| \epsilon \|_1^2 ), & \textrm{if } i \textrm{ is odd},\\
        K e^{- \frac{K}{2} \mu} - K e^{- \frac{K}{2} \mu} \mu \epsilon_i + O(\| \epsilon \|_1^2), & \textrm{if } i \textrm{ is even}.
        \end{cases}$$
We also have
$$a_i^{\frac{2L}{L+1}} = \begin{cases}
        \mu^{\frac{2L}{L+1}} (1+ \frac{2L}{L+1} \epsilon_i)+ O(\| \epsilon \|_1^2 ), & \textrm{if } i \textrm{ is odd},\\
        \mu^{\frac{2L}{L+1}}\epsilon_i^{\frac{2L}{L+1}}, & \textrm{if } i \textrm{ is even}.
        \end{cases}$$
Hence the derivatives of the logit singular values are given by
\begin{equation}
    \frac{d a_i}{dt} = \begin{cases}
        \frac{1}{D} \mu^{\frac{2L}{L+1}} e^{- \frac{K}{2} \mu} [ K(1+ \frac{2L}{L+1} \epsilon_i - \mu \epsilon_i ) + 2 (e^{- \frac{K}{2} \mu} -1)(1 + \frac{2L}{L+1} \epsilon_i - \mu (\Psi \epsilon)_1)] +O(\| \epsilon \|_1^2), & \textrm{if } i \textrm{ is odd},\\
        \frac{1}{D} K  e^{- \frac{K}{2} \mu} \mu^{\frac{2L}{L+1}} \epsilon_i^{\frac{2L}{L+1}} + O(\| \epsilon \|_1^{2}), & \textrm{if } i \textrm{ is even}.
        \end{cases}
\end{equation}
We now can compare how the relative size of the modes grows. First let $i$ be odd, and $j$ be even, then we have
$$\frac{d \hat{a}_i}{d \hat{a}_j} = \frac{1}{a_j^2} \left[ a_j \frac{da_i}{dt} - a_i \frac{da_j}{dt} \right] $$
\begin{equation} \label{eq:cross_polytope_intermediate_calc}
    =\frac{e^{-\frac{K}{2} \mu} \mu^{1+\frac{2L}{L+1}}}{D a_j^2} \left[ (K-2+2e^{- \frac{K}{2} \mu}) \epsilon_j - K e^{- \frac{K}{2} \mu} \epsilon_j^{\frac{2L}{L+1}} + O( \| \epsilon \|_1^2 ) \right].
\end{equation}
When $L>1$ the second term in the brackets is higher order, and so to leading order this is given by
$$=\frac{e^{-\frac{K}{2} \mu} \mu^{1+\frac{2L}{L+1}}}{D a_j^2} \left[ (K-2+2e^{- \frac{K}{2} \mu}) \epsilon_j + O( \| \epsilon \|_1^{\frac{2L}{L+1}} ) \right].$$
We see that when $\| \epsilon \|_1$ is small enough, the dominant term is given by $\epsilon_j$ up to a positive factor. Note, that $\epsilon_j>0$. This is since $a_j=\mu \epsilon_j$, and we initialize the singular values as positive, from which they must remain so. Hence the derivative $\frac{d}{dt}(\hat{a}_i / \hat{a}_j)$ is positive, and the odd modes grow relative to the even modes, when $L>1$. It is also simple to see from Eq. \eqref{eq:cross_polytope_intermediate_calc} that the opposite is the case for $L=1$.

Next let $i$ and $j$ both be odd, after simplifying the expression we find in this case the relative size grows as
$$\frac{d \hat{a}_i}{d \hat{a}_j} = \frac{\mu^{1+\frac{2L}{L+1}} e^{-\frac{K}{2} \mu}}{D a_j^2} \left[ \left(K (\mu - \frac{L-1}{L+1}) + 2 \frac{L-1}{L+1}(1-e^{- \frac{K}{2} \mu}) \right)(\epsilon_j - \epsilon_i) + O(\| \epsilon \|_1^2) \right].$$
When $\| \epsilon \|_1$ is small enough, a sufficient condition for the leading order term to have the same sign as $\epsilon_j - \epsilon_i$ is $\mu > \frac{L-1}{L+1}$.

Hence we see that the modes close to zero in the structure continue to become small relative to the large modes, and the large modes are drawn to equality, implying $\hat{a}$ is moving towards $\frac{2}{K} c$ when $L>1$ and $\mu > \frac{L-1}{L+1}$, implying this structure is stable throughout this region.

Lastly, we demonstrate the KL divergence can monotonically increase to infinity. To demonstrate this, consider an initialization of the logit singular values of the form
\begin{equation} \label{eq:a_mixed}
    a=[\gamma,\delta,\gamma,...,\delta,\gamma], \quad \gamma,\delta>0,
\end{equation}
so that there are $\frac{K}{2}$ appearances of $\gamma$ and $\frac{K}{2}-1$ appearances of $\delta$. Using Lemma \ref{lem:KL_div_counter}, we have that the corresponding $b_i$ is given by
$$b_i = \begin{cases}
        2e^{- K \gamma} + (K-2) e^{- \frac{K}{2} (\gamma + \delta)}, & \textrm{if } i \textrm{ is odd},\\
        K e^{-\frac{K}{2} (\gamma + \delta )}, & \textrm{if } i \textrm{ is even}.
        \end{cases}$$
Consequently, the logit singular value evolution reduces to
\begin{equation}
    \frac{d a_i}{dt} = \begin{cases}
        \frac{1}{D} \gamma^{\frac{2L}{L+1}} \left(2e^{- K \gamma} + (K-2) e^{- \frac{K}{2} (\gamma + \delta)} \right), & \textrm{if } i \textrm{ is odd},\\
        \frac{1}{D} \delta^{\frac{2L}{L+1}} \left(K e^{-\frac{K}{2} (\gamma + \delta )} \right), & \textrm{if } i \textrm{ is even}.
        \end{cases}
\end{equation}
We see that if $a$ is initialized in the form of Eq. \eqref{eq:a_mixed}, it remains in this form under gradient flow. Now we consider how the relative size of the two distinct values changes
$$\frac{d}{dt} \left( \frac{\gamma}{\delta} \right) = \frac{1}{\delta^2} \left( \delta \frac{d \gamma}{dt} - \gamma \frac{d \delta}{dt} \right)$$
$$=\frac{1}{D \delta^2} \left[ \delta \gamma^{\frac{2L}{L+1}} \left( 2e^{-K \gamma} + (K-2) e^{- \frac{K}{2} (\gamma + \delta)} \right) - \gamma \delta^{\frac{2L}{L+1}} K e^{- \frac{K}{2} (\gamma + \delta)} \right]$$
$$= \frac{\gamma e^{-K \gamma}}{D \delta} \left[ \gamma^{\frac{L-1}{L+1}} \left( 2 + (K-2) e^{\frac{K}{2} ( \gamma - \delta)} \right) - K \delta^{\frac{L-1}{L+1}} e^{\frac{K}{2} ( \gamma - \delta)} \right].$$
This derivative is positive when
$$\gamma^{\frac{L-1}{L+1}} \left( 2 + (K-2) e^{\frac{K}{2} ( \gamma - \delta)} \right) > K \delta^{\frac{L-1}{L+1}} e^{\frac{K}{2} ( \gamma - \delta)},$$
for which a sufficient condition is that
$$ (K-2) \gamma^{\frac{L-1}{L+1}} e^{\frac{K}{2} ( \gamma - \delta)} > K \delta^{\frac{L-1}{L+1}} e^{\frac{K}{2} ( \gamma - \delta)},$$
which can be rewritten as 
\begin{equation} \label{eq:gamma_delta_threshold}
    \frac{\gamma}{\delta}  > \left(\frac{K}{K-2} \right)^{\frac{L+1}{L-1}}.
\end{equation}
We see that if $\gamma/ \delta$ is above the threshold given in Eq. \eqref{eq:gamma_delta_threshold}, then $\frac{d}{dt}( \gamma / \delta)$ is positive, and so $\gamma/\delta$ will monotonically increase under gradient flow, and so remains above the threshold. Consequently if we initialize in the region specified by this threshold, then the normalized logit singular values $\hat{a}$ must tend monotonically towards the vector $\frac{2}{K}(1,0,1,...,0,1)$. This is since the normalized singular values, for even and odd index are given by
$$a_{\textrm{odd}}=\frac{1}{\frac{K}{2}+\left( \frac{K}{2}-1 \right) \frac{\delta}{\gamma}}, \quad a_{\textrm{even}}=\frac{1}{\frac{K}{2} \frac{\gamma}{\delta}+\left( \frac{K}{2}-1 \right)},$$
which clearly tend to $\frac{2}{K}$ and $0$ respectively. To note that this monotonic approach must guarantee we tend to the limit point, rather than stalling at some other structure, recall in Lemma \ref{lem:KKT_reduct} it was shown that the direction must tend to a KKT point, meaning a critical direction under the dynamics at infinity. It is simple to observe that the same condition in Eq. \eqref{eq:gamma_delta_threshold} also guarantees that the only direction for which $\frac{d}{dt}(\hat{a})=0$ is the limit point, and so it is the only KKT point in this portion of the space and so must be to where the path converges.

This guarantees that the KL divergence tends to infinity since 
\begin{equation} \label{eq:KL_div}
    D_{\mathrm{KL}}(\frac{1}{K-1} 1_{K-1} \,\|\, \hat{a})= - \log (K-1) - \frac{1}{K-1} \sum_i \log ( \hat{a}_i ),
\end{equation}
and clearly if any $\hat{a}_i$ tends to zero this diverges to infinity, which we have shown occurs for the described initialization.

\subsection{Proof of Theorem \ref{thm:lin_hada_dynam}} \label{sec:lin_hada_dynam_proof}

Recall our dynamics are given by
$$\frac{da_i}{dt} = \frac{1}{D} b_i a_i^{\frac{2L}{L+1}}, \quad b_i = \sum_{j=1}^{K-1} \Psi_{ij} e^{-(\Psi a)_j }, \quad D = 1+\sum_{j=1}^{K-1} e^{-(\Psi a)_j }.$$
Suppose that $\| a\|_1 = \sum_i a_i = \epsilon \ll 1$. Expanding the exponential gives 
\begin{equation} \label{eq:D_expansion}
    D=1+ \sum_j [1 - (\Psi a)_j] + O(\epsilon^2) = K - K \epsilon + O( \epsilon^2),
\end{equation}
where we used that $\sum_j \Psi_{ij} = K$, which is inherited from the properties of the Sylvester Hadamard matrix. Similarly
$$b_i = \sum_j \Psi_{ij} (1- (\Psi a)_j) + O( \epsilon^2) = K - (\Psi^2 a)_i + O( \epsilon^2).$$
We then use that $\Psi^2=K(I_{K-1} + 1_{K-1} 1_{K-1}^T)$, so that
\begin{equation} \label{eq:b_expansion}
    b_i = K - Ka_i -K \epsilon + O(\epsilon^2).
\end{equation}
Inputting Eq. \eqref{eq:D_expansion} and \eqref{eq:b_expansion} into the expression of the derivative gives:
$$\frac{d a_i}{dt} = a_i^{\frac{2L}{L+1}} [K - K a_i - K \epsilon + O(\epsilon^2)][K(1 - \epsilon) + O( \epsilon^2)]^{-1}$$
$$=a_i^{\frac{2L}{L+1}} [1 - a_i - \epsilon + O(\epsilon^2)][1 + \epsilon + O( \epsilon^2)]$$
$$= a_i^{\frac{2L}{L+1}}(1-a_i) + O \left( \epsilon^{2 + \frac{2L}{L+1}} \right).$$
We now consider how the ratio of normalized singular values changes in this linearized regime. This is given by
\begin{equation} \label{eq:norm_sing_vals_ratio_deriv}
    \frac{d}{dt} \left( \frac{\hat{a}_i}{\hat{a}_j} \right) =\frac{d}{dt} \left( \frac{a_i}{a_j} \right) =  \frac{a_i}{a_j} \left[ a_i^{\frac{L-1}{L+1}}(1-a_i) - a_j^{\frac{L-1}{L+1}} (1-a_j) + O \left( \epsilon^{2 + \frac{L-1}{L+1}} \right) \right].
\end{equation}
When $L=1$, this reduces to
$$ \frac{d}{dt} \left( \frac{\hat{a}_i}{\hat{a}_j} \right) = - \frac{a_i}{a_j \|a\|_1} \left[ \hat{a}_i- \hat{a}_j + O \left( \epsilon \right) \right].$$
When $\epsilon$ small enough, this clearly has opposite sign to $\hat{a}_i - \hat{a}_j$. Consequently the ratio $\hat{a}_i/\hat{a}_j$ must monotonically approach 1 in the linearized regime, meaning any two modes that are different must approach each other. Clearly under such dynamics, the only stable structure is when $\hat{a}$ is the uniform vector, meaning DNC is the only stable structure under the linearized dynamics for $L=1$.

When $L \geq 2$ and $\|a \|_1$ is small as in the linearized regime, we have 
$$\hat{a}_i > \hat{a_j} \implies a_i^{\frac{L-1}{L+1}}(1-a_i) > a_j^{\frac{L-1}{L+1}} (1-a_j).$$
Consequently the sign of the derivative in Eq. \eqref{eq:norm_sing_vals_ratio_deriv} is the same as that of $\hat{a}_i - \hat{a}_j$ when $\epsilon$ is small enough. This implies that any two non-equal singular values must grow further apart. Thus, DNC is critical, but not stable, in the linearized regime for $L \geq 2$.

\subsection{Proof of Theorem \ref{thm:rand_init}} \label{sec:rand_init_proof}

Recall our optimization problem, given in Eq. \eqref{eq:general_deep_UFM}. It was shown in Eq. \eqref{eq:grad_flow} that the gradient flow equations for the parameter matrices are given by (using the shorthand $W_0 :=H_1$)
$$\frac{d W_l}{dt} = A_{l+1}^T (Y-P) H_l^T, \quad \textrm{for } l=0,...,L,$$
where
$$A_{l+1} = W_L...W_{l+1}, \quad \textrm{for } l=0,...,L-1, \quad A_{L+1}=I_K,$$
$$H_{l} = W_{l-1}...W_1H_1, \quad \textrm{for } l=1,...,L, \quad H_0=I_{Kn},$$
and $P$ is the softmax output of the network
$$P_{ij}= \frac{\exp(Z_{ij})}{\sum_{u} \exp(Z_{uj})}.$$
We can consider how the logit matrix $Z=W_L...W_1H_1$ is updated directly 
$$\frac{dZ}{dt}= \sum_{l=0}^L W_L ... W_{l+1} \frac{d W_l}{dt} W_{l-1} ... H_1$$
\begin{equation} \label{eq:logit_deriv}
    =\sum_{l=0}^{L} A_{l+1} A_{l+1}^T (Y-P)H_l^T H_l.
\end{equation}
Now recall that we initialize our parameter matrices as $W_l(0)=\epsilon B_l$, $l=0,...,L$, where each $B_l$ has entries samplied i.i.d. from a Gaussian distribution with mean 0 and variance $1/d$. We now quote Lemma \ref{lem:conv_in_prob}, which states that we have the following convergence in probability as $d \rightarrow \infty$:
$$A_{l+1}A_{l+1}^T \rightarrow \epsilon^{2(L-l)} I_K, \quad H_{l}^T H_l \rightarrow \epsilon^{2l} I_{Kn}.$$
Consequently we have
$$\frac{dZ}{dt}(0) \rightarrow (L+1) \epsilon^{2L} (Y-P(0)).$$
Now we note that as $\epsilon \rightarrow 0$, we have
$$P_{ij} = \frac{\exp(Z_{ij})}{\sum_u \exp(Z_{uj})} \rightarrow \frac{1}{K},$$
and hence 
$$P \rightarrow \frac{1}{K} 1_K 1_{Kn}^T,$$
so that
$$Y-P \rightarrow I_K \otimes 1_n^T - \frac{1}{K} 1_K 1_{Kn}^T = S \otimes 1_n^T.$$
Hence in these joint limits we have the following convergence in probability
$$\frac{dZ}{dt}(0) \rightarrow (L+1) \epsilon^{2L} S \otimes 1_n^T.$$

\subsection{Supporting Lemmas}

\begin{lemma} \label{lem:KKT_reduct}
    Consider GF on the deep UFM (Eq. \ref{eq:general_deep_UFM}) with network width $d \geq K$. If there exists $t_0$ such that $\mathcal{L}(t_0)<\log(2)$, then any limit point of the normalized parameters $(\hat{W_L},...,\hat{W}_1, \hat{H_1})$ lies in the direction of a Karush-Kuhn-Tucker (KKT) point of the constrained optimization problem
$$\qquad\qquad   \min_{W_L,...,W_1,H_1}  \|H_1\|_F^2 + \sum_{l=1}^L \|W_l \|_F^2$$
$$\textrm{s.t. } (z_{ic})_c - (z_{ic})_{c'} \geq 1, \quad c' \neq c=1,...,K, \quad i=1,...,n, \notag$$
where $z_{ic}$ are the columns in class order of the logit matrix $Z=W_L...W_1H_1$.
\end{lemma}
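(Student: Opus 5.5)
The plan is to apply the main theorem of \citet{lyu2019gradient} (their Theorem 4.4, with the multiclass extension in their Appendix) directly. For this we only need the deep UFM to satisfy their hypotheses.

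\textbf{Step 1: homogeneity.} Collect all parameters into $\theta=(W_L,\dots,W_1,H_1)$. The logit matrix $Z(\theta)=W_L\cdots W_1H_1$ is multilinear in the $L+1$ factors. Hence each margin function
\[
q_{icc'}(\theta)=(z_{ic})_c-(z_{ic})_{c'}
\]
is positively homogeneous of degree $L+1$ in $\theta$, i.e.\ $q(\lambda\theta)=\lambda^{L+1}q(\theta)$ for $\lambda>0$. Each $q_{icc'}$ is a polynomial, so it is smooth, and in particular locally Lipschitz. This is exactly the regularity their framework requires.

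\textbf{Step 2: loss structure.} Write the CE loss of Eq.~\eqref{eq:general_deep_UFM} as
\[
\mathcal{L}=\sum_{i,c}\log\Bigl(1+\sum_{c'\neq c}e^{-q_{icc'}(\theta)}\Bigr).
\]
This is the multiclass cross-entropy form treated by \citet{lyu2019gradient}. Their assumption is that $\mathcal{L}(t_0)<\log 2$ at some time $t_0$. This forces every summand to be below $\log 2$, so $\sum_{c'\neq c}e^{-q_{icc'}}<1$ for each $(i,c)$. Consequently every $q_{icc'}>0$, i.e.\ the data are separated at $t_0$.

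\textbf{Step 3: invoke the theorem.} With homogeneity and separation established, their result gives three facts. First, $\|\theta(t)\|\to\infty$. Second, the smoothed normalized margin is nondecreasing. Third, every limit point $\bar\theta$ of $\theta(t)/\|\theta(t)\|_2$ is, after rescaling, a KKT point of
\[
\min \tfrac12\|\theta\|_2^2 \quad\text{s.t. } q_{icc'}(\theta)\ge 1 .
\]
Because $\|\theta\|_2^2=\|H_1\|_F^2+\sum_l\|W_l\|_F^2$, this is exactly the problem stated in the lemma; the factor $\tfrac12$ does not change the KKT set up to multiplier scaling.

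\textbf{Step 4: per-matrix normalization.} The lemma is phrased with each matrix normalized separately, $\hat W_l=W_l/\|W_l\|_F$, whereas Step 3 concerns the jointly normalized $\theta/\|\theta\|$. To bridge this, use the conserved quantities $W_l^TW_l-W_{l-1}W_{l-1}^T$. Taking traces, $\|W_l\|_F^2-\|W_{l-1}\|_F^2$ is constant along the flow. Since the norms diverge, $\|W_l\|_F/\|\theta\|\to (L+1)^{-1/2}$ for every $l$. Therefore any limit point of the per-matrix normalized tuple equals $\sqrt{L+1}$ times a limit point of $\theta/\|\theta\|$, and so lies in the direction of the same KKT point.

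\textbf{Main obstacle.} The substantive step is the hypothesis check in Step 3. Their multiclass statement is phrased for losses of the form $\sum\log\sum\exp$, and we must confirm our loss matches. We also need $\theta\mapsto q$ to be $C^1$ so that the KKT conditions are the ordinary ones rather than Clarke ones. Both hold trivially here, since the margins are polynomial. The width condition $d\ge K$ plays no role beyond ensuring the problem is feasible.
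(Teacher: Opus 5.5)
Your proposal is correct and follows the same route as the paper. The paper's proof simply notes that the deep UFM is an $(L+1)$-homogeneous model and invokes Theorem~4.4 of \citet{lyu2019gradient}, with its multiclass cross-entropy extension in their Appendix~G. Your explicit separation check, and your Step~4 using the conserved quantities $\|W_l\|_F^2-\|W_{l-1}\|_F^2$ to pass from the joint normalization $\theta/\|\theta\|$ to the per-matrix normalization $\hat{W}_l=W_l/\|W_l\|_F$ used in the statement, fill in details the paper leaves implicit.
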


\begin{proof}
Note the arguments given in the proof of Theorem 3.2 of \cite{ji2022an}, though for the $L=1$ UFM, can be applied directly to produce the constrained optimization problem above from the gradient flow dynamics on the deep UFM. Alternatively, the arguments in the proof of Theorem 4.4 of \cite{lyu2019gradient}, as described in their main text and their App. G, can be applied to this model since the deep UFM is an example of a homogeneous network, from which the result follows as a corollary of their main theorem.
\end{proof}

\begin{lemma} \label{lem:diag_superior}
    Let $K>2$, and $Z' \in \mathbb{R}^{K \times K}$ satisfy $Z'_{cc} - Z'_{c'c} \geq 1$ for $c, c' \neq c =1,...,K$. Then $\textrm{rank}(Z') \geq 2$.
\end{lemma}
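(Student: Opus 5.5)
We argue by contradiction, ruling out rank $0$ and rank $1$ separately. If $\textrm{rank}(Z')=0$ then $Z'=0$ and every margin $Z'_{cc}-Z'_{c'c}$ vanishes, violating the constraint. So the substantive case is $\textrm{rank}(Z')=1$, where we write $Z'=uv^T$ with $u,v\in\mathbb{R}^K$.

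In this case the constraint for column $c$ becomes
\[
v_c\,(u_c-u_{c'})\geq 1 \qquad \text{for all } c'\neq c .
\]
In particular $v_c\neq 0$ for every $c$. If $v_c>0$, then $u_c>u_{c'}$ for all $c'\neq c$, so $u_c$ is the strict maximum of the entries of $u$. If $v_c<0$, then $u_c<u_{c'}$ for all $c'\neq c$, so $u_c$ is the strict minimum of the entries of $u$.

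Hence every index $c$ is either the unique argmax or the unique argmin of $u$. There is at most one unique argmax and at most one unique argmin, so at most two indices can qualify. This contradicts $K>2$, so $\textrm{rank}(Z')\geq 2$.

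There is no real obstacle here. The only care needed is to use the strictness of the inequality, which follows from the margin being at least $1>0$, so that "maximum" means a strict, unique maximum and ties cannot let more indices qualify. One should also note that the sign of $v_c$ is fixed by requiring the product to be positive.
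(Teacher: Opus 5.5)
Your proof is correct and matches the paper's argument: you rule out rank $0$, write a rank-one $Z'$ as $uv^T$, and use the sign of $v_c$ to force $u_c$ to be the unique maximum or unique minimum of $u$. Since at most two indices can qualify, this contradicts $K>2$. Your write-up is slightly cleaner than the paper's, which mistakenly writes $v_{c'}$ in place of $v_c$ in the negative case.
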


\begin{proof}

Clearly the condition does not hold if $\textrm{rank}(Z')=0$. If $\textrm{rank}(Z')=1$, then
$$Z'=u v^T, \quad \textrm{for } u,v \in \mathbb{R}^K.$$
Hence our condition becomes
$$u_c v_c - u_{c'} v_c \geq 1.$$
Clearly $v_c$ is always non-zero, else the condition is not satisfied. if $v_c$ is positive then
$$u_c \geq u_{c'} +\frac{1}{v_c} > u_{c'},$$
and so $u_c$ must be the unique largest element of the vector $u$. If $v_c$ is negative then
$$u_c < u_c - \frac{1}{v_{c'}} \leq u_{c'},$$
and so $u_c$ must be the unique smallest element of the vector $u$.

Hence each entry of $u$ must be either the largest or the smallest element of the vector in order for $Z'$ to satisfy the condition. However at most two entries can be either the unique largest or the unique smallest entry. Given that $K>2$ this condition can hence never be satisfied, and so we must have $\textrm{rank}(Z') \geq 2$.
\end{proof}

\begin{lemma} \label{lem:max_of_cos_sum}
    Consider the following optimization problem for $K \geq 3$
    $$\max_{\theta_1,...,\theta_K \in (0, 2 \pi]} \left\{ \sum_{u=1}^K \cos(2 \theta_u)  \right\},$$
    subject to the condition that all angles are separated by at least $\delta \in (0,\frac{2\pi}{K}]$, including the wrap around value. The attained value of this objective when $K$ is even, or $K$ is odd and $\delta \in (0,\frac{2 \pi}{K+1}]$, is given by
    $$\frac{\sin\left( \lfloor \frac{K}{2} \rfloor \delta \right) + \sin \left( \lceil \frac{K}{2} \rceil \delta \right)}{\sin(\delta)},$$
    whilst for odd $K$ and $\delta \in (\frac{2\pi}{K+1}, \frac{2 \pi}{K})$, it is given by
    $$\frac{|\sin(K\delta)|}{\sin(\delta)}.$$
\end{lemma}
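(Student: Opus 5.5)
We maximize $\sum_u \cos(2\theta_u)$ over $K$ points on the circle with pairwise circular gaps at least $\delta$. The plan is to pass to the doubled angles $\phi_u = 2\theta_u$ and control where these can sit near the target angle $\phi=0$ (mod $2\pi$). The map $\theta\mapsto 2\theta$ is two-to-one, so the points near $\phi=0$ come from the two antipodal windows around $\theta=0$ and $\theta=\pi$. By rotational symmetry of the constraint we may also shift configurations freely. The objective is a sum of a function $g(\theta)=\cos 2\theta$ that is maximized at $\theta\in\{0,\pi\}$ and decreases in $|\theta|$ on $[0,\pi/2]$ (and symmetrically around $\pi$).

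\textbf{Step 1 (reduction to two clusters).} Sort the angles cyclically. I would argue by an exchange/compression argument that an optimizer packs all points into two arcs, one centred near $0$ and one near $\pi$, with consecutive gaps exactly $\delta$ inside each arc. The mechanism: any point whose gaps to both neighbours exceed $\delta$ can be slid towards the nearer of $0,\pi$ (along which $g$ increases) without violating constraints, so at an optimum every slack gap sits "across" a point of minimal $g$ (near $\pm\pi/2$). Hence points form at most two tight chains, each monotonically approaching $0$ or $\pi$.

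\textbf{Step 2 (optimizing each chain).} A tight chain of $j$ points at angles $c+(k-\tfrac{j-1}{2})\delta$ gives, by the standard Dirichlet-kernel identity, $\sum_k \cos(2c+(2k-j+1)\delta)=\cos(2c)\,\sin(j\delta)/\sin\delta$. This is maximized at $c\in\{0,\pi\}$ provided $\sin(j\delta)\ge 0$, i.e.\ $j\delta\le\pi$. Centring one chain of size $j$ at $0$ and the other of size $K-j$ at $\pi$ is feasible iff the two chains fit without overlap: $(j-1)\delta/2+(K-j-1)\delta/2+\delta\le\pi$, which reduces to $K\delta\le 2\pi$ and always holds. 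This yields the value $h(j)=\big(\sin(j\delta)+\sin((K-j)\delta)\big)/\sin\delta = 2\sin(K\delta/2)\cos((K-2j)\delta/2)/\sin\delta$, maximized by the most balanced split $j=\lfloor K/2\rfloor$. This gives the first formula whenever both chains have $j\delta\le\pi$, automatic for $K$ even; for $K$ odd it needs $\lceil K/2\rceil\delta\le\pi$, i.e.\ $\delta\le 2\pi/(K+1)$.

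\textbf{Step 3 (odd $K$, large $\delta$; main obstacle).} When $K$ is odd and $\delta\in(2\pi/(K+1),2\pi/K)$, the larger chain has $\lceil K/2\rceil\delta>\pi$ and the symmetric placement is no longer optimal or consistent with Step 2's sign condition. Here there is essentially one degree of freedom: the total slack $2\pi-K\delta$ is less than $\delta$, so the configuration is a near-regular $K$-gon with a single enlarged gap. I would parametrize it by the rotation $c$ and compute the sum as $\mathrm{Re}\big(e^{2ic}\sum_k e^{2ik\delta}\big)$, whose maximum over $c$ is $|\sin(K\delta)|/\sin\delta$. The delicate part is justifying that distributing the slack into one gap, rather than splitting it across two, is optimal; I would handle this by showing, via the Step 1 compression, that any split of slack between two gaps gives two chains, and then comparing against the formula of Step 2, which for these $\delta$ evaluates no better. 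Verifying these inequalities carefully on this narrow interval is where I expect most effort to go.
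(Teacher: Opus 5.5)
\textbf{There is a genuine gap.} Your skeleton matches the paper's: reduce to two tight chains, evaluate each chain with the Dirichlet kernel, and pick the balanced split. What is missing is any bound for two-chain configurations in which one chain has $\sin(j\delta)<0$. For such a split the best placement is not centres at $0$ and $\pi$, so $h(j)$ is not that split's value, and maximizing $h(j)$ over $j$ does not control it.

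This case already arises inside your Step 2 regime, for example $K$ even with an unbalanced split and $\delta$ near $2\pi/K$, not only in Step 3. It also cannot be handled by optimizing the two centres separately, as Step 2 does. Take $K=4$, $\delta=\pi/2-\eta$ and the split $(3,1)$. The separate optima put the triple at $\pi/2$ and the singleton at $0$, giving about $2$. The true maximum is $2\sin 2\delta/\sin\delta\approx 4\eta$; that placement is simply infeasible. So the inter-chain gap constraint has to enter the optimization.

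Your Step 3 remedy does not supply this either. The formula $h(j)$ describes only the placement that splits the slack $2\pi-K\delta$ equally between the two inter-chain gaps. Comparing against it therefore says nothing about unequal splits of the slack, which is exactly what you need to rule out.

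\textbf{The missing step} is the paper's relative-offset computation, which settles both regimes at once.
Centre chains of sizes $r,s$ at $\alpha$ and $\pi+\beta$. Their mutual feasibility is exactly $|\beta-\alpha|\le\pi-K\delta/2$. After optimizing the common rotation, which preserves feasibility, the value is $|A+e^{i\nu}B|$, where $A=\sin(r\delta)/\sin\delta$, $B=\sin(s\delta)/\sin\delta$ and $\nu=2(\beta-\alpha)$.

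If $AB\ge 0$, then $\nu=0$ is optimal; this is your case.

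If $AB<0$, then $\delta>\pi/K$, so $|\nu|\le 2\pi-K\delta<\pi$. Since $|A+e^{i\nu}B|^2=A^2+B^2+2AB\cos\nu$, the value increases with $|\nu|$. At $|\nu|=2\pi-K\delta$ the two chains merge into one, and the identity $\sin x+e^{i(x+y)}\sin y=e^{iy}\sin(x+y)$ gives the value $|\sin K\delta|/\sin\delta$.

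To finish, write $|\sin K\delta|=2\sin(K\delta/2)|\cos(K\delta/2)|$.
For even $K$ this is at most $2\sin(K\delta/2)$. For odd $K$ with $\lceil K/2\rceil\delta\le\pi$ it is at most $2\sin(K\delta/2)\cos(\delta/2)=\sin(\lfloor K/2\rfloor\delta)+\sin(\lceil K/2\rceil\delta)$. In both cases this is $\sin\delta$ times the balanced value, so such splits never win there. For odd $K$ and $\delta>2\pi/(K+1)$, every split has one negative sine, so every split gives exactly $|\sin K\delta|/\sin\delta$.

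\textbf{A smaller point in Step 1:} the sliding rule must be one-sided. Moving only points with slack on both sides does not exclude a slack gap inside one basin; consecutive gaps $\delta,2\delta,\delta$ leave no such point. Instead, move whichever endpoint of a slack gap is farther from its attractor. That move only shrinks the slack gap.
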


\begin{proof}
    Note we can redefine the domain to be $(-\frac{\pi}{2},\frac{3\pi}{2}]$ without changing the value of the maximum. Define the regions $C_0=(-\frac{\pi}{2},\frac{\pi}{2}]$ and $C_\pi = (\frac{\pi}{2}, \frac{3 \pi}{2}]$. We begin by considering a configuration of $\theta_1,...,\theta_K$ with $r$ points in $C_0$, denoted $x_1<...<x_r$, and $s=K-r$ points in $C_\pi$, denoted $y_1<...<y_s$.

    Suppose that the configuration has a choice of $j$ such that $x_{j+1} - x_j>\delta$. Then clearly there exists some value $\epsilon$ such that we can move the furthest of $x_j,x_{j+1}$ by $\epsilon$ in the direction of zero without breaking feasibility. This must improve the objective, since $\cos(2\theta)$ increases as you approach $0$ within the set $C_0$. Hence in order for such a configuration to be optimal, we must have $x_{j+1} - x_j = \delta$ for $j=1,...,r-1$. The exact same reasoning holds for the $y$'s within the set $C_\pi$. Hence any optimal configuration must have
    \begin{equation}
        x_j = \alpha + \left( j-\frac{r+1}{2} \right)\delta,\qquad y_j = \pi + \beta + \left(j - \frac{s+1}{2} \right) \delta,
    \end{equation}
    where we have written this so that the two clusters are centered at $\alpha,\pi +\beta$ respectively.
    We have ensured such a configuration has feasibility within $C_0$ and $C_\pi$. To ensure feasibility between the two subsets, we consider the edge points of $C_0,C_\pi$. The leftmost point in $C_0$ is $\alpha - \frac{r-1}{2} \delta$ and the rightmost point in $C_\pi$ is $\beta + \frac{s-1}{2} \delta$. Demanding the difference, after adding $2 \pi$ to get the wrap around value, is at least $\delta$ gives
    $$\alpha - \beta \geq - \left( \pi - \frac{K}{2} \delta \right).$$
    Doing the same on the other side gives
    $$\alpha - \beta \leq \pi - \frac{K}{2} \delta,$$
    and so the condition for feasibility at the edge points can be stated as
    $$|\beta - \alpha| \leq \pi - \frac{K}{2} \delta.$$
    Returning to the objective value, the contribution from the points in $C_0$ is given by
    $$\sum_{j=1}^r \cos(2 \alpha +2j - (r+1)\delta) = \textrm{Re} \left( e^{2i \alpha -(r+1)i\delta} \sum_{j=1}^r e^{2ij \delta} \right) $$
    $$=\textrm{Re} \left( e^{2i \alpha} e^{-(r+1)i \delta} \frac{e^{2i \delta} \left( e^{2ri \delta} - 1 \right)}{e^{2 i \delta} -1} \right) = \textrm{Re} \left( e^{2i \alpha} \frac{\sin(r \delta)}{\sin( \delta)} \right).$$
    The contribution from the points in $C_\pi$ can be calculated similarly, giving a total objective value of
    \begin{equation}
        \textrm{Re} \left( e^{2i \alpha} \frac{\sin(r \delta)}{\sin( \delta)} + e^{2i \beta} \frac{\sin(s \delta)}{\sin( \delta)} \right).
    \end{equation}
    Note this is clearly upper bounded by the modulus, where a global rotation allows for attainment of this bound, so we can rewrite this as
    $$= \left| \frac{\sin(r \delta)}{\sin( \delta)} + e^{2i (\beta- \alpha)} \frac{\sin(s \delta)}{\sin( \delta)} \right|.$$
    Define the quantity $\nu = 2 (\beta - \alpha)$. Clearly if both $\sin(r \delta)$ and $\sin (s \delta)$ are non-negative, then the optimal value is $\nu =0$. We call this Case 1, and it has objective value
    $$\frac{\sin(r \delta) + \sin (s \delta)}{\sin(\delta)}.$$
    Note that at most one of $\sin(r \delta)$ and $\sin( s \delta)$ is negative, since we require $r \delta \geq \pi$ or $s \delta \geq \pi$ for this to happen, which since $\delta \geq \frac{2\pi}{K}$, requires as a necessary condition $r$ or $s$ is greater than $\frac{K}{2}$, which can only happen for at most one. We call the case where one of them is negative Case 2. In this case our objective value is 
    $$\frac{1}{\sin ( \delta)} \left( \sin^2(r \delta) + \sin^2(s \delta) + 2 \sin(r \delta) \sin(s \delta) \cos( \nu) \right)^{\frac{1}{2}}.$$
    We must have $\delta > \frac{\pi}{K}$ for this to happen, since $r,s \leq K$, then from the feasibility condition we have $|\nu| \leq \pi$. Hence, the objective is maximized when $|\nu|$ is at its upper bound, and so $|\nu|=2 \pi - K \delta$. Then using standard trigonometric identities, this case reduces to an objective value of
    $$\frac{|\sin(K \delta)|}{\sin(\delta)}.$$
    We have now calculated the best objective value for fixed $r$ conditional on the sign of the sine terms, it remains to optimize $r$. First we optimize among Case 1:
    $$\sin(r \delta) + \sin(s \delta)=2 \sin \left(\frac{K}{2} \delta \right) \cos \left( \frac{2r-K}{2} \delta \right).$$
    Since we have $0 < \frac{K}{2} \delta \leq \pi$, the $\sin$ term is non-negative. We also have $- \pi \leq \frac{2r-K}{2} \delta \leq \pi$, and so the cosine term is maximized when $2r-K$ is closest to zero, meaning $r = \lfloor \frac{K}{2} \rfloor$. This solution: $r=\lfloor \frac{K}{2} \rfloor$, $s=\lceil \frac{K}{2} \rceil$ maintains non-negativity of both $\sin(r \delta)$ and $\sin(s \delta)$ provided $\lceil \frac{K}{2} \rceil \delta \leq \pi$.
    Assuming this is the case, we show Case 2 cannot beat Case 1. The Case 2 objective value is
    $$| \sin( K \delta) | = 2 \sin \left( \frac{K}{2} \delta \right) \left| \cos \left( \frac{K \delta}{2} \right) \right|.$$
    For even $K$ we have
    $$2 \sin \left( \frac{K}{2} \delta \right) \left| \cos \left( \frac{K \delta}{2} \right) \right| \leq 2 \sin \left( \frac{K}{2} \delta \right),$$
    and so Case 1 beats Case 2 for even $K$. For odd $K$ when $\lceil \frac{K}{2} \rceil \delta \leq \pi$ we have
    $$2 \sin \left( \frac{K}{2} \delta \right) \left| \cos \left( \frac{K \delta}{2} \right) \right| \leq 2 \sin \left( \frac{K}{2} \delta \right) \cos \left( \frac{1}{2} \delta \right) = \sin \left( \left\lfloor \frac{K}{2} \right\rfloor \delta \right) + \sin \left( \left\lceil \frac{K}{2} \right\rceil \delta \right),$$
    where we used that $0 \leq \frac{1}{2} \delta < \frac{K}{2} \delta \leq \lceil \frac{K}{2} \rceil \delta \leq \pi$. Hence for all cases where $r = \lfloor \frac{K}{2} \rfloor \delta$ leads to positive $\sin(r \delta)$ and $\sin(s \delta)$, we have that the best objective value is 
    $$\frac{1}{\sin(\delta)} \left(\sin\left( \left\lfloor \frac{K}{2} \right\rfloor \delta \right) + \sin \left( \left\lceil \frac{K}{2} \right\rceil \delta \right) \right).$$
    The only case where we do not get both sine terms positive is when $\lceil \frac{K}{2} \rceil \delta > \pi$, which occurs when $K$ is odd and $\delta \in \left( \frac{2\pi}{K+1}, \frac{2 \pi}{K} \right]$. In this case any split produces a negative sine term, and so the maximum must instead occur for Case 2, meaning we get the alternative optimal value of 
    $$\frac{1}{\sin(\delta)} \left| \sin(K \delta) \right|.$$

\end{proof}

\begin{lemma}[\textbf{From Theorem 1 of \citet{Garrod2025DiagonalizingTS}}]\label{lem:softmax_hadamard}
    Let $K=2^m$ for $m \in \mathbb{N}$. Suppose $U=\frac{1}{\sqrt{K}} \Phi$, where $\Phi$ is the $K \times K$ Sylvester Hadamard matrix. Denote the columns of $U$ by $u_i$, $i=1,\dots,K$. For a matrix of the form $Z=\sum_i a_i u_i u_i^T$, the corresponding softmax matrix is
$$P(Z)=\sum_i \nu_i u_i u_i^T,$$
\textit{where}
$$\nu_i= \frac{\sum_j \Phi_{ij} e^{\frac{1}{K} (\Phi a)_j}}{\sum_j e^{\frac{1}{K}(\Phi a)_j}}.$$ 
\end{lemma}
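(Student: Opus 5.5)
We prove the statement by diagonalizing both $Z$ and $P(Z)$ in the Walsh--Hadamard basis, using the group structure of Sylvester Hadamard matrices. The key facts are Lemma~\ref{lem:bitwise_XOR} (columns satisfy $\phi_i\circ\phi_j=\phi_{i\oplus j}$) and Lemma~\ref{lem:Hada_as_minus_ones} ($\Phi_{ij}=(-1)^{i\cdot j}$). In particular, $\Phi$ is symmetric with $\Phi^2=K I_K$, so $U=\Phi/\sqrt K$ is a symmetric orthogonal matrix.

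\textbf{Step 1: entries of $Z$.} Exactly as in the proof of Proposition~\ref{prop:hada_rank}, we write
$$Z_{xy}=\frac1K\sum_i a_i(\phi_i)_x(\phi_i)_y=\frac1K\sum_i a_i(\phi_x\circ\phi_y)_i=\frac1K(\Phi a)_{x\oplus y}.$$
Thus $Z$ is a ``group convolution'' matrix on $\mathbb{F}_2^m$: each entry depends only on $x\oplus y$. Every column of $Z$ is a permutation of the same vector $g:=\frac1K\Phi a$.

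\textbf{Step 2: the softmax preserves this structure.} The column-wise softmax gives
$$P_{xy}=\frac{e^{g_{x\oplus y}}}{\sum_{x'}e^{g_{x'\oplus y}}}=\frac{e^{g_{x\oplus y}}}{\sum_j e^{g_j}},$$
because $x'\mapsto x'\oplus y$ is a bijection. Hence $P_{xy}=h_{x\oplus y}$ with $h_j=e^{g_j}/\sum_k e^{g_k}$. The normalizing constant being column-independent is the crucial point.

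\textbf{Step 3: diagonalize $P$.} Any matrix of the form $M_{xy}=h_{x\oplus y}$ is diagonalized by the characters $\phi_i$. Indeed,
$$(M\phi_i)_x=\sum_y h_{x\oplus y}(-1)^{i\cdot y}=\sum_z h_z(-1)^{i\cdot(x\oplus z)}=(\phi_i)_x(\Phi h)_i,$$
using the substitution $z=x\oplus y$ and bilinearity of the mod-2 dot product. Therefore $P=\sum_i(\Phi h)_i\,u_iu_i^T$, so
$$\nu_i=(\Phi h)_i=\frac{\sum_j\Phi_{ij}e^{\frac1K(\Phi a)_j}}{\sum_j e^{\frac1K(\Phi a)_j}},$$
which is the claimed formula.

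The main obstacle is Step 2: we must verify that the column normalization of the softmax is uniform across columns, since otherwise $P$ would not be a convolution matrix. This follows from the XOR-translation invariance of the index set; the remaining steps are routine character computations.
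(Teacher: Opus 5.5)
Your proof is correct and complete. The paper gives no proof of this lemma: it is imported from Theorem 1 of \citet{Garrod2025DiagonalizingTS}, so there is no in-paper argument to match. Your route is a self-contained proof that uses only the two Hadamard facts the paper already records, Lemma~\ref{lem:bitwise_XOR} and Lemma~\ref{lem:Hada_as_minus_ones}. These are the same tools behind the XOR computation that opens the proof of Proposition~\ref{prop:hada_rank}.

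Your argument also isolates the structural reason the lemma holds. First, $Z$ is a group-convolution matrix on $\mathbb{F}_2^m$, with $Z_{xy}=g_{x\oplus y}$. Second, entrywise exponentiation preserves this form, and the softmax normalizer $\sum_j e^{g_j}$ is the same for every column because $x'\mapsto x'\oplus y$ is a bijection. Third, every such matrix is diagonalized by the characters $\phi_i$, with eigenvalues given by the Walsh--Hadamard transform $\Phi h$.

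This gives slightly more than the stated lemma. The same computation shows that any entrywise function $f$ applied to $Z$ yields $\sum_i (\Phi f(g))_i\, u_iu_i^T$. It also makes clear that the column-independence of the normalizer is the only extra ingredient the softmax needs beyond a plain entrywise map.

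One bookkeeping point: the statement labels the columns $u_1,\dots,u_K$, but the XOR and $(-1)^{i\cdot j}$ identities require $0$-based labels $0,\dots,K-1$. You switch to the latter implicitly. This is harmless, but you should say so explicitly.
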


\begin{lemma}[\textbf{From Lemma 3.6 \citet{hadamard_book}}] \label{lem:bitwise_XOR}
    Let $K =2^m$ for $m \in \mathbb{N}$. The columns of the $K \times K$ Sylvester Hadamard matrix $\Phi$, denoted by $\phi_i$ for $i=0,...,K-1$, satisfy
    $$\phi_i \circ \phi_j = \phi_{i \oplus j}$$
    where $\circ$ denotes the Hadamard product, and $\oplus$ denotes the bitwise XOR.
\end{lemma}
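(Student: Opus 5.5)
We need to prove $\phi_i\circ\phi_j=\phi_{i\oplus j}$ for the columns of the Sylvester Hadamard matrix. The plan is to first establish the closed form $\Phi_{uv}=(-1)^{u\cdot v}$, where $u,v\in\{0,\dots,K-1\}$ are identified with their binary expansions in $\mathbb{F}_2^m$ and $u\cdot v=\sum_r u_r v_r \bmod 2$. This is the content later quoted as Lemma~\ref{lem:Hada_as_minus_ones}. Once that is in hand, the XOR identity is a one-line computation.

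\textbf{Step 1: the closed form, by induction on $m$.} For $m=0$ we have $\Phi_1=1=(-1)^{0}$. Suppose $\Phi_{2^{m-1}}$ has entries $(-1)^{u'\cdot v'}$ for $u',v'\in\{0,\dots,2^{m-1}-1\}$. Any index of $\Phi_{2^m}$ can be written $u=u_{\mathrm{top}}2^{m-1}+u'$, where $u_{\mathrm{top}}\in\{0,1\}$ is the leading bit, and similarly for $v$. By Definition~\ref{def:sylvester}, the block $(u_{\mathrm{top}},v_{\mathrm{top}})$ is $\Phi_{2^{m-1}}$ multiplied by $-1$ exactly when $u_{\mathrm{top}}=v_{\mathrm{top}}=1$, that is, by $(-1)^{u_{\mathrm{top}}v_{\mathrm{top}}}$. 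Hence
$$\Phi_{uv}=(-1)^{u_{\mathrm{top}}v_{\mathrm{top}}}(-1)^{u'\cdot v'}=(-1)^{u\cdot v},$$
which completes the induction.

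\textbf{Step 2: the XOR identity.} Using Step 1, for every row index $u$,
$$(\phi_i\circ\phi_j)_u=\Phi_{ui}\Phi_{uj}=(-1)^{u\cdot i+u\cdot j}=(-1)^{u\cdot(i\oplus j)}=(\phi_{i\oplus j})_u.$$
The middle equality uses bilinearity of the mod-2 dot product over $\mathbb{F}_2^m$: addition in $\mathbb{F}_2^m$ is exactly bitwise XOR, and $(-1)^x$ depends only on $x \bmod 2$.

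\textbf{Main point of care.} The only delicate step is matching the bit ordering in Step 1. We must check that the recursive block structure places the new bit as the most significant bit of both the row and the column index. With zero-based indexing this holds automatically, and since the dot product is symmetric in bit positions, either ordering convention gives the same result. The rest of the argument is routine.
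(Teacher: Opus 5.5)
Your proof is correct. The paper does not prove this lemma; it cites it from the Hadamard-matrix literature. Your Step 1 is essentially the paper's own inductive proof of Lemma~\ref{lem:Hada_as_minus_ones}: you split off the extra bit using the $2\times 2$ block structure, and the sign $(-1)^{u_{\mathrm{top}}v_{\mathrm{top}}}$ comes from the lower-right block. Step 2 is then the one-line bilinearity argument over $\mathbb{F}_2^m$. The gain is that the XOR identity becomes a corollary of Lemma~\ref{lem:Hada_as_minus_ones}. The proof of Proposition~\ref{prop:hada_rank}, which uses both facts, then rests on a single lemma proved in the paper rather than on an external citation.
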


\begin{lemma} [\textbf{From Exercise 1.12 of \citet{boolean_book}}] \label{lem:Hada_as_minus_ones}
    Let $K =2^m$ for $m \in \mathbb{N}$. The entries of the $K \times K$ Sylvester Hadamard matrix $\Phi$, are given by
    $$\Phi_{ij} = (-1)^{i \cdot j}$$
    where $\cdot$ denotes the mod-2 dot product, acting on the binary representations of $i$ and $j$.
\end{lemma}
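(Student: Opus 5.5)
We prove the Sylvester entry formula $\Phi_{ij}=(-1)^{i\cdot j}$ (zero-indexed, binary representations of $i,j\in\{0,\dots,2^m-1\}$) by induction on $m$, following the recursive construction in Definition~\ref{def:sylvester}.

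\textbf{Base case and indexing.} For $m=0$ we have $\Phi_1=1=(-1)^{0\cdot 0}$, so the formula holds. For the inductive step, write an index $i\in\{0,\dots,2^m-1\}$ as $i=i_{m}2^{m-1}+i'$, where $i_m\in\{0,1\}$ is the most significant bit and $i'\in\{0,\dots,2^{m-1}-1\}$. The leading bit selects the block row in
$$\Phi_{2^m}=\begin{bmatrix}\Phi_{2^{m-1}}&\Phi_{2^{m-1}}\\ \Phi_{2^{m-1}}&-\Phi_{2^{m-1}}\end{bmatrix},$$
and $i'$ is the row inside that block. Decompose $j=j_m2^{m-1}+j'$ in the same way.

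\textbf{Inductive step.} Read off the block structure: the $(i,j)$ entry equals $-\left(\Phi_{2^{m-1}}\right)_{i'j'}$ exactly when $i_m=j_m=1$, and equals $\left(\Phi_{2^{m-1}}\right)_{i'j'}$ otherwise. Hence
$$(\Phi_{2^m})_{ij}=(-1)^{i_mj_m}(\Phi_{2^{m-1}})_{i'j'}.$$
Apply the inductive hypothesis, $(\Phi_{2^{m-1}})_{i'j'}=(-1)^{i'\cdot j'}$. The mod-2 dot product splits over the leading bit as $i\cdot j=i_mj_m+i'\cdot j'$ (mod 2), so the entry equals $(-1)^{i\cdot j}$, which completes the induction.

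\textbf{Main obstacle.} The only point needing care is convention: the recursion must place the new bit in the most significant position. With any consistent bit ordering the formula still holds, because the dot product is symmetric under permuting bit positions as long as $i$ and $j$ use the same ordering. Since the bit-ordering convention was the only place the argument could fail, the result follows.
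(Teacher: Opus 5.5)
Your proposal is correct and is essentially the paper's proof: induction on $m$ via the block recursion, peeling off one bit so that $(\Phi_{2^m})_{ij}=(-1)^{i_m j_m}(\Phi_{2^{m-1}})_{i'j'}$, then splitting the mod-2 dot product over that bit. The differences are cosmetic: you start the induction at $m=0$ rather than $m=1$, and you state the bit convention (the most significant bit selects the block) more explicitly than the paper's concatenation notation $i=[i';x]$.
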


\begin{proof}
Since \citet{boolean_book} does not feature a proof we include a proof by induction on $m \in \mathbb{N}$ here for completeness. 

First for $m=1$, and denoting the Sylvester Hadamard matrix by $\Phi_K$ to make clear the dimension, we have
$$(\Phi_2) =\begin{cases}
    -1 & \textrm{if } i=j=1 \\
    1 & \textrm{otherwise}  
\end{cases} = (-1)^{i \cdot j}.$$
Hence, the result holds for $m=1$. Now take it true for $m \in \mathbb{N}$, and consider the $m+1$ case. We can write $i,j \in \mathbb{F}^{m+1}_2$ as $i=[i';x]$, $j=[j';y]$, where we have concatenated the extra index so that $i',j' \in \mathbb{F}^m_2$ and $x,y \in \mathbb{F}_2$. Now using the properties of the Hadamard matrix construction, we have:
$$(\Phi_{2^{m+1}})_{ij} =\begin{cases}
    -(\Phi_{2^m})_{i'j'} & \textrm{if } x=y=1 \\
    (\Phi_{2^m})_{i'j'} & \textrm{otherwise}
\end{cases} = (\Phi_{2^m})_{i'j'} (-1)^{x \cdot y} = (-1)^{i' \cdot j'} (-1)^{x \cdot y} = (-1)^{i \cdot j},$$
Which gives the result.

\end{proof}

\begin{lemma} [\textbf{From Corollary 3 of \citet{shang2020unified}}] \label{lem:Schatten_reduction}
    Let $W_L \in \mathbb{R}^{K \times d}$, $W_{L-1},...,W_1 \in \mathbb{R}^{d \times d}$, $H_1 \in \mathbb{R}^{d \times Kn}$, where $d \geq K$. The minimization of the sum of their Frobenius norms squared, subject to a known product, is given by
    $$\min_{W_L,...,W_1,H_1 \textrm{ such that } Z=W_L...W_1H_1} \left\{ \|H_1\|_F^2 + \sum_{l=1}^L \|W_l \|_F^2 \right\} = (L+1)\,\| Z \|_{S_{\frac{2}{L+1}}}^{\frac{2}{L+1}},$$
    where $\|\cdot \|_{S_{p}}^p$ denotes the Schatten-quasi norm with parameter $p$, given by
    \begin{equation} \label{eq:schatten_quasi}
        \|M\|_{S_p}^p =\sum_{i=1}^{\textrm{rank}(M)} \sigma_i^{p},
    \end{equation}
    where $\sigma_1,...,\sigma_{\textrm{rank}(M)}$ are the non-zero singular values of the matrix $M$.
        
\end{lemma}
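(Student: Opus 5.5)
The plan is to prove matching lower and upper bounds. Throughout, write $W_0 := H_1$ and $p = \frac{2}{L+1}$. The lower bound combines a Hölder-type inequality for Schatten quasi-norms of products with the AM--GM inequality. The upper bound comes from an explicit balanced factorization built from the SVD of $Z$, which uses $d \geq K$.

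\textbf{Lower bound.} The key inequality I will aim for is
\begin{equation*}
\|W_L W_{L-1} \cdots W_0\|_{S_p} \leq \prod_{l=0}^{L} \|W_l\|_{S_2} = \prod_{l=0}^{L} \|W_l\|_F ,
\end{equation*}
which is the generalized Hölder inequality with $\frac{1}{p} = \sum_{l=0}^{L} \frac12$.
\begin{itemize}
\item Since $p<1$ here, I will not rely on a norm triangle inequality. Instead I will use Horn's inequality: the singular values of a product are weakly log-majorized by the entrywise products of the factors' singular values. Concretely, for every $k$, $\prod_{i\le k}\sigma_i(Z) \le \prod_{i\le k}\prod_{l}\sigma_i(W_l)$, with zero-padding to a common length.
\item Weak log-majorization lets me apply Weyl's majorant theorem to the function $x \mapsto x^p$, since $t\mapsto e^{pt}$ is convex and increasing. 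This gives $\sum_i \sigma_i(Z)^p \le \sum_i \prod_l \sigma_i(W_l)^p$.
\item Applying the discrete Hölder inequality on sequences with $L+1$ exponents all equal to $L+1$ bounds the right side by $\prod_l \big(\sum_i \sigma_i(W_l)^2\big)^{1/(L+1)} = \prod_l \|W_l\|_F^{p}$.
\end{itemize}
Together these show $\|Z\|_{S_p}^p \le \prod_l \|W_l\|_F^{2/(L+1)}$. The AM--GM inequality applied to the $L+1$ numbers $\|W_l\|_F^2$ then gives
\begin{equation*}
\sum_{l=0}^L \|W_l\|_F^2 \;\ge\; (L+1)\prod_{l=0}^L \|W_l\|_F^{2/(L+1)} \;\ge\; (L+1)\|Z\|_{S_p}^p .
\end{equation*}

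\textbf{Attainment.} Take a compact SVD $Z = U\Sigma V^T$, where $\Sigma$ has $r = \mathrm{rank}(Z) \le K \le d$ positive entries. Set $W_L = U\Sigma^{1/(L+1)} R_L^T$, $W_l = R_{l+1}\Sigma^{1/(L+1)} R_l^T$ for $1 \le l \le L-1$, and $H_1 = R_1 \Sigma^{1/(L+1)} V^T$. Here each $R_l \in \mathbb{R}^{d\times r}$ has orthonormal columns, which exist because $d \ge K \ge r$, and the matrices are padded with zeros to the required shapes. The inner factors $R_l^T R_l = I_r$ telescope, so the product equals $Z$. Each factor has squared Frobenius norm $\sum_i \sigma_i^{2/(L+1)}$, so the total objective equals $(L+1)\|Z\|_{S_p}^p$. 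Hence the minimum is attained and equals the claimed value.

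\textbf{Main obstacle.} I expect the main difficulty to be the rigorous justification of the multi-factor Hölder step in the quasi-norm regime $p<1$. The plan is to establish Horn's log-majorization for products of $L+1$ rectangular matrices by induction from the two-factor case, applied to the padded singular value sequences. I would then invoke Weyl's theorem that weak log-majorization implies $\sum_i f(\sigma_i)$-majorization for any $f$ with $f(e^t)$ convex and increasing. Rectangular shapes and zero singular values require care, but they are handled by the zero-padding.
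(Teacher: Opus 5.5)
Your proof is correct. The paper gives no proof of its own; it imports this lemma from Corollary~3 of \citet{shang2020unified}. Your argument is the standard self-contained proof of that result: Horn's log-majorization and Weyl's majorant theorem give $\|Z\|_{S_p}^p \le \prod_l \|W_l\|_F^{2/(L+1)}$, AM--GM turns this into the lower bound, and the balanced SVD factorization attains it. That factorization is the same balanced form the paper uses in Eq.~\eqref{eq:balanced_set_up} and Theorem~\ref{thm:general_L_hada_dynamics}. The hypothesis $d \ge K$ enters only there, and $d \ge \operatorname{rank}(Z)$ would suffice; no zero-padding is needed, since your factors already have the required shapes.
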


\begin{lemma} \label{lem:KL_div_counter}
     Let $K =2^m$ for $m \in \mathbb{N}$, and let $a \in \mathbb{R}^{K-1}$ be of the form $a=[\gamma, \delta,...,\gamma,\delta]$, so that $\gamma$ appears in $\frac{K}{2}$ entries and $\delta$ appears in $\frac{K}{2}-1$ entries, with the two alternating, and we have $\gamma, \delta >0$. Then
     $$(\Psi e^{- \Psi a})_i =\begin{cases}
        2e^{- K \gamma} + (K-2) e^{- \frac{K}{2} (\gamma + \delta)}, & \textrm{if } i \textrm{ is odd},\\
        K e^{-\frac{K}{2} (\gamma + \delta )}, & \textrm{if } i \textrm{ is even}.
        \end{cases}$$
\end{lemma}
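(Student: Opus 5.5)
Since $K=2^m$, I will index rows and columns by $\{0,\dots,K-1\}\cong\mathbb{F}_2^m$. Then $\Phi_{ij}=(-1)^{i\cdot j}$ by Lemma~\ref{lem:Hada_as_minus_ones}, and $\Psi_{ij}=1-(-1)^{i\cdot j}$ for $i,j\ge 1$. In other words, $\Psi_{ij}=2$ if $i\cdot j=1$ and $0$ otherwise. The vector $a$ puts $\gamma$ on odd indices and $\delta$ on even nonzero indices, where ``odd'' means the lowest bit $i_1=1$. So there are $K/2$ odd indices and $K/2-1$ even nonzero indices, which matches the counts in the statement.

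The first step is to compute $(\Psi a)_j$. I would use $(\Psi a)_j=\sum_{u\ge1}(1-(-1)^{j\cdot u})a_u=\sum_u a_u-\sum_u a_u(-1)^{j\cdot u}$ and evaluate the character sums over the odd set $O=\{u:u_1=1\}$ and the even set $E=\{u\ne0:u_1=0\}$. If $j=e_1$, then $j\cdot u=u_1$, so the sum is $-\tfrac K2\gamma+(\tfrac K2-1)\delta$, and $(\Psi a)_{e_1}=K\gamma$. For any other $j\ne 0$, the character $u\mapsto(-1)^{j\cdot u}$ restricted to each coset of $\{u_1=0\}$ is nontrivial. This holds if $j$ has some bit beyond the first, which is the case for every $j\neq 0,e_1$. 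So the sum over $O$ vanishes, and the sum over $E\cup\{0\}$ vanishes, giving $-1$ over $E$. Hence the character sum is $-\delta$, and $(\Psi a)_j=\tfrac K2\gamma+(\tfrac K2-1)\delta+\delta=\tfrac K2(\gamma+\delta)$.

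The second step is to set $w_j=e^{-(\Psi a)_j}$, so that $w_{e_1}=e^{-K\gamma}$ and $w_j=e^{-\frac K2(\gamma+\delta)}=:\omega$ otherwise. Then $(\Psi w)_i=\sum_j\Psi_{ij}w_j=\omega\sum_{j\ge1}\Psi_{ij}+\Psi_{i e_1}(e^{-K\gamma}-\omega)$. Each row sum is $\sum_j\Psi_{ij}=K$, as already used in the paper, and $\Psi_{ie_1}=2$ exactly when $i_1=1$, i.e.\ $i$ odd. For odd $i$ this gives $K\omega+2e^{-K\gamma}-2\omega=2e^{-K\gamma}+(K-2)\omega$. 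For even $i$ it gives $K\omega$. Both match the claim.

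The only delicate point is the bookkeeping. Two things need care: the convention that ``odd index'' in the alternating listing $[\gamma,\delta,\dots]$ (starting at index $1$) coincides with the lowest bit of the binary label, and the character-sum vanishing argument over cosets. Everything else is direct substitution.
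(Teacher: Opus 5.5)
Your proof is correct and has the same structure as the paper's: first show $(\Psi a)_1 = K\gamma$ and $(\Psi a)_j = \frac{K}{2}(\gamma+\delta)$ for $j \neq 1$, then apply $\Psi$ to $e^{-\Psi a}$ using that every row of $\Psi$ sums to $K$ and that its first column is $2c$. The only difference is in the first step: you evaluate the character sums over the cosets of $\{u : u_1 = 0\}$ directly, while the paper writes $a = \delta 1_{K-1} + (\gamma-\delta)c$ and invokes $\Psi c = \frac{K}{2}(e_1 + 1_{K-1})$ (which follows from $c = \frac{1}{2}\Psi e_1$ and $\Psi^2 = K(I_{K-1} + 1_{K-1}1_{K-1}^T)$), so your computation is a self-contained verification of the identity the paper only asserts.
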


\begin{proof}
begin by writing the vector $a$ as $a=\delta 1_{K-1} + (\gamma - \delta)c$, where the vector $c$ has 1 in entries with odd indices, and 0 for even indices. Since $c$ corresponds with the first column of $\Psi$, up to a factor of two, the properties of the underlying Hadamard matrix give that
$$\Psi c = \frac{K}{2} e_1 + \frac{K}{2} 1_{K-1},$$
where $e_1$ is the first basis vector in $\mathbb{R}^{K-1}$. Consequently
$$\Psi a = \frac{K}{2} (\gamma + \delta ) 1 + \frac{K}{2} (\gamma - \delta) e_1,$$
and hence
$$\exp(-(\Psi a)_1)=\underbrace{e^{-K \gamma}}_{\alpha}, \quad \exp(-(\Psi a)_i) = \underbrace{e^{- \frac{K}{2} ( \gamma + \delta)}}_{\beta}, \quad i=2,...,K-1,$$
where we have denote these quantities by $\alpha$ and $\beta$. This can then be written as
$$\exp(- \Psi a) = (\alpha - \beta) e_1+\beta 1_{K-1}.$$
Again noting that the first column of $\Psi$ is $2c$, we have that $\Psi e_1=2c$, and so
$$\Psi \exp(- \Psi a) = 2(\alpha - \beta)c + K \beta 1_{K-1}.$$
Using the explicit forms of $\alpha$ and $\beta$ then gives the result.
\end{proof}

\begin{lemma} \label{lem:cross_poly_computation}
    Let $K =2^m$ for $m \in \mathbb{N}$. Denote by the vector $c \in \mathbb{R}^{K-1}$ the vector that takes value one in its odd indices, and zero in its even indices. if $a \in \mathbb{R}^{K-1}$ can be written in the form
    $$a=\mu c + \mu \epsilon, \quad \mu = \frac{2}{K} \|a\|_1 \in \mathbb{R}, \quad \epsilon \in \mathbb{R}^{K-1}, \quad \sum_i \epsilon_i =0,$$
    then to linear order in $\| \epsilon \|_1$, we have
    $$\Psi \exp(-\Psi a) = K e^{\frac{-K \mu}{2}} 1_{K-1} - K e^{-\frac{K \mu}{2}} \mu \epsilon + 2 \left( e^{-K \mu} - e^{- \frac{K \mu}{2}} \right) (1-\mu(\Psi \epsilon)_1 ) c + O \left( \| \epsilon \|_1^2 \right).$$
\end{lemma}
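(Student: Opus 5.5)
The plan is to reduce the claim to two facts about the Sylvester--Hadamard structure of $\Psi$ and then do a first-order expansion of the exponentials. With $a=\mu c+\mu\epsilon$ we have $\Psi a=\mu\Psi c+\mu\Psi\epsilon$. The first column of $\Psi$ equals $2c$, because column $1$ of $1_K1_K^T-\Phi$ has entries $1-(-1)^{\text{bit}}$. By the orthogonality $\Phi^2=KI$ in the form $\Psi^2=K(I_{K-1}+1_{K-1}1_{K-1}^T)$, applied to $e_1$, this gives $\Psi c=\tfrac12\Psi^2e_1=\tfrac{K}{2}(e_1+1_{K-1})$, exactly as in the proof of Lemma \ref{lem:KL_div_counter}. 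Therefore
\[
e^{-(\Psi a)_1}=e^{-K\mu}e^{-\mu(\Psi\epsilon)_1},\qquad e^{-(\Psi a)_j}=e^{-K\mu/2}e^{-\mu(\Psi\epsilon)_j}\quad (j\ge 2).
\]

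Next I expand each factor $e^{-\mu(\Psi\epsilon)_j}=1-\mu(\Psi\epsilon)_j+O(\|\epsilon\|_1^2)$; the implied constant depends on $\mu$ and $K$, which are fixed. Writing $\beta=e^{-K\mu/2}$ and $\alpha=e^{-K\mu}$, the vector becomes
\[
\exp(-\Psi a)=\beta\bigl(1_{K-1}-\mu\Psi\epsilon\bigr)+(\alpha-\beta)\bigl(1-\mu(\Psi\epsilon)_1\bigr)e_1+O(\|\epsilon\|_1^2).
\]
I keep the factor $1-\mu(\Psi\epsilon)_1$ attached to the $e_1$ correction rather than expanding it out, because that is the form the statement uses. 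The discrepancy between this grouping and any other is itself linear in $\epsilon$, so I will check at the end that the grouping matches exactly.

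Then I multiply by $\Psi$, using three identities: $\Psi 1_{K-1}=K1_{K-1}$ (row sums of $\Psi$ equal $K$, as used in the proof of Theorem \ref{thm:no_lyapunov}); $\Psi e_1=2c$; and $\Psi^2\epsilon=K\epsilon+K(1^T\epsilon)1_{K-1}=K\epsilon$, since $\sum_i\epsilon_i=0$. This yields
\[
\Psi\exp(-\Psi a)=K\beta 1_{K-1}-K\beta\mu\epsilon+2(\alpha-\beta)\bigl(1-\mu(\Psi\epsilon)_1\bigr)c+O(\|\epsilon\|_1^2),
\]
which is the claimed formula. The bound $\|\Psi v\|\le K\|v\|_1$ keeps the remainder at $O(\|\epsilon\|_1^2)$.

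The computation is routine. The main step to get right is the identity $\Psi^2=K(I+11^T)$ and its consequence $\Psi c=\tfrac K2(e_1+1)$, both of which the paper has already used. The only subtlety is the bookkeeping that assigns the $-\mu(\Psi\epsilon)_1$ term to the $e_1$ component. It multiplies $\alpha e_1$ in the exact expansion and $\beta e_1$ inside the $\beta(\ldots)$ term, so regrouping as $\beta(1_{K-1}-\mu\Psi\epsilon)+(\alpha-\beta)(1-\mu(\Psi\epsilon)_1)e_1$ is an exact identity at linear order, not an approximation.
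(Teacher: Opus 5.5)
Your proposal is correct and follows the paper's proof essentially line by line. Both arguments compute $\Psi a$ via $\Psi c=\tfrac{K}{2}(e_1+1_{K-1})$, expand the exponentials to first order while keeping the factor $1-\mu(\Psi\epsilon)_1$ attached to the $e_1$ correction, and then apply $\Psi 1_{K-1}=K1_{K-1}$, $\Psi e_1=2c$ and $\Psi^2\epsilon=K\epsilon$; you additionally derive $\Psi c$ from $\Psi^2=K(I_{K-1}+1_{K-1}1_{K-1}^T)$ and check that the regrouping is exact at linear order, two steps the paper only asserts.
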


\begin{proof}
Using that $\Psi c=\frac{K}{2} (e_1 + 1_{K-1})$, where $e_1$ is the first basis vector in $\mathbb{R}^{K-1}$, we have
$$\Psi a = \frac{K \mu}{2} ( e_1 + 1_{K-1}) + \mu(\Psi \epsilon).$$
Hence
$$(\exp(- \Psi a))_i = \begin{cases}
        e^{-K \mu} e^{- \mu(\Psi \epsilon)_1}, & \textrm{if } i=1,\\
        e^{-\frac{K}{2} \mu} e^{- \mu(\Psi \epsilon)_i}, & \textrm{otherwise}.
        \end{cases},$$
which we can write as
$$(\exp(- \Psi a))_i=\left[ e^{- \frac{K}{2} \mu} 1_{K-1} + (e^{-K \mu} - e^{- \frac{K}{2} \mu}) e_1 \right]_i e^{- \mu (\Psi \epsilon)_i}.$$
Now expand the exponential to linear order in $\| \epsilon \|_1$
$$= \left[ e^{- \frac{K}{2} \mu} 1_{K-1} + (e^{-K \mu} - e^{- \frac{K}{2} \mu}) e_1 \right]_i - e^{- \frac{K}{2} \mu} \mu (\Psi \epsilon)_i -  (e^{-K \mu} - e^{- \frac{K}{2} \mu}) \mu (e_1)_i  (\Psi \epsilon)_i + O( \| \epsilon \|_1^2).$$
Next use that $(e_1)_i (\Psi \epsilon)_i = (\Psi \epsilon)_1 (e_1)_i$, and group the $e_1$ terms, giving
$$\exp(- \Psi a) = e^{- \frac{K}{2} \mu} 1_{K-1} - e^{- \frac{K}{2} \mu}\mu (\Psi \epsilon) + (e^{-K \mu} - e^{- \frac{K}{2} \mu}) (1- \mu (\Psi \epsilon)_1 ) e_1 + O( \| \epsilon \|_1^2).$$
Then using that $\Psi 1_{K-1} = K 1_{K-1}$, $\Psi e_1 = 2 c$ and $\Psi^2 \epsilon = K \epsilon$, gives that
$$\Psi \exp(- \Psi a) = K e^{- \frac{K}{2} \mu} 1_{K-1} - K e^{- \frac{K}{2} \mu} \mu \epsilon + 2(e^{-K \mu} - e^{- \frac{K}{2} \mu}) (1 - \mu (\Psi \epsilon )_1)c + O( \| \epsilon \|_1^2).$$
\end{proof}

\begin{lemma} \label{lem:intermediate_conv_in_prob}
    Let $X\in\mathbb{R}^{d \times v}$ be a random matrix independent of $W \in \mathbb{R}^{d \times d}$, where the entries of $W$ are i.i.d. $N(0,1/d)$. If we have the following convergence in probability in the $d \to \infty$ limit, where $v$ is fixed
    $$X^T X \to I_v,$$
    then we also have
    $$X^T W^T W X \to I_v.$$
\end{lemma}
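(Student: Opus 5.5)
Since $W$ is independent of $X$, we will condition on $X$ and reduce the problem to a statement about fixed vectors. Write $M = X^T W^T W X \in \mathbb{R}^{v\times v}$. Because $v$ is fixed, convergence in probability of $M$ to $I_v$ is equivalent to entrywise convergence in probability, i.e. $M_{ij}\to\delta_{ij}$ for each of the finitely many pairs $(i,j)$. Let $x_i$ denote the $i$-th column of $X$. Then
$$M_{ij} = (Wx_i)^T(Wx_j) = \sum_{k=1}^d (w_k^T x_i)(w_k^T x_j),$$
where $w_k^T$ are the rows of $W$. The rows are i.i.d. $N(0,\tfrac1d I_d)$ and independent of $X$.

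\textbf{Step 1 (conditional mean).} Conditional on $X$, each summand has expectation $\tfrac1d x_i^T x_j$. Hence $\mathbb{E}[M_{ij}\mid X] = x_i^T x_j = (X^TX)_{ij}$, which converges in probability to $\delta_{ij}$ by hypothesis.

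\textbf{Step 2 (conditional variance).} Conditional on $X$, the pair $(w_k^Tx_i, w_k^Tx_j)$ is jointly Gaussian with covariance $\tfrac1d$ times the Gram matrix of $x_i,x_j$. Isserlis' theorem then gives
$$\mathrm{Var}\big((w_k^Tx_i)(w_k^Tx_j)\mid X\big)=\tfrac1{d^2}\big(\|x_i\|^2\|x_j\|^2+(x_i^Tx_j)^2\big).$$
Summing the $d$ independent terms gives
$$\mathrm{Var}(M_{ij}\mid X)=\tfrac1d\big(\|x_i\|^2\|x_j\|^2+(x_i^Tx_j)^2\big).$$

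\textbf{Step 3 (combining).} We split
$$M_{ij}-\delta_{ij}=\big(M_{ij}-(X^TX)_{ij}\big)+\big((X^TX)_{ij}-\delta_{ij}\big).$$
The second term tends to zero in probability by hypothesis. For the first, we work on the event $E_d=\{\|X^TX-I_v\|\le 1\}$, whose probability tends to $1$. On $E_d$, $\|x_i\|^2\le 2$ and $|x_i^Tx_j|\le 2$, so the conditional variance is at most $8/d$. Conditional Chebyshev then gives
$$\mathbb{P}\big(|M_{ij}-(X^TX)_{ij}|>\eta\,\big|\,X\big)\le \frac{8}{d\eta^2}\quad\text{on } E_d.$$
Taking expectations,
$$\mathbb{P}\big(|M_{ij}-(X^TX)_{ij}|>\eta\big)\le \frac{8}{d\eta^2}+\mathbb{P}(E_d^c)\to 0.$$

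The main subtlety is that $X$ is random and its norms are not bounded a priori. Only $X^TX\to I_v$ in probability is assumed, not moment bounds, so we cannot bound the unconditional variance directly. The truncation to the event $E_d$ in Step 3 handles this: the conditional variance bound is used only on $E_d$, and the complement is absorbed by its vanishing probability. Finally, a union bound over the $v^2$ entries yields $M\to I_v$ in probability.
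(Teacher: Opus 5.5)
Your proof is correct and follows the paper's argument: conditional mean $X^TX$, conditional variance $\frac{1}{d}\big(\|x_i\|^2\|x_j\|^2+(x_i^Tx_j)^2\big)$ from the Gaussian product formula, then entrywise Chebyshev. Your truncation to the event $E_d$ is a useful addition, since it rigorously justifies the step the paper only asserts, namely passing from a conditional variance that vanishes in probability to unconditional convergence in probability.
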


\begin{proof}
    Note it is simple to show that $\mathbb{E}[W^TW]=I_d$, and from this we clearly have
    $$\mathbb{E}[X^T W^T WX |X] = X^T X.$$
    We can now calculate the conditional variances. Write the columns of $X$ as $x_1,...,x_m$, then
    $$(X^T W^T W X)_{ij} = (Wx_i)^T (Wx_j) = \sum_{u=1}^d (w_u^T x_i) (w_u^T x_j),$$
    where we denote the rows of $W$ by $w_u^T$.

    Conditional on $X$, the rows $w_r$ are independent Gaussian vectors, so the summands are independent. Hence:
    $$\textrm{Var}[(X^TW^TW X)_{ij} | X] = d \ \textrm{Var}[(w_1^T x_i)(w_1^T x_j) | X].$$
    Denote the variables $A=w_1^T x_i, B =w_1^T x_j$. Now using standard properties of centered Gaussians we have
    $$\textrm{Var}(A) = \frac{1}{d} x_i^T x_i, \quad \textrm{Var}(B) = \frac{1}{d} x_j^T x_j, \quad \textrm{Cov}(A,B) = \frac{1}{d} x_i^T x_j,$$
    along with the fact that for centered jointly Gaussian variables we have
    $$\textrm{Var}(UV)=\textrm{Var}(U) \textrm{Var}(V) + \textrm{Cov}(U,V)^2,$$
    gives
    $$\textrm{Var}((X^T W^T W X)_{ij} | X) = \frac{1}{d} [(X^TX)_{ii} (X^T X)_{jj} + (X^T X)_{ij}^2 ].$$
    Then since $v$ is fixed, and $X^T X \to I$ in probability, we have that the conditional variance converges to zero in probability as $d \to \infty$. Chebyshev's inequality applied entrywise then gives
    $$X^T W ^T WX - X^T X \to 0,$$
    and therefore
    $$X^T W^T W X \to I.$$
    
\end{proof}

\begin{lemma} \label{lem:conv_in_prob}
    Let $K,n,L \in \mathbb{N}$ and let the matrices $W_L \in \mathbb{R}^{K \times d}$, $W_l \in \mathbb{R}^{d \times d}$, $W_0 \in \mathbb{R}^{d \times Kn}$ be initialized with entries sampled i.i.d. from a Gaussian with mean 0 and variance $1/d$. Define the matrices
    $$A_{l+1} = W_L...W_{l+1}, \quad l=0,...,L-1, \quad H_l=W_{l-1}...W_1 H_1, \quad l=1,...,L.$$
    Then we have the following convergence in probability in the $d \to \infty$ limit, with $K,n,L$ fixed
    $$A_{l+1} A_{l+1}^T \rightarrow I_K, \quad H_{l}^T H_l \rightarrow I_{Kn}.$$
\end{lemma}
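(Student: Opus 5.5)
The plan is to prove both limits by induction on the number of factors, using Lemma \ref{lem:intermediate_conv_in_prob} as the inductive step. The two cases are mirror images of each other, so I treat $H_l$ first and then transpose.

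\textbf{Case $H_l^T H_l$.} I induct on $l$.

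For the base case $l=1$ we have $H_1 = W_0 \in \mathbb{R}^{d\times Kn}$ with i.i.d.\ $N(0,1/d)$ entries. Each entry $(W_0^T W_0)_{ij} = \sum_{u=1}^d (W_0)_{ui}(W_0)_{uj}$ is a sum of $d$ independent terms. Its mean is $\delta_{ij}$, and its variance is $O(1/d)$, since the fourth moments of the entries are $O(1/d^2)$. Chebyshev applied entrywise, with $Kn$ fixed and hence a union over finitely many entries, gives $H_1^TH_1 \to I_{Kn}$ in probability.

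For the inductive step, $H_{l+1} = W_l H_l$. Here $H_l$ depends only on $W_0,\dots,W_{l-1}$, so it is independent of $W_l$, which is $d\times d$ with i.i.d.\ $N(0,1/d)$ entries. Taking $X=H_l$ and $v=Kn$ in Lemma \ref{lem:intermediate_conv_in_prob}, the inductive hypothesis $H_l^TH_l\to I_{Kn}$ yields $H_{l+1}^TH_{l+1} = H_l^T W_l^T W_l H_l \to I_{Kn}$. This runs up to $l=L$; only the square layers are used, so dimensions match.

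\textbf{Case $A_{l+1}A_{l+1}^T$.} I transpose and run the same induction from the top. The base case is $A_L = W_L\in\mathbb{R}^{K\times d}$, and the same Chebyshev argument on rows gives $W_LW_L^T\to I_K$. For the step, $A_{l} = A_{l+1}W_l$, so
\[
A_lA_l^T = A_{l+1}W_lW_l^TA_{l+1}^T = X^T (W_l^T)^T W_l^T X
\]
with $X = A_{l+1}^T\in\mathbb{R}^{d\times K}$. Since $A_{l+1}$ depends only on $W_L,\dots,W_{l+1}$, $X$ is independent of $W_l^T$, and $W_l^T$ again has i.i.d.\ $N(0,1/d)$ entries. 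Lemma \ref{lem:intermediate_conv_in_prob} with $v=K$ therefore applies, and descending from $l=L-1$ down to $l=1$ gives $A_{l+1}A_{l+1}^T\to I_K$ for each $l=0,\dots,L-1$.

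\textbf{Edge cases and main obstacle.} The case $A_1 = W_L\cdots W_1$ (the $l=0$ index) is covered by the descending induction. The only point needing care is that Lemma \ref{lem:intermediate_conv_in_prob} assumes convergence in probability of $X^TX$ and gives a conditional-variance bound of the form $\frac1d[(X^TX)_{ii}(X^TX)_{jj}+(X^TX)_{ij}^2]$. I will make explicit that this bound is tight in probability because $X^TX$ converges, so the conditional Chebyshev bound can be integrated: split on the event $\{\|X^TX - I\|\le 1\}$, whose probability tends to one. Beyond this, the argument is routine given the earlier lemma.
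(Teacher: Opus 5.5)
Your proposal is correct and follows the same route as the paper. Both start from the Gaussian base case $H_1^TH_1\to I_{Kn}$ and apply Lemma~\ref{lem:intermediate_conv_in_prob} one layer at a time, and both handle $A_{l+1}A_{l+1}^T$ by the transposed induction, which the paper only calls ``almost identical.'' Your integration of the conditional Chebyshev bound over the event $\{\|X^TX-I\|\le 1\}$ fills in a step the paper leaves implicit, but it does not change the argument.
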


\begin{proof}
    We shall show for $H_l$, the proof for $A_{l+1}$ is almost identical. We apply Lemma \ref{lem:intermediate_conv_in_prob} iteratively. It is simple to show $H_1 \in \mathbb{R}^{d \times Kn}$ satisfies $H_1^T H_1 \to I$ in probability. Then applying Lemma \ref{lem:intermediate_conv_in_prob} with $v=Kn$ and first with $X=H_1,W=W_1$, then $X=W_1 H_1, W=W_2$, so on, gives
    $$H_l^T H_l = (W_{l-1} ... W_1 H_1)^T ( W_{l-1}... W_1 H_1) \to I_{Kn},$$
    for $l=1,...,L$.
\end{proof}

\clearpage

\section{Further Numerical Experiments} \label{sec:further_numerical_experiments}

\subsection{Main Text Experimental Details} \label{sec:experiment_details}

 \begin{figure}
     \centering
     \begin{subfigure}{0.33\textwidth}
         \centering
         \includegraphics[width=\textwidth, trim=0 0 0 24pt, clip]{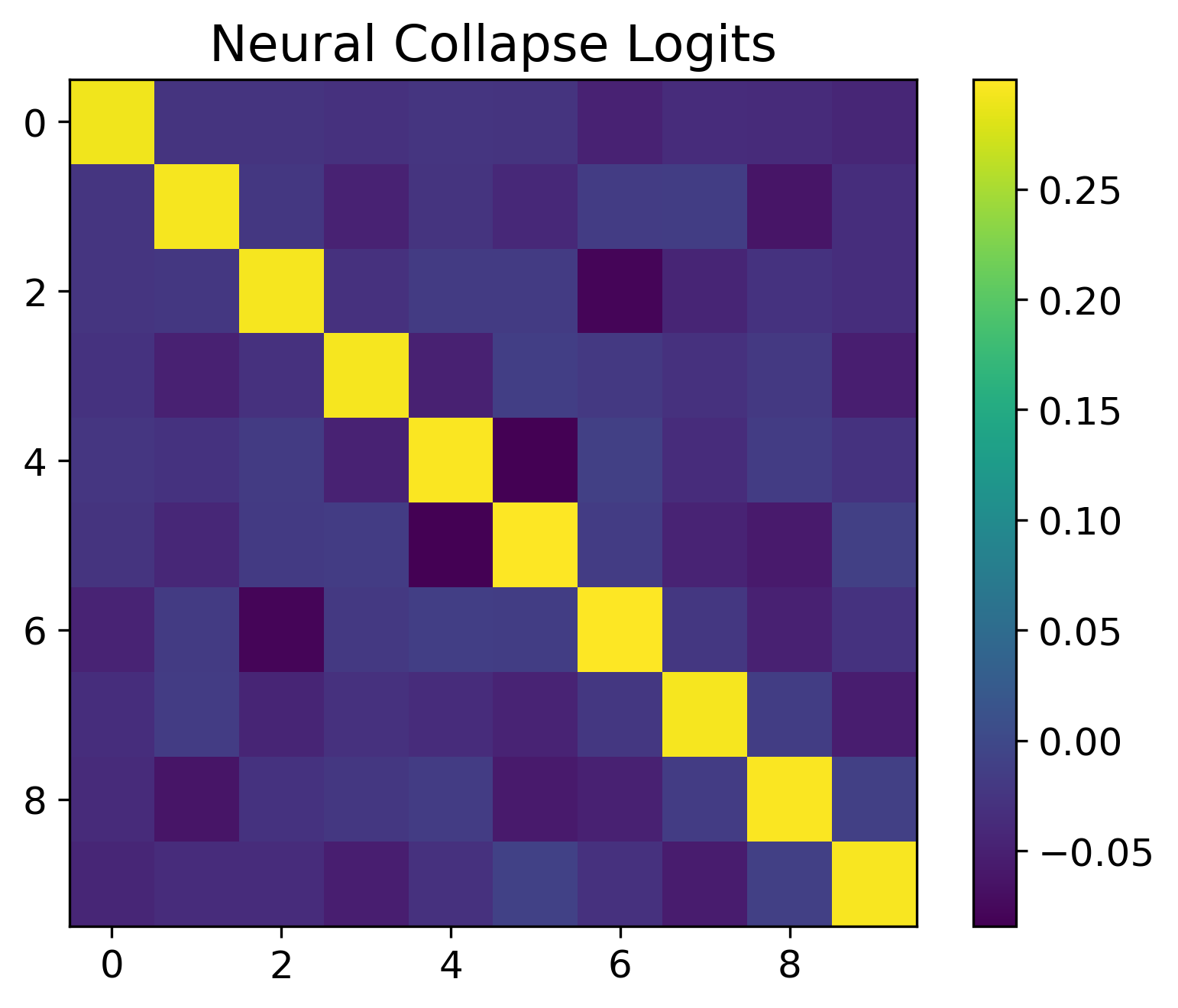}
     \end{subfigure}%
     \begin{subfigure}{0.33\textwidth}
         \centering
         \includegraphics[width=\textwidth, trim=0 0 0 24pt, clip]{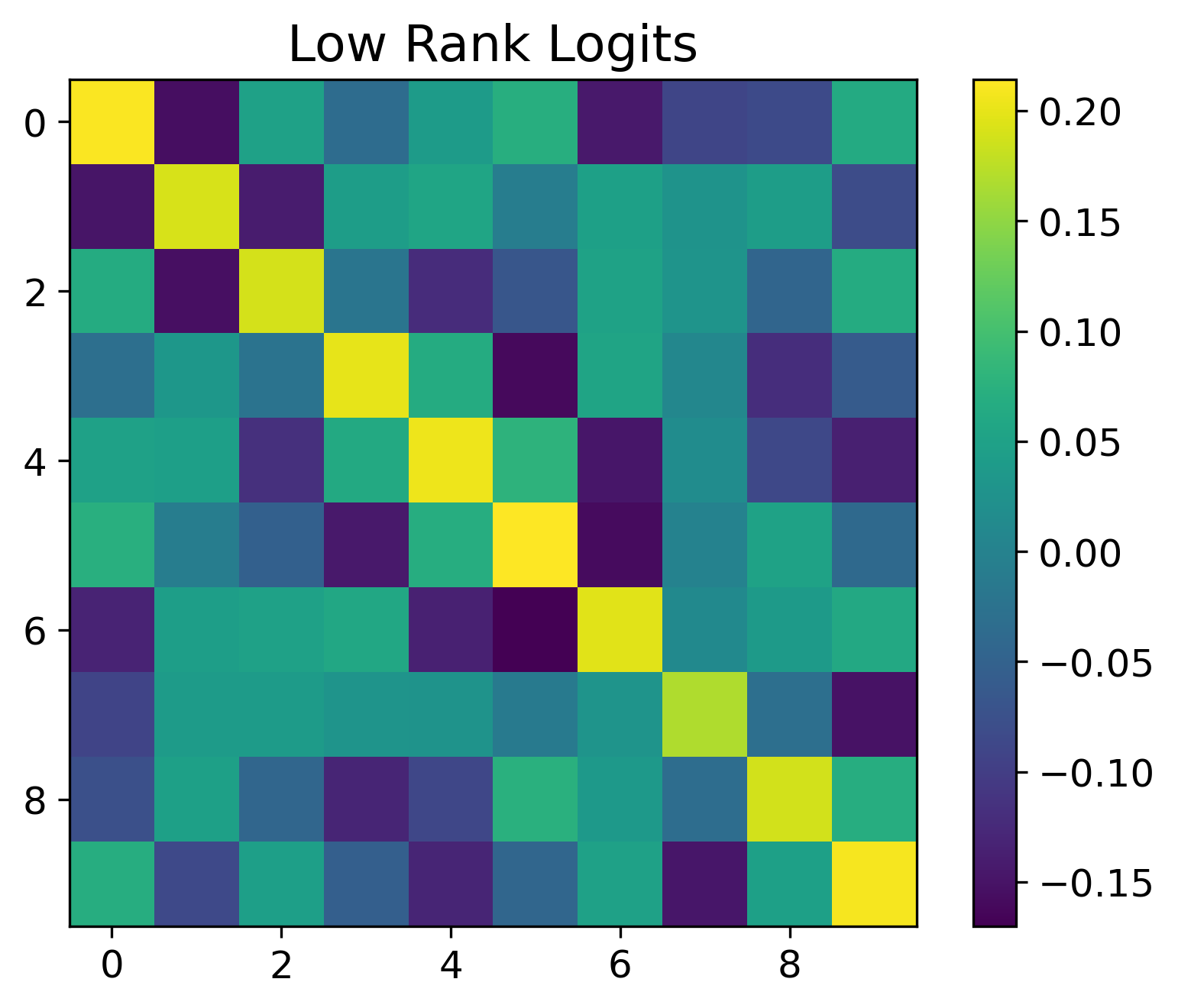}
     \end{subfigure}
     \caption{Demonstrations of different normalized mean logit matrices after training for $L=3,K=10$. \textbf{Left:} logits that are approximately DNC, with effective rank $8.9 \approx K-1$. \textbf{Right:} logits of a low rank alternative, with effective rank $4.3$.}
     \label{fig:lots_of_logits}
 \end{figure}

In this section we provide further details for the plots of the main text.

Figure \ref{fig:hyp_illustration} is produced by computing the matrix $\tilde{Z}$ defined in Eq.~\eqref{eq:tilde_Z_def} for hyperparameters $K=4$, $L=3$, and $n=1$, for both the DNC solution (Definition \ref{def:lin_UFM_DNC}) and a canonical low-rank solution (Eq.~\eqref{eq:cross_poly}). The angles are computed by taking dot products between columns, and the logit distances by taking the difference between diagonal and off-diagonal elements. In the low-rank case there are multiple values for the angular and logit distances; we consider only the smallest, since this is the one that determines the loss at large norm. These quantities are then plotted in the same hyperplane to enable a direct comparison of distances.

Figure \ref{fig:UFM_low_rank_evidence} is produced by running deep UFM simulations from random initialization with $K=10$, $n=5$, $d=100$, and a range of network depths $L$. We report the effective rank, a quantity that aims to approximate the hard rank in settings where singular values are not identically zero. The effective rank is defined as
\begin{equation} \label{eq:eff_rank}
    R_{\textrm{eff}}(M) = \exp \left(- \sum_{i=1}^{\textrm{rank}(M)} p_i \log (p_i) \right), \quad \textrm{where } p_i = \frac{\sigma_i}{\sum_{j=1}^{\textrm{rank}(M)} \sigma_j},
\end{equation}
and $\sigma_1,..., \sigma_{\textrm{rank}(M)}$ are the non-zero singular values of the matrix $M$.

For each value of $L$ we conduct five runs, and we plot the mean effective rank with one-standard-deviation error bars.

The first two plots of Figure \ref{fig:global_min_recovery} show experiments conducted for a range of class numbers $K$, with network depth $L=10$, network width $d=K$, $n=5$, and learning rate $0.08$. To aid convergence, the networks were initially trained with a small amount of regularization, $\lambda=10^{-3}$, for $2\times 10^5$ epochs, followed by a further $1\times 10^5$ epochs with no regularization. The Gram factor $X$ is obtained using the SVD of the mean logit matrix. For the robustness experiment, we trained five models for which rank $2$ was observed. Since $Z$ was symmetric positive semidefinite of rank $2$, $X$ was computed via its singular value decomposition (note that $X$ is unique up to rotation). We then computed the angles between adjacent feature means and, across the five runs, reported the maximum and minimum angles observed in each case.

The final plot in Figure \ref{fig:global_min_recovery} shows the normalized margins for a range of class numbers and ranks, for $L=4$ and $n=5$, overlaid with the same quantity for $L=1$ bottlenecked UFMs. To obtain a range of ranks, we used a variety of network widths and a new random seed for each run. Due to the non-benign loss landscape, different ranks can arise even for the same network width. Results were logged throughout training, and when the same rank appeared across multiple runs we report the average. To produce the bottlenecked cases, we trained an $L=1$ UFM with network width $r=2,\ldots,K-1$ for each $K$. In both sets of experiments we used a small amount of regularization, $\lambda=10^{-3}$, for the first $2\times 10^5$ epochs, followed by unregularized training for $1\times 10^5$ epochs. The normalized margins are the smallest gaps between the correct normalized logit value and any other logit value for the same datapoint, taken over the entire dataset.

Figure \ref{fig:rich_get_richer} is produced by evolving a vector $a$ under the Hadamard dynamics of Eq.~\eqref{eq:hada_logit_dynam} for $K=16$ and $L=2$. The vector is initialized with entries sampled uniformly from $[0,1]$, and then renormalized to have $L_1$ norm $10^{-3}$. We show a single training run for simplicity, but the dynamics exhibit the same phenomenology under this setup across a large number of random initializations.

Figure \ref{fig:initialization_experiments} shows the impact of width on the early dynamics of $Z$. For the first plot, the network parameters are as described in Theorem \ref{thm:rand_init}, with $\epsilon=0.01$, $K=10$, $L=3$, $n=5$, and a range of widths $d$. The logit derivative is computed using Eq.~\eqref{eq:logit_deriv}. The reported metric is the normalized squared Frobenius distance,

\begin{equation} \label{eq:frobenius_distance_metric}
    M = \| \hat{Z} - \hat{S} \|_F^2 .
\end{equation}

For the second plot, we use the same parameters and metric, but evaluate the logit matrix after the logit norm has increased by a factor of ten. For the final plot, we report the effective rank of the mean logit matrix for a range of network widths with $L=3$ and $K=10$, after training for $2\times 10^5$ epochs. In all experiments, we conduct five runs and report the mean with one-standard-deviation error bars.

Figure \ref{fig:real_nets} shows the outcome of training a ResNet-20 with a ReLU head on the MNIST and CIFAR-10 datasets from random initialization. For the MNIST experiments, we subsample 5,000 examples per class to match the class balance of CIFAR-10. MNIST is trained for 4,000 epochs and CIFAR-10 for 5,000 epochs. In all cases, the network attains $\approx 100 \%$ training accuracy early in training, so learning primarily occurs in the terminal phase \cite{Papyan2020Prevalence}. The input data are preprocessed by subtracting the mean and dividing by the standard deviation. We use batch gradient descent with batch size 10,000 to approximate full-batch gradient descent, as assumed by the model. No explicit regularization is used in any experiment. In the first plot, we set $d=50$ and vary the number of layers in the ReLU head between 1 and 8. In the second plot, we fix the number of layers in the ReLU head to $L=3$ and vary the head width between 50 and 200. For both plots, we train five models at each parameter setting and report the mean with one-standard-deviation error bars. The final plot shows generalization performance versus the effective rank of the network across the forty five MNIST models from the previous two plots. We also include the line of best fit to illustrate the overall trend.

\subsection{Further Model Experiments}

In Figure \ref{fig:lots_of_logits} we show examples of normalized mean logit matrices recovered during training of a deep UFM with $K=10$, $L=3$, $n=5$, and network width $d=100$. Even though the DNC solution clearly has a larger normalized margin over the data than the low-rank alternative, the network often converges to low-rank solutions such as the one shown.

In Figure \ref{fig:globa-optimal_logits} we show the result of training a deep UFM with $K=8$, $L=10$, $n=5$, and $d=8$. The classes were permuted to produce the symmetric structure shown. This corresponds to the uniformly separated circle solution described in Figure \ref{fig:global_min_recovery} of the main text.

 \begin{figure}
     \centering
     \begin{subfigure}{0.33\textwidth}
         \centering
         \includegraphics[width=\textwidth, trim=0 0 0 24pt, clip]{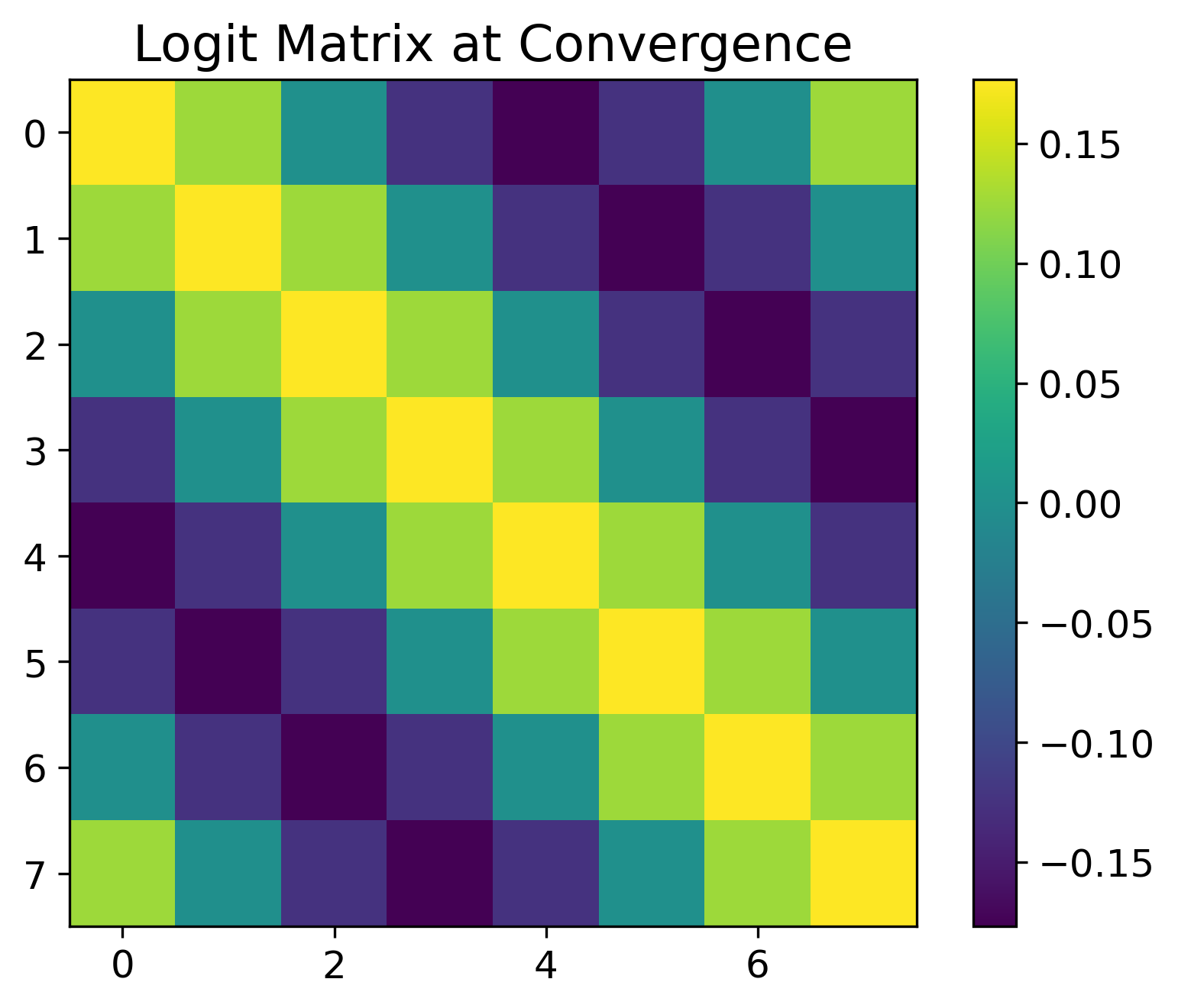}
     \end{subfigure}%
     \caption{Plot of the normalized mean logit matrix at convergence of a rank two solution for $K=8,L=10$.}
     \label{fig:globa-optimal_logits}
 \end{figure}

\subsection{Further Hadamard Experiments}

The left panel of Figure \ref{fig:more_hada} demonstrates how the effective rank changes with depth in the Hadamard setting. In Appendix \ref{sec:min_rank_hada_logits} we show that, in this Hadamard setting, the minimal rank the logit matrix can achieve while remaining compatible with a continuously decreasing loss along gradient flow is $m=\log_2(K)$. The left panel shows that the low-rank bias becomes stronger as $L$ increases. Specifically, after long training times the effective rank decreases monotonically with depth: it starts at $K-1=15$, corresponding to NC when $L=1$, and eventually concentrates at the lower bound $\log_2(K)=4$ for large $L$. This provides further evidence that increasing depth induces a low-rank bias in our model.

The description of the “rich-get-richer” effect in the linearized regime naturally raises the question of how the initialization scale might affect the rank of the converged solution. Since a larger initialization scale implies less time spent in the linearized regime, it may lead to a higher rank at convergence. This is explored in the right panel of Figure \ref{fig:more_hada}, where, as the theory predicts, the effective rank at convergence increases as the initialization scale is increased in the Hadamard regime.

Here we also provide an example of a gradient-flow trajectory along which the KL divergence provably diverges. This corresponds to the initialization considered in the proof of Theorem \ref{thm:no_lyapunov} in App. \ref{sec:no_lyapunov_proof}. In Figure \ref{fig:inter_CP_NC}, we show the evolution of the normalized logit singular values from the initialization $a=(\gamma,\delta,\ldots,\delta,\gamma)$, with $\gamma=0.2$ and $\delta=0.1$. Note that DNC corresponds to the case $\gamma/\delta \rightarrow 1$, whereas the cross-polytope geometry—an alternative stable point of the loss landscape—corresponds to $\gamma/\delta \rightarrow \infty$.

In App. \ref{sec:no_lyapunov_proof}, we show that if $\gamma/ \delta$ satisfies
$$\frac{\gamma}{\delta}> \left( \frac{K}{K-2} \right)^{\frac{L+1}{L-1}},$$
then the ratio must continue to increase for all future times. The specific choice in the figure satisfies this inequality at initialization, and it is clear that the evolution tends toward the cross-polytope geometry rather than DNC. Note that the KL divergence increases monotonically throughout this trajectory. In fact, it must diverge in the limit $t \to \infty$, but it does so slowly due to the lazy dynamics that emerge at large norm \cite{Garrod2025DiagonalizingTS}, as well as the fact that the KL divergence involves the logarithm of the singular values.

 \begin{figure}
     \centering
     \begin{subfigure}{0.33\textwidth}
         \centering
         \includegraphics[width=\textwidth, trim=0 0 0 24pt, clip]{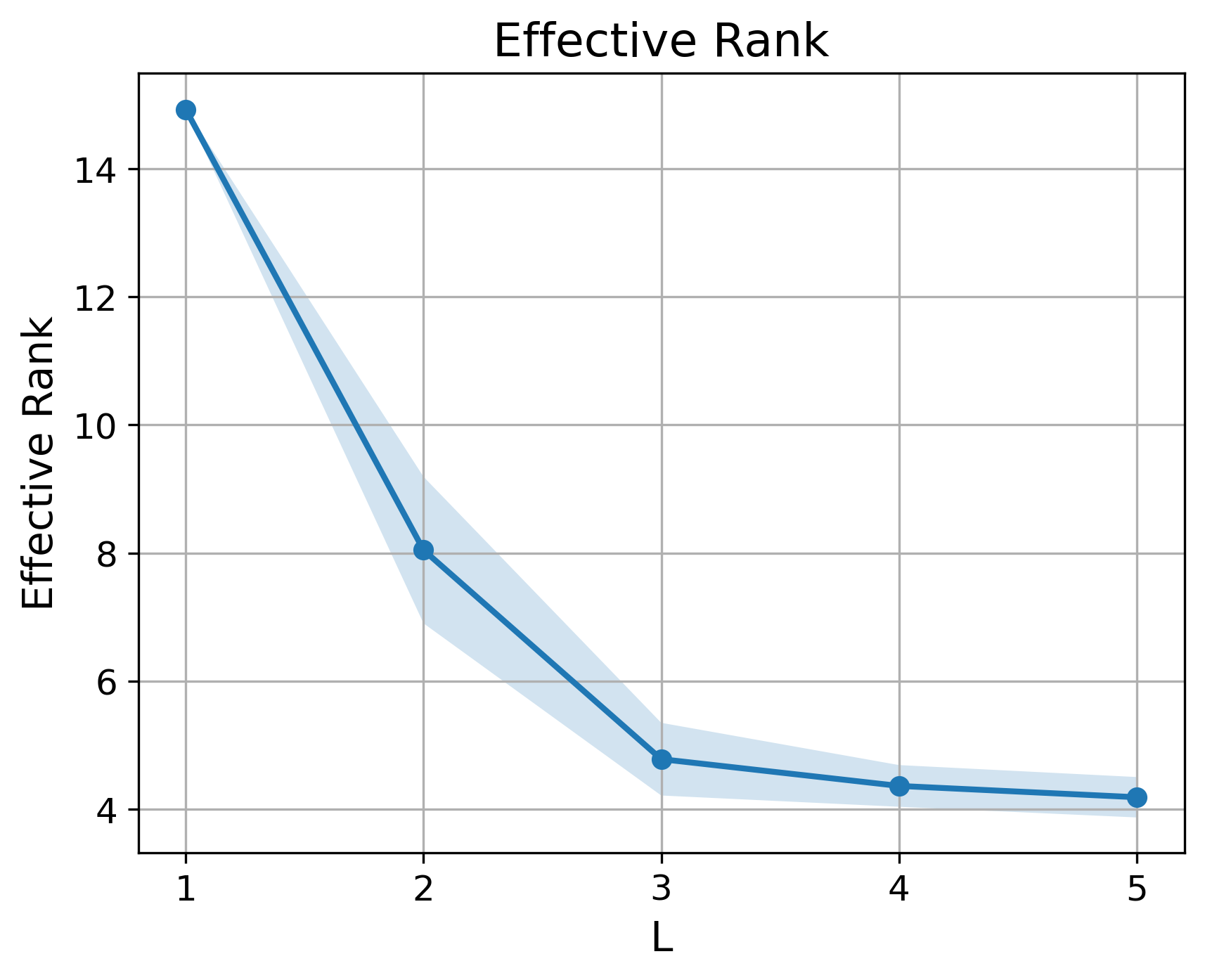}
     \end{subfigure}%
     \begin{subfigure}{0.33\textwidth}
         \centering
         \includegraphics[width=\textwidth, trim=0 0 0 24pt, clip]{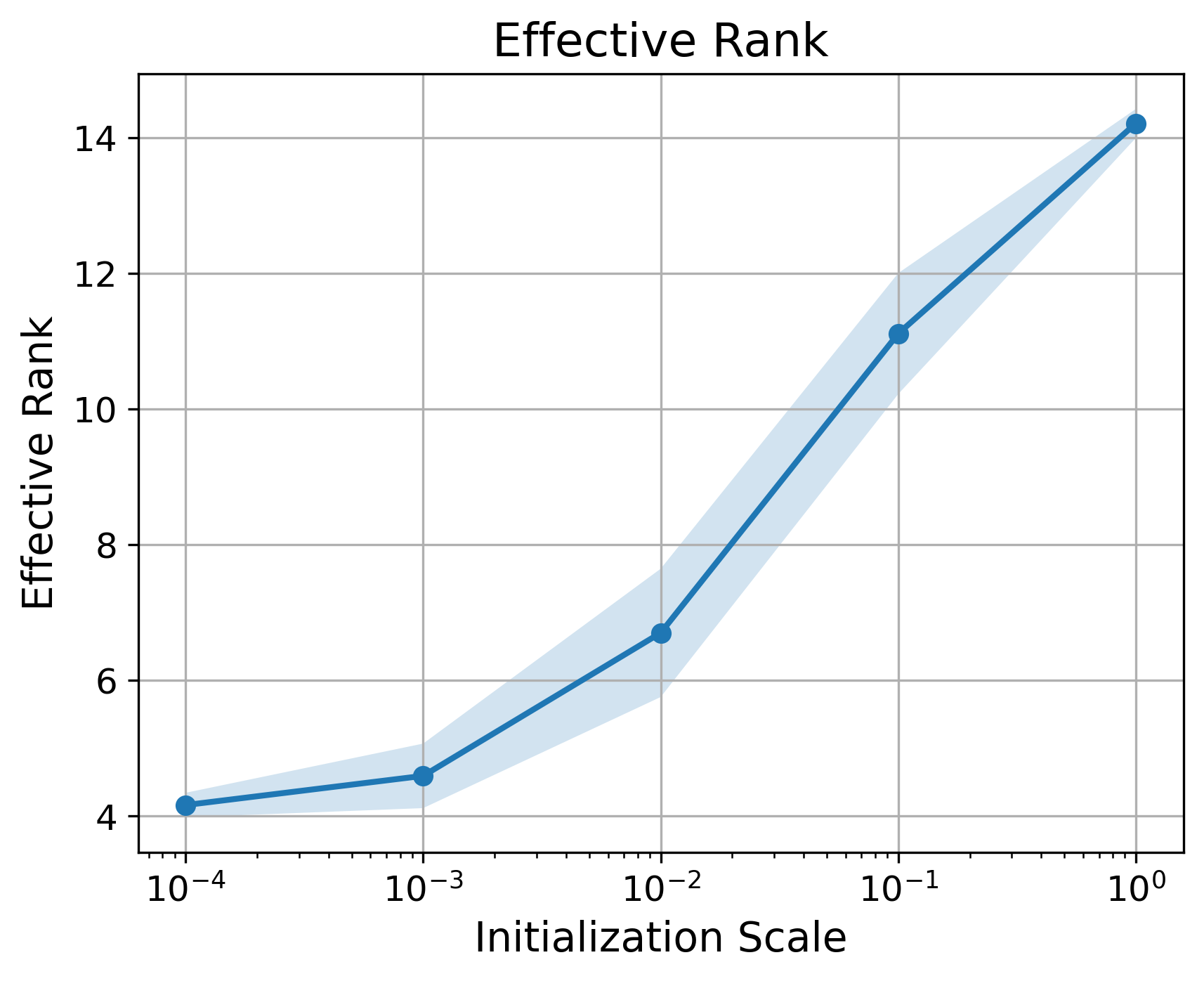}
     \end{subfigure}
     \caption{\textbf{Left:} Empirical mean of the effective rank of the mean logit matrix at $t=10^3$ under the dynamics of Equation \eqref{eq:hada_logit_dynam} with parameter $K=16$ for various network depths $L$. The vector $a \in \mathbb{R}^{K-1}$ is initialized with entries uniformly sampled from $[0.5,1]$, then normalized to have $L_1$ norm of $10^{-3}$. For each $L$ the average empirical effective rank over 10 runs is reported, along with one-standard-deviation error bars. \textbf{Right:} consideration of various initialization scales at $L=3$. Same initialization set up, similarly average and one-standard-deviation error bars over 10 runs is reported.}
     \label{fig:more_hada}
 \end{figure}

\begin{figure}[ht]
    \centering    
    \begin{subfigure}[b]{0.32\textwidth}
        \centering
        \includegraphics[width=\textwidth, trim=0 0 0 24pt, clip]{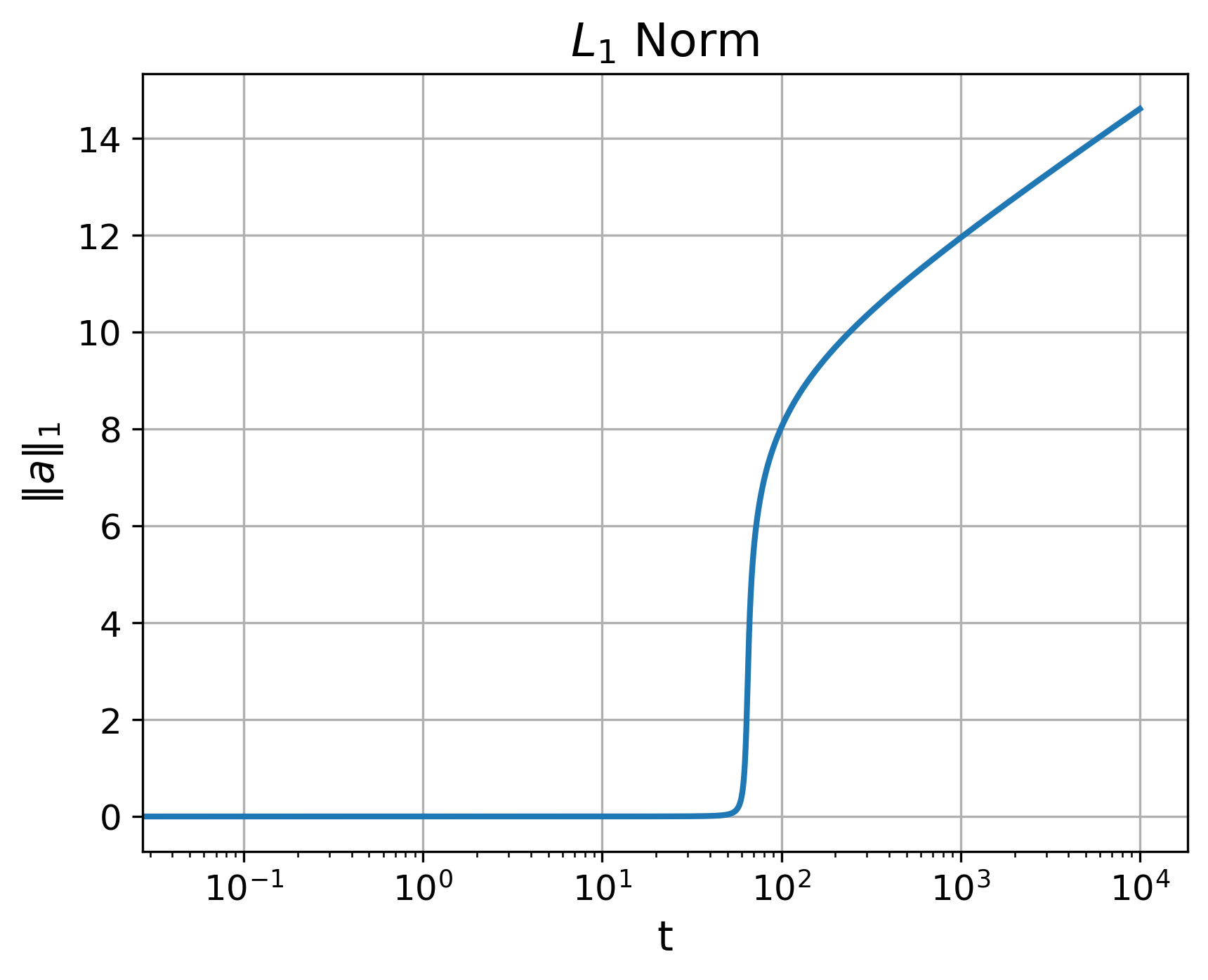}
    \end{subfigure}
    \begin{subfigure}[b]{0.32\textwidth}
        \centering
        \includegraphics[width=\textwidth, trim=0 0 0 24pt, clip]{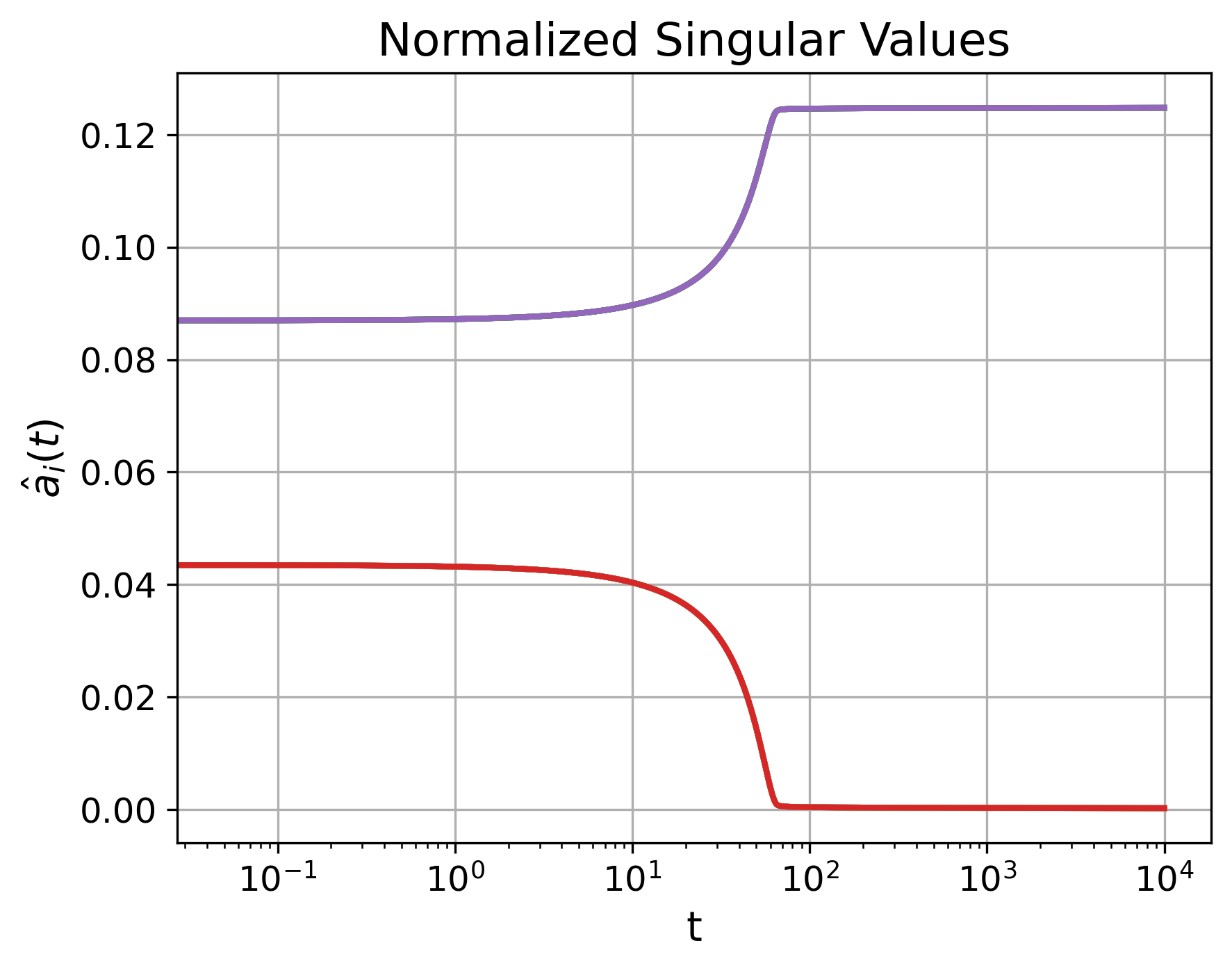}
    \end{subfigure}

    \begin{subfigure}[b]{0.32\textwidth}
        \centering
        \includegraphics[width=\textwidth, trim=0 0 0 24pt, clip]{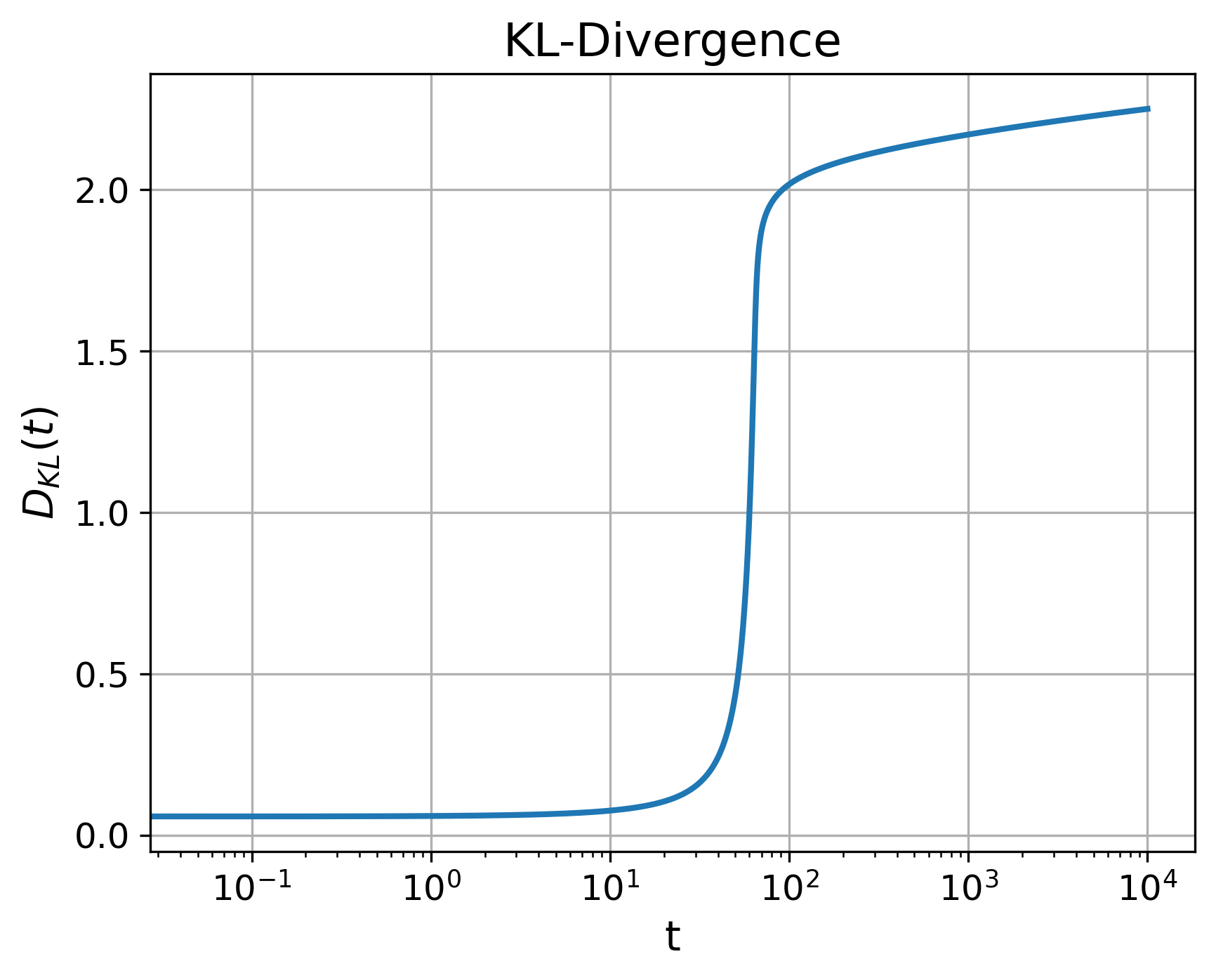}
    \end{subfigure}
    \begin{subfigure}[b]{0.32\textwidth}
        \centering
        \includegraphics[width=\textwidth, trim=0 0 0 24pt, clip]{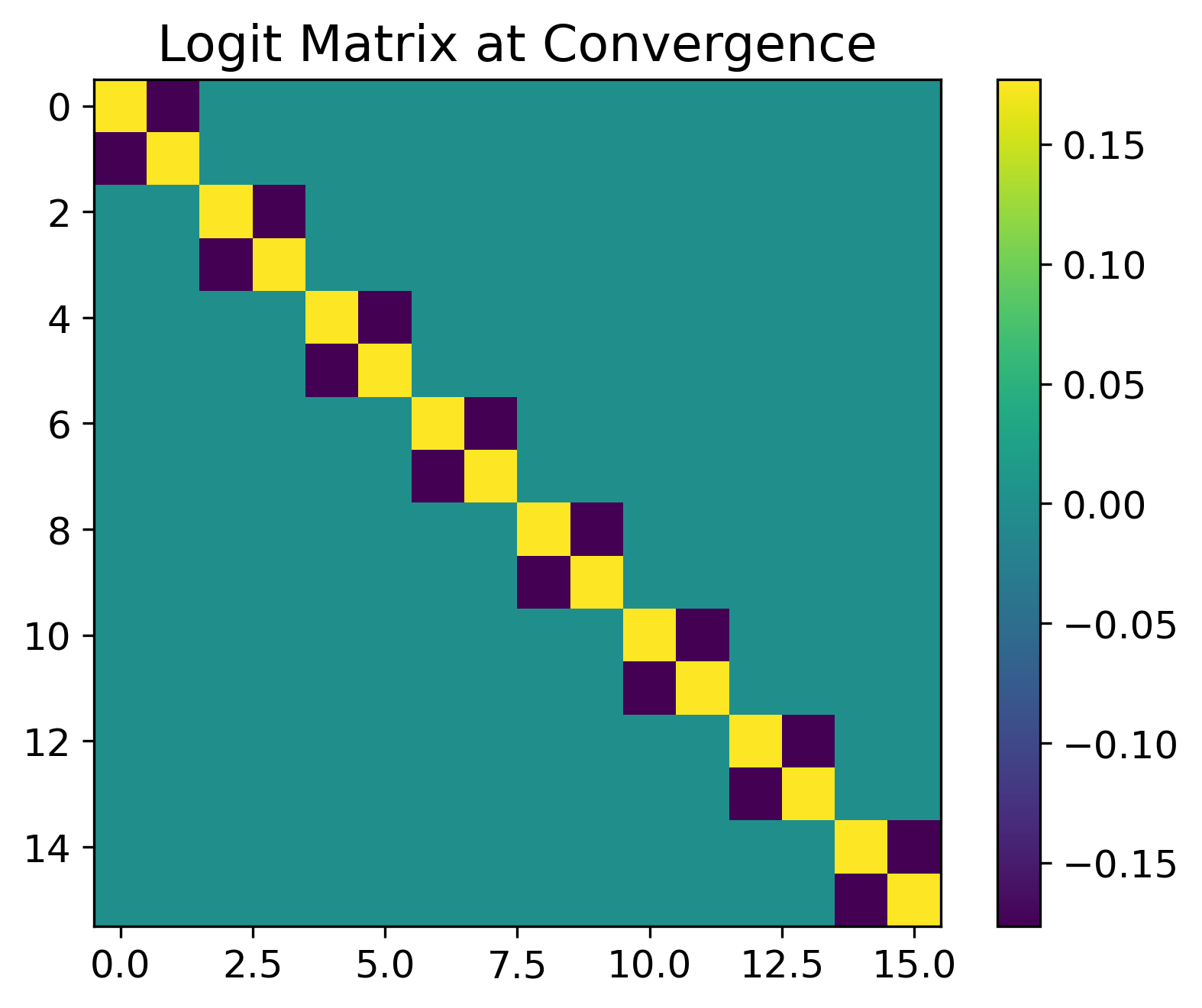}
    \end{subfigure}

    \caption{Evolution dynamics under Eq. \eqref{eq:hada_logit_dynam} from the initialization described in Eq. \eqref{eq:a_mixed} with $\gamma=0.2$, $\delta=0.1$ for parameters $L=2,K=16$. \textbf{Top Left}: Evolution of the logit norm $\| a\|_1=\sum_i a_i$. \textbf{Top Right}: Evolution of the normalized singular values $\hat{a}_i$. \textbf{Bottom Left}: Evolution of the KL-divergence $D_{\mathrm{KL}}(\frac{1}{K-1} 1_{K-1} \,\|\, \hat{a})$, defined in Eq. \eqref{eq:KL_div}. \textbf{Bottom Right}: mean logit matrix at the end of training, normalized to have unit Frobenius norm.}
    \label{fig:inter_CP_NC}
\end{figure}

\subsection{Hadamard Initialization is Representative of Random Initialization} \label{sec:had_vs_rand_experiments}

In Section \ref{sec:dynam_analysis}, we used Hadamard initialization to study the finite-time dynamics of the deep UFM. Although this structured initialization yields analytically tractable equations, the resulting phenomenology is also representative of small random initialization. The evolution of the singular values under small random initialization is shown in Figure \ref{fig:random_init_rich_get_richer}, which may be contrasted with the Hadamard case in Figure \ref{fig:rich_get_richer} of the main text. We present one illustrative example, although all experiments of this kind exhibited the same qualitative behavior. The two cases display remarkably similar phenomenology; most notably, both exhibit the ``rich-get-richer'' effect, which explains how low-rank bias emerges at the level of individual training trajectories.

This is a standard observation in the spectral initialization literature, where structured initializations often trace representative trajectories for a much broader class of initial conditions. This phenomenon was first studied by \citet{Saxe2013ExactST,Saxe2019} and \citet{Gidel2019} in the MSE-loss setting, and later extended to the CE-loss setting by \citet{Garrod2025DiagonalizingTS}. Across these works, the dynamics consistently exhibit a two-stage evolution: in the initial phase, the singular vectors rapidly align with the spectral basis; after this transient phase, the singular value trajectories closely follow those obtained under spectral initialization. Our results provide evidence of the same observation. Early in training, while the singular values remain approximately stable, the singular vectors continue to evolve; once this stage has passed, the dynamics closely resemble those under Hadamard initialization, exhibiting the key phenomenology responsible for low-rank bias in the model.

 \begin{figure*}
     \centering
     \begin{subfigure}{0.33\textwidth}
         \centering
         \includegraphics[width=\textwidth, trim=0 0 0 24pt, clip]{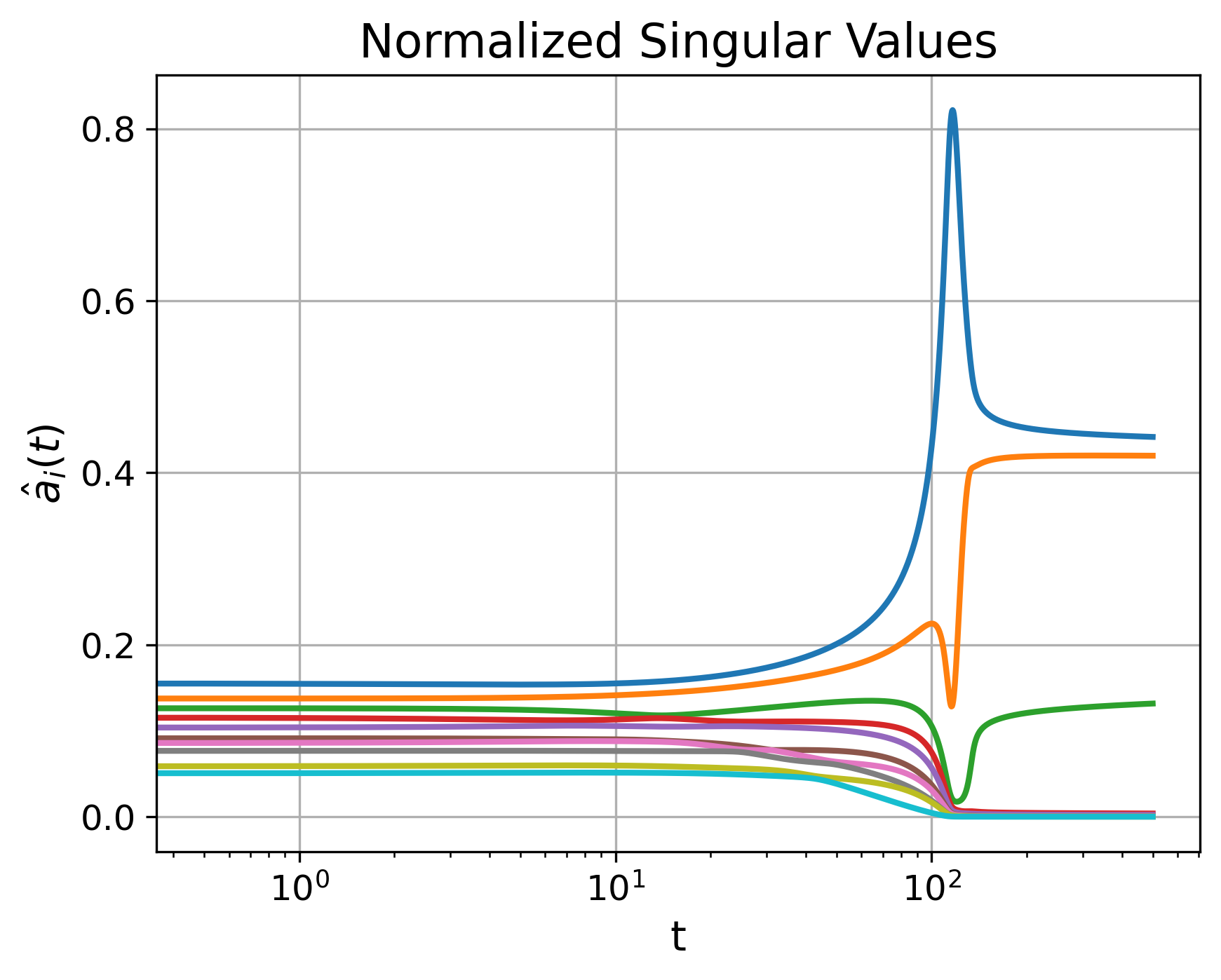}
     \end{subfigure}%
     \begin{subfigure}{0.33\textwidth}
         \centering
         \includegraphics[width=\textwidth, trim=0 0 0 24pt, clip]{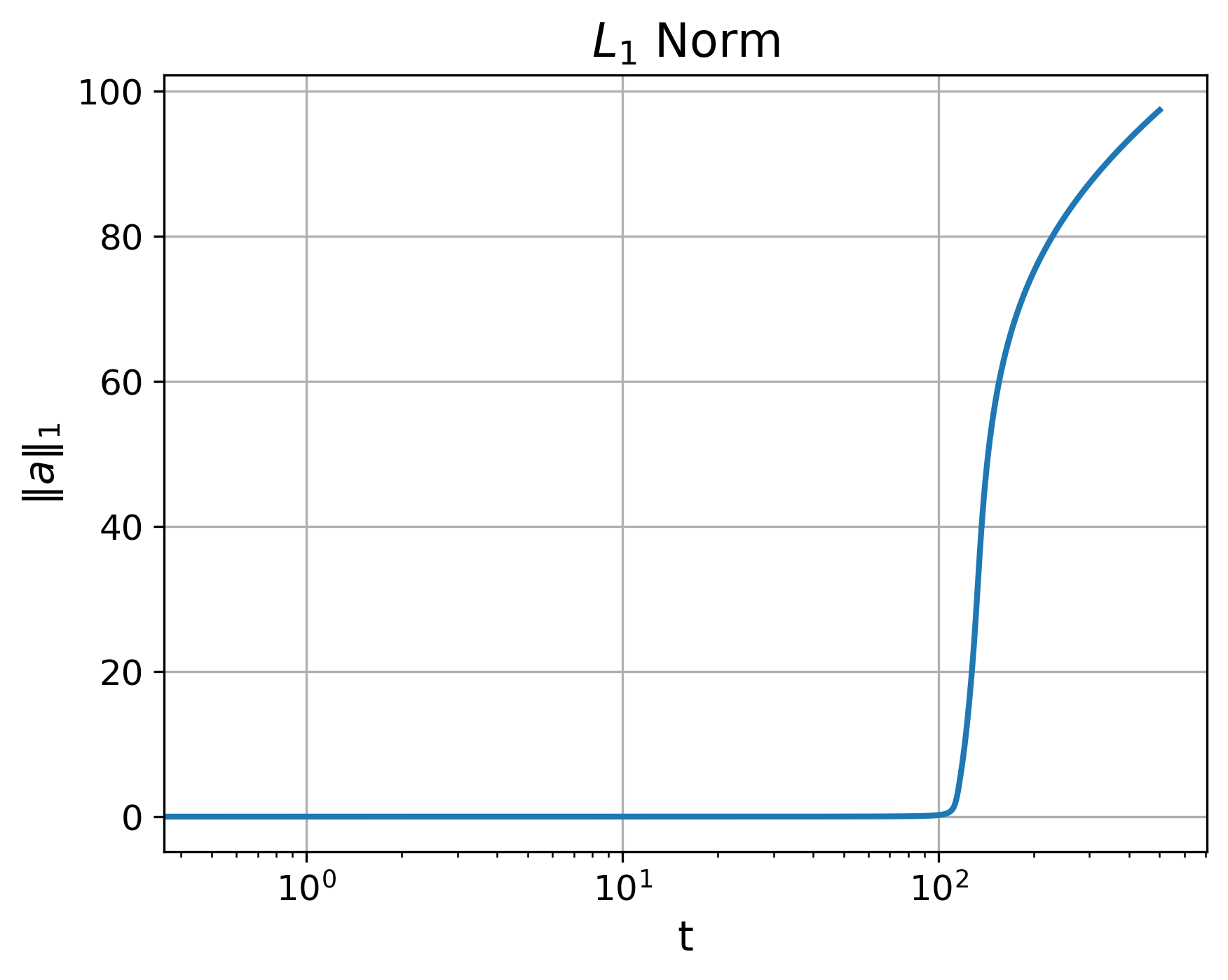}
     \end{subfigure}%
     \begin{subfigure}{0.33\textwidth}
         \centering
         \includegraphics[width=\textwidth, trim=0 0 0 24pt, clip]{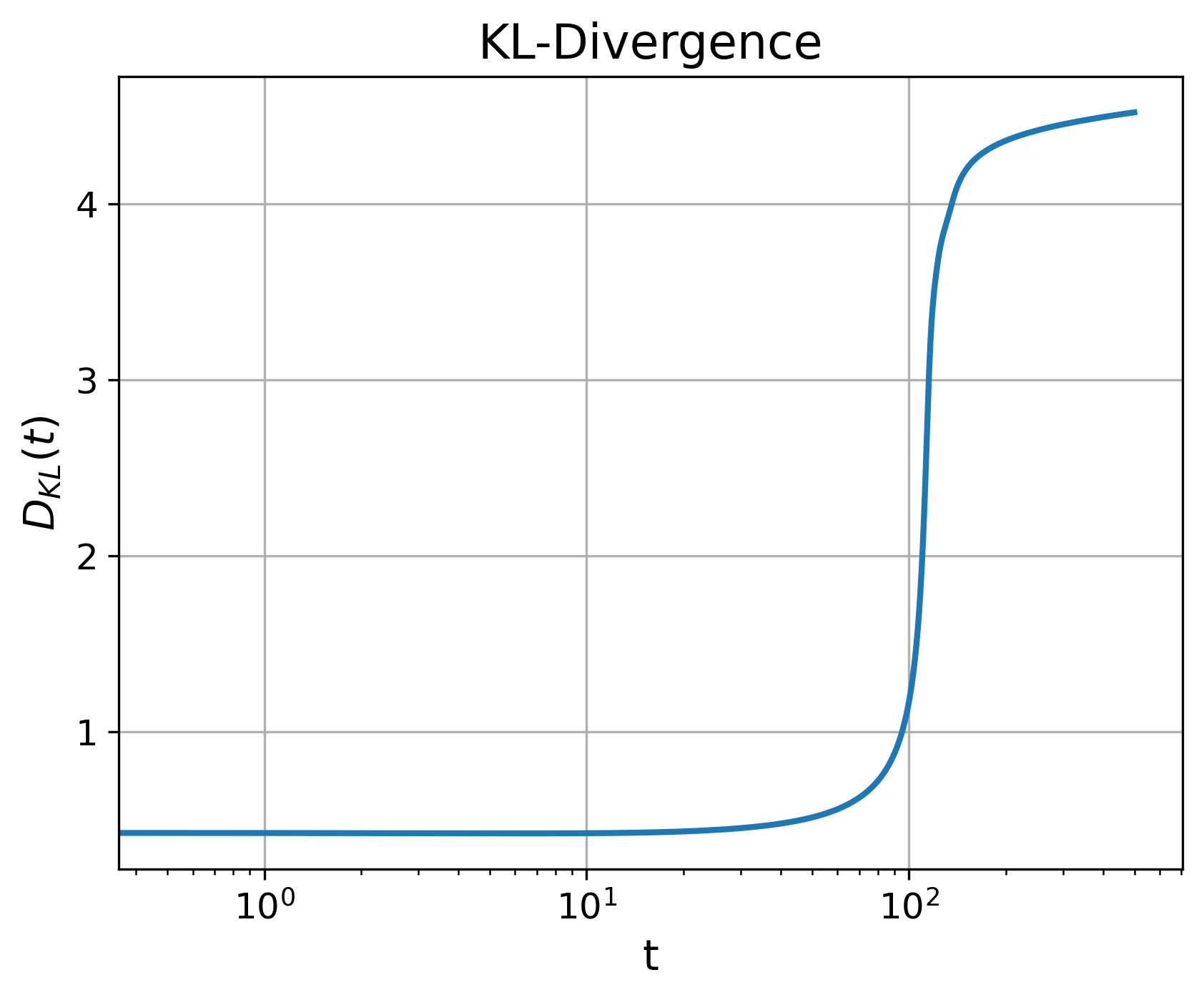}
     \end{subfigure}%
     \caption{Experiments under Random initialization for the deep UFM. \textbf{Left:} Evolution of the normalized mean logit singular values $\hat{a}_i$ under gradient flow on the loss function of Eq. \eqref{eq:general_deep_UFM} for depth $L=3$, width $d=100$, $K=10$ classes and $n=5$ data points per class. Network was initialized with Gaussian normal entries of standard deviation $0.03$. Note convergence to low-rank solution where many modes approach zero as also shown in the Hadamard case in Figure \ref{fig:rich_get_richer}. \textbf{Middle:} Corresponding evolution of the mean logit nuclear norm. By the time norm increases and system exits the linearized regime, the low-rank structure has already taken hold, just as in the Hadamard case. \textbf{Right}: The corresponding KL divergence(definition in Eq. \eqref{eq:KL_div}.).
     }
     \label{fig:random_init_rich_get_richer}
 \end{figure*}


\end{document}